\newtheorem{theorem}{Theorem}[section]
\newtheorem{lemma}[theorem]{Lemma}
\newtheorem{fact}[theorem]{Fact}
\newtheorem{proposition}[theorem]{Proposition}
\theoremstyle{definition}
\newtheorem{definition}[theorem]{Definition}
\theoremstyle{remark}
\crefname{fact}{fact}{facts}
\crefname{claim}{claim}{claims}
\newcommand{\1}[1]{\mathbbm{1}\left[#1\right]}     %
\newcommand{\A}{\mathcal{A}}                      %
\newcommand{\Atruthful}{\A^{\textsf{truthful}}}
\newcommand{\Bern}{\mathsf{Bernoulli}}            %
\newcommand{\RSCE}{\SSCE}                         %
\newcommand{\Binomial}{\mathsf{Binomial}}         %
\newcommand{\calP}{\mathcal{P}}
\newcommand{\CM}{\mathsf{CM}}
\newcommand{\D}{\mathcal{D}}                      %
\newcommand{\ECE}{\mathsf{ECE}}
\newcommand{\eps}{\epsilon}
\newcommand{\err}{\mathsf{err}}
\newcommand{\Ex}[2]{\operatorname*{\mathbb{E}}_{#1}\left[#2\right]}
\newcommand{\F}{\mathcal{F}}
\newcommand{\OPT}{\mathsf{OPT}}
\newcommand{\poly}{\operatorname*{poly}}          %
\newcommand{\polylog}{\operatorname*{polylog}}
\newcommand{\pr}[2]{\Pr_{#1}\left[#2\right]}
\newcommand{\pstar}{p^{\star}}
\newcommand{\rmd}{\mathrm{d}}
\newcommand{\sgn}{\mathrm{sgn}}                   %
\newcommand{\smCE}{\mathsf{smCE}}
\newcommand{\SSCE}{\mathsf{SSCE}}
\newcommand{\stepCE}{\mathsf{stepCE}}
\newcommand{\stepCEsub}{\stepCE^{\textsf{sub}}}
\newcommand{\UCal}{\mathsf{UCal}}
\newcommand{\UCalsub}{\UCal^{\textsf{sub}}}
\newcommand{\Unif}{\mathsf{Unif}}
\newcommand{\Var}{\mathrm{Var}}
\newcommand{\VCal}{\mathsf{VCal}}
\newcommand{\VCalsub}{\VCal^{\textsf{sub}}}
\renewcommand{\epsilon}{\varepsilon}
\renewcommand{\tilde}{\widetilde}
\newcommand{\norm}[1]{\left\lVert#1\right\rVert}
\newcommand{\curlybrackets}[1]{\left\{#1\right\}}
\newcommand{\bset}[1]{\curlybrackets{#1}}
\newcommand{\abs}[1]{\left|#1\right|}
\DeclareMathOperator*{\Exp}{\mathbb{E}}
\newcommand{\EE}[1]{\Exp\left[#1\right]}
\newcommand{\EEs}[2]{\Exp_{#1}\left[#2\right]}
\newcommand{\EEsc}[3]{\Exp_{#1}\left[#2 \mid #3\right]}
\newcommand{\EEc}[2]{\Exp\left[#1\left|#2\right.\right]}
\newcommand{\ceil}[1]{\left\lceil#1\right\rceil}
\newcommand{\floor}[1]{\left\lfloor#1\right\rfloor}
\newcommand{\integers}{\mathbb{Z}}
\newcommand{\asseq}{\coloneqq}
\newcommand{\loss}{\ell}
\newcommand{\cA}{\mathcal{A}}
\newcommand{\cD}{\mathcal{D}}
\newcommand{\cF}{\mathcal{F}}
\newcommand{\cP}{\mathcal{P}}
\newcommand{\bp}{\boldsymbol{p}}
\newcommand{\by}{{\mathbf{y}}}
\title{{Truthfulness of Decision-Theoretic Calibration Measures}}
\author[1]{Mingda Qiao}
\author[2]{Eric Zhao}
\affil[1]{Massachusetts Institute of Technology}
\affil[2]{University of California, Berkeley}
\date{}  %
\begin{document}
\def\arxiv{1}

\maketitle

\begin{abstract}%
Calibration measures quantify how much a forecaster's predictions violates calibration, which requires that forecasts are unbiased conditioning on the forecasted probabilities. Two important desiderata for a calibration measure are its \emph{decision-theoretic implications}~\cite{KLST23} (i.e., downstream decision-makers that best-respond to the forecasts are always no-regret) and its \emph{truthfulness}~\cite{HQYZ24} (i.e., a forecaster approximately minimizes error by always reporting the true probabilities). Existing measures satisfy at most one of the properties, but not both.

We introduce a new calibration measure termed \emph{subsampled step calibration}, $\mathsf{StepCE}^{\textsf{sub}}$, that is both decision-theoretic and truthful. In particular, on any product distribution, $\mathsf{StepCE}^{\textsf{sub}}$ is truthful up to an $O(1)$ factor whereas prior decision-theoretic calibration measures suffer from an $e^{-\Omega(T)}$-$\Omega(\sqrt{T})$ truthfulness gap.  Moreover, in any smoothed setting where the conditional probability of each event is perturbed by a noise of magnitude $c > 0$, $\mathsf{StepCE}^{\textsf{sub}}$ is truthful up to an $O(\sqrt{\log(1/c)})$ factor, while prior decision-theoretic measures have an $e^{-\Omega(T)}$-$\Omega(T^{1/3})$ truthfulness gap. We also prove a general impossibility result for truthful decision-theoretic forecasting: any complete and decision-theoretic calibration measure must be discontinuous and non-truthful in the non-smoothed setting.
    
\end{abstract}

\section{Introduction}
\label{sec:intro}
Probabilistic forecasts play a central role in data-driven decision-making across broad application domains including finance, meteorology, and medicine~\cite{murphy1984probability,degroot1983comparison,WM68,jiang2012calibrating,kompa2021second,van2015calibration,berestycki2002asymptotics,crowson2016assessing}.
One of the greatest forms of utility provided by high-quality forecasting is that it enables downstream agents to confidently base their decision-making on the forecasts without any other knowledge of the future.
For example, a weather station's forecast of the probability of rain in the evening provides utility by informing individuals that may be debating whether to bring an umbrella to dinner.
A related and widely studied requirement of forecasting is \emph{calibration}~\cite{Brier50,Dawid82,FV98}, which requires that predicted probabilities align with long-run empirical frequencies of events.
Calibration requires, for example, that it rains 70\% of the days where the weather station forecasts a 70\% chance of rain.
Importantly, any downstream agent that bases their rational decision-making on perfectly calibrated forecasts will not incur any positive regret~\cite{FH21}.

While perfect calibration is generally unachievable, there exist a number of calibration measures, such as expected calibration error (ECE)~\cite{FV98}, smooth calibration error (SCE)~\cite{KF08}, and U-Calibration (UCal)~\cite{KLST23}, which formalize a notion of approximate calibration and quantify deviations from perfect calibration. 
U-Calibration is a calibration measure of particular significance for decision-making applications as it is defined as the worst regret that a rational agent can incur by blindly following a forecaster~\cite{KLST23}.
In contrast, most other calibration measures, such as smooth calibration error, are  not ``decision-theoretic'' in that they do not provide guarantees for the regret of downstream agents.

Because the Bayes optimal classifier is perfectly calibrated, it seems natural to view calibration as incentivizing a forecaster to produce predictions that are consistent with their beliefs.
This is not the case: forecasters that know the future are incentivized by most calibration measures to produce \emph{non-truthful} predictions~\cite{FH21,QV21,HQYZ24}.
For example, a forecaster may publicly forecast a 50\% chance of rain even if they know for certain that there is a 100\% chance of rain.
To this end, \cite{HQYZ24} proposed a set of desiderata for a calibration measure that includes, among common sense requirements like completeness (correct predictions have low error) and soundness (incorrect predictions have high error), a notion of \emph{truthfulness}: a calibration measure should not penalize forecasters that know the future for predicting the true probabilities of events.
\cite{HQYZ24} formalizes truthfulness by defining a calibration measure having a truthfulness gap as the asymptotic separation between the expected value of a calibration measure on a truthful forecaster and on a strategic forecaster, when both know exactly the probability with which future events will occur.
They also show that there exists a simple modification of smooth calibration error that is sound, complete, and truthful: compute smooth calibration error over randomly \emph{subsampled} timesteps rather than the entire time horizon.
However, the resulting calibration measure, like smooth calibration error, is not decision-theoretic in that it provides no meaningful guarantees for downstream agents.

In contrast, the U-Calibration measure is decision-theoretic but not truthful.
This means that minimizing the expected worst-case regret of a downstream agent requires intentionally misrepresenting one's knowledge of future events.
This is in stark contrast to the maximization of downstream utilities, which always incentivizes an aligned forecaster to predict consistently with their beliefs~\cite{Brier50}.
The literature leaves unresolved whether there exists {any} decision-theoretic calibration measure that both provides no-regret guarantees for downstream agents and satisfies the usual desiderata of truthfulness, soundness and completeness.

There are reasons to believe such a ``best of all worlds'' calibration measure is not possible. First, the technique used by \cite{HQYZ24} to derive a truthful calibration measure from the Smooth Calibration Error measure~\cite{KF08} does not appear to suffice when applied to the U-Calibration measure. Second, the best responses of downstream agents are typically discontinuous in the forecasts they are given, and discontinuities are intimately connected with non-truthfulness~\cite{HQYZ24}.
This raises the questions:
\begin{quote}
    \emph{Is there a calibration measure that both provides decision-theoretic guarantees and incentivizes honest forecasting?}
    
    \emph{Is there a fundamental conflict between minimizing the regret of downstream agents and being truthful?}
\end{quote}

In this work, we show that the answer to both questions can be ``Yes'': there is a fundamental conflict between truthfulness and decision-theoretic guarantees---but not with smoothed analysis, under which we can design a calibration measure that is the best of all worlds.

\subsection{Overview of Results}
\paragraph{U-Calibration is far from truthful.}
We identify two sources of non-truthfulness in U-Calibration that the subsampling technique of \cite{HQYZ24} does not remedy. The first source arises from the discontinuity of U-Calibration error. The second source is an incentive for forecasters to hedge their predictions, i.e., exaggerate their uncertainty, and materially contributes to the non-truthfulness of U-Calibration even under smoothed analysis.

As a result, even with subsampling and smoothed analysis, U-Calibration exhibits a \emph{truthfulness gap}. We say a calibration measure has an $\alpha$-$\beta$ truthfulness gap, which we define formally in \eqref{eq:gap}, if it gives a truthful forecaster an error of $\ge \beta$ but the optimal strategic forecaster's error is below $\alpha$.

\newenvironment{customthm}[1][Theorem]{%
  \par\medskip %
  \noindent %
  \textbf{#1.} %
  \itshape %
}{%
  \par\medskip %
}

\begin{customthm}[Propositions \ref{prop:subsamptruth}, \ref{prop:simplesubsamptruth}, and \ref{prop:subsampsmooth}, Informally]
Both the U-Calibration measure $\UCal$ and its subsampled variant $\UCalsub$ suffer from an $O(\sqrt T)$-$\Omega(T)$ truthfulness gap due to the discontinuity of $\UCal$, and  an $e^{-\Omega(T)}$-$\Omega(\poly(T))$ truthfulness gap due to the hedging incentives of $\UCal$.
\end{customthm}
\noindent
We also prove a general impossibility result that suggests the non-truthfulness of U-Calibration is, to a degree, unavoidable. We later show that this result can be softened with smoothed analysis.
\begin{customthm}[Proposition \ref{prop:completedecisionlowerbound}, Informally]
    For any calibration measure, at least one of the following must be true:\!
    \begin{itemize}
        \item It is not complete: consistently forecasting a 50\% chance of heads given a sequence of $T$ fair coins does not yield an $O(\sqrt T)$ error.
        \item It is not decision-theoretic: it does not always upper bound the regret of downstream agents.
        \item It is not truthful: there is an $O(\sqrt {T})$-$\Omega(T)$ truthfulness gap.
    \end{itemize}
\end{customthm}

\paragraph{Step calibration error.}
We introduce \emph{step calibration}: a sound, complete, and decision-theoretic calibration measure that provides no-regret guarantees for all downstream agents.
Given a sequence of events $x_1, .\hfil.\hfil., x_T \in \{0, 1\}$ and predictions $p_1, .\hfil.\hfil., p_T \in [0, 1]$, the step calibration error is defined as
\[
\stepCE(x, p) \coloneqq \sup_{\alpha \in [0, 1]} \abs{\sum_{t=1}^T (x_t - p_t) \cdot \1{p_t \leq \alpha}}.
\]
Step calibration is equivalent, up to a constant factor, to a variant of V-Calibration that uses a slightly different baseline to disincentivize hedging behavior by penalizing excessively conservative probabilistic forecasts (\Cref{fact:equivfactor}).
In addition to step calibration being a complete and sound calibration measure, we also demonstrate an algorithm that achieves an $\tilde O(\sqrt T)$ step calibration error for the adversarial prediction setting.

\begin{customthm}[\Cref{prop:stepCE-complete-sound-decision-theoretic} and Theorem~\ref{thm:alg}, Informally]
    The step calibration error is sound, complete, and decision-theoretic. Moreover, there is a forecasting algorithm that guarantees an expected step calibration error of $O(\sqrt{T \log T})$, even if the events are adversarially and adaptively chosen.
\end{customthm}

\paragraph{Truthfulness under smoothed analysis.}
We show that---under smoothed analysis---the impossibility of simultaneously providing decision-theoretic guarantees and truthfulness largely disappears.
At a high-level, smoothing negates an adversary's ability to exploit the inherently discontinuous nature of a downstream agent's decision-making.
Importantly, we show that the ``subsampled'' variant of step calibration,
\[
\stepCEsub(x,p) \coloneqq \EEs{S \sim \Unif(2^{[T]})}{\sup_{\alpha \in [0, 1]} \abs{\sum_{t=1}^T (x_t - p_t) \cdot \1{p_t \leq \alpha \land t \in S}}},
\]
is truthful under smoothed analysis.

We say that a calibration measure is $(\alpha, \beta)$-truthful gap if, given any prior distribution over the sequence of events, the error incurred by the truthful forecaster is upper bounded by the optimal strategic forecaster's error, up to a factor of $\alpha$ and and additive term of $\beta$; see \Cref{eq:gap} for a formal definition.

\begin{customthm}[Theorem~\ref{thm:strongupperbound}, Informally]
    Subsampled step calibration error is $(O(\sqrt{\log(1/c)}), \polylog(T/c))$-truthful when each conditional probability is drawn from a distribution with density $\le 1/c$.
\end{customthm}

This $O(\sqrt{\log(1/c)})$ factor is tight for $\stepCEsub$. For non-smoothed product distributions, we can obtain a stronger truthfulness result showing that $\stepCEsub$ is $(O(1), 0)$-truthful (\Cref{prop:stepCEsub-truthful-product}).
$\stepCEsub$ also retains the desiderata of $\stepCE$, and is decision-theoretic, sound, and complete, and admits an $\tilde O(\sqrt T)$ algorithm in the adversarial setting.
This is because, as we show in \Cref{lemma:stepCE-vs-stepCEsub}, 
$\frac{1}{2}\stepCE(x, p) \le \stepCEsub(x, p) \le \frac{1}{2}\stepCE(x, p) + O(\sqrt{T})$.

\subsection{Related Work}
Most closely related to our work are the previous studies of sequential binary calibration with respect to various calibration measures. The seminal work of~\cite{FV98} showed that asymptotic calibration can be achieved even for adversarially chosen events. Implicit in their paper is a sublinear rate of $O(T^{2/3})$ on the ECE incurred by the forecaster; a more detailed proof was given by~\cite{Hart22}. On the lower bound side, an $\Omega(\sqrt{T})$ bound is trivial and the first non-trivial lower bound of $\Omega(T^{0.528})$ was shown by~\cite{QV21}. A recent breakthrough of~\cite{DDFGKO24} improved the upper bound to $O(T^{2/3-\eps})$ for some constant $\eps > 0$, and gave the best known lower bound of $\Omega(T^{0.54389})$.

The analogous question has been studied for other calibration measures, including smooth calibration~\cite{KF08,QZ24}, U-Calibration~\cite{KLST23}, distance from calibration~\cite{QZ24,ACRS24}, calibration decision loss~\cite{HW24}, and subsampled smooth calibration~\cite{HQYZ24}. These calibration measures relax the ECE in different ways, so that the forecaster can achieve a faster rate of $\tilde O(\sqrt{T})$, circumventing the super-$\sqrt{T}$ lower bounds for the ECE.

The systematic study of calibration measures was initiated by~\cite{BGHN23}, who focused on the offline setting and proposed the \emph{distance from calibration} as a ground truth. Their work identified calibration measures that are continuous and consistent (i.e., being polynomially related to the distance from calibration). Subsequent work studied calibration measures that satisfy other natural axioms, including being decision-theoretic~\cite{KLST23,NRRX23,RS24,HW24} and being truthful~\cite{HQYZ24}. Remarkably, the distance from calibration, while being a natural measure, is neither decision-theoretic nor truthful in the sequential setting.

The truthfulness of calibration measures is also intimately connected to the study of multicalibration \cite{HKRR18}, which designs calibration measures that incentivize forecasters to ``truthfully'' predict the true probabilities of each feature by evaluating calibration error over many feature subsets---a practice similar to \cite{HQYZ24}'s use of subsampling to enforce truthfulness and algorithmically linked to Blackwell approachability~\cite{blackwell_analog_1956,Foster99,Hart22,haghtalab2024unifying}.
Truthfulness can also be viewed as enforcing an online, multi-timestep notion of outcome indistinguishability \cite{dwork2021outcome}.

Smoothed analysis was introduced by Spielman and Teng~\cite{ST04,spielman2009smoothed} for analyzing the ``typical'' runtime of the simplex method. \cite{KST09} introduced a smoothed analysis model for supervised learning, in which the data distribution is a product distribution over $\{0, 1\}^n$ with marginal probabilities randomly perturbed. This circumvents hard instances that are specific for the uniform distribution but easily learnable under the perturbed distribution. Subsequent work studied tensor decomposition~\cite{BCMV14} and decision tree learning~\cite{BDM20,BLQT21} in similar setups.

More closely related to our work are the smoothed analysis for online learning introduced by~\cite{RST11}. A recent series of work extends this setting to adaptive adversaries, showing that online learning against a smoothed adversary is not much harder than learning in the offline (batch) setup~\cite{HRS20,HRS24,BDGR22,HHSY22,BS22,BP23,BSR23,BST23}. In these models, the smoothed analysis limits the adversary's ability of concentrating the probability mass at a ``hard region'' in the instance space. As a result, the learner may circumvent the canonical hard instance of threshold functions, which is easily learnable in the offline setting, and cannot be learned in an online setting without smoothing. Our work applies the smoothed analysis to avoid the large truthfulness gap in the non-smoothed setting following the same intuition.

\section{Preliminaries}
\label{sec:prelims}
\subsection{Sequential Prediction and Calibration}
\paragraph{Sequential prediction.}
In the basic (non-smoothed) prediction setup, a sequence of events $x \in \{0, 1\}^T$ is sampled from a distribution $\cD$.
At each time step $t \in [T]$, the forecaster makes a prediction $p_t \in [0, 1]$, after which $x_t$ is revealed.
Formally, a deterministic forecaster is a function $\cA: \bigcup_{t=1}^{T}\{0,1\}^{t-1} \to [0, 1]$, where $\cA(b_1, b_2, \ldots, b_{t-1})$ specifies the forecaster's prediction at step $t$ upon observing $x_{1:(t-1)} = b_{1:(t-1)}$.
We will write $(x, p) \sim (\D, \A)$ to denote sampling events $x \in \{0, 1\}^T$ and predictions $p \in [0, 1]^T$ from the joint distribution naturally induced by distribution $\D$ and forecaster $\A$, i.e., by sampling $x \sim \cD$ and setting $p_t = \cA(x_1, x_2, \ldots, x_{t-1})$ for each $t \in [T]$.
We could have defined the forecaster to be randomized or a function of both the outcomes $x_{1:(t-1)}$ and its own predictions $p_{1:(t-1)}$, but restricting to deterministic functions of the outcomes $x_{1:(t-1)}$ comes without loss of generality.

\paragraph{The smoothed setting.} The prediction setting above can be equivalently viewed as the nature specifying the conditional probability of $x_t = 1$ given $x_1, x_2, \ldots, x_{t-1}$. We will also consider a \emph{smoothed} setting, where each conditional probability is perturbed by a noise of magnitude $c > 0$. Formally, the nature specifies a mapping $\cP: \bigcup_{t=1}^{T}\{0, 1\}^{t-1} \mapsto \Delta_c$, where $\Delta_c$ is the family of distributions over $[0, 1]$ with densities bounded by $1/c$ everywhere. At each step $t$, the nature realizes $\pstar_t \sim \cP(x_1, x_2, \ldots, x_{t-1})$ and credibly reveals the value of $\pstar_t$ to the forecaster.\footnote{In practice, this corresponds to the forecaster acquiring certain side information about $x_t$, thus changing their belief of $\pstar_t = \pr{}{x_t = 1 \mid \text{observations}}$. The smoothness assumption would then correspond to assumptions on the side information, which might ensure that the distribution of $\pstar_t$ is not too spiky.} The forecaster predicts $p_t$, and the event $x_t$ is sampled from $\Bern(\pstar_t)$ and revealed. Formally, the forecaster's prediction $p_t$ is a function of both $x_{1:(t-1)}$ and $\pstar_t$, i.e., $\A: \bigcup_{t=1}^{T}(\{0, 1\}^{t-1} \times [0, 1]) \to [0, 1]$.

Note that the non-smoothed setting---in which the nature specifies a fixed distribution $\cD$ over $\{0, 1\}^T$---can be viewed as a ``$0$-smoothed'' setting where each $\cP(b_1, b_2, \ldots, b_{t-1})$ is the degenerate distribution at value $\pr{x \sim \cD}{x_t = 1 \mid x_{1:(t-1)} = b_{1:(t-1)}}$. Furthermore, as in the non-smoothed setting, the nature $\cP$ and the forecaster $\A$ naturally induce a joint distribution over the triple $(x, \pstar, p)$, denoted by sampling $(x, \pstar, p) \sim (\cP, \A)$ (or a subset thereof) in the rest of the paper.

\paragraph{Calibration measures.}
A calibration measure $\CM_T: \{0, 1\}^T \times [0, 1]^T \to [0, T]$ quantifies the quality of a forecaster's prediction. We omit the subscript $T$ when it is clear from context. The expected penalty incurred by forecaster $\A$ on distribution $\D$ is defined as $\err_{\CM}(\D, \A) \coloneq \Ex{(x, p)\sim (\D, \A)}{\CM(x, p)}$. For the smoothed setting, we analogously define $\err_{\CM}(\cP, \A) \coloneqq \Ex{(x, p) \sim (\cP, \A)}{\CM(x, p)}$.

One would naturally expect a calibration measure to be both \emph{complete} (accurate predictions lead to a small penalty) and \emph{sound} (inaccurate predictions receive a large penalty). We adopt a variant of the definition in~\cite{HQYZ24}. In the following, $\vec{1}_T$ denotes the $T$-dimensional all-$1$ vector.
\begin{definition}[Completeness and soundness~\cite{HQYZ24}]\label{def:complete-and-sound}
    A calibration measure $\CM$ is complete if: (1) For any $x \in \{0, 1\}^T$, predicting  the events $x$ gives $\CM_T(x, x) = 0$; (2) For any $\alpha \in [0, 1]$, predicting the constant probability $\alpha$ gives
    $\Ex{x_1, \ldots, x_T \sim \Bern(\alpha)}{\CM_T(x, \alpha\cdot\vec{1}_T)} = o_{\alpha}(T)$; (3) For any $p \in [0,1]^T$ that is perfectly calibrated with respect to $x \in \{0, 1\}^T$, $\CM_T(x, p) = o(T)$.
    The calibration measure is sound if: (1) For any $x \in \{0, 1\}^T$, $\CM_T(x, \vec{1}_T - x) = \Omega(T)$; (2) For any $\alpha, \beta \in [0, 1]$ such that $\alpha \ne \beta$, $\Ex{x_1, \ldots, x_T \sim \Bern(\alpha)}{\CM_T(x, \beta\cdot\vec{1}_T)} = \Omega_{\alpha, \beta}(T)$. 
    Here, $o_{\alpha}(\cdot)$ and $\Omega_{\alpha, \beta}(\cdot)$ hide constant factors that depend on the subscripted parameters.
\end{definition}
We strengthened the definition of completeness in~\cite{HQYZ24} by adding a third constraint---perfectly calibrated predictions should receive a low (sublinear) penalty. 
To the best of our knowledge, all calibration measures satisfy this condition, with most satisfying this condition with $\CM(x, p) = 0$ while the SSCE introduced by~\cite{HQYZ24} satisfies this with $\CM(x, p) = O(\sqrt{T})$.

\subsection{Decision-Theoretic Calibration}
Consider a decision-making setting where an agent acts on the basis of the forecaster's predictions.
Formally, consider a repeated game where, at each round $t$, an agent chooses an action $a_t \in \cA$ informed by the forecaster's prediction $p_t$. The agent's utility at time $t$ is a function $u: \cA \times \{0, 1\} \to [-1, 1]$ of its action $a_t$ and the event $x_t$.
In this setup, the agent assumes that $p_t$ is an accurate forecast of the probability that $x_t$ occurs and thus selects $a_t$ to maximize $\mathbb{E}_{x \sim \Bern(p_t)}[u(a_t, x)]$.
One benefit of calibrated forecasts is that agents can make decisions according to the forecasts and be no-regret \cite{FH21}.
That is, the agent's expected cumulative utility $\sum_t u(a_t, x_t)$ when given calibrated forecasts $p_{1:T}$ will never be worse than when given a \emph{base rate forecaster} which predicts $p_t = \frac{1}{T} \sum_{t=1}^T x_t$ for all $t$.

Providing forecasts with no-regret guarantees for agents with any utility is equivalent to providing forecasts that satisfy the no-regret property with respect to any (piecewise linear) proper scoring rule \cite{KLST23}.
A (bounded) scoring rule is a function $S:\{0, 1\}\times[0,1] \to [-1, 1]$, where $S(x, p)$ denotes the loss incurred by the forecaster, when it predicts value $p \in [0, 1]$ on an outcome that turns out to be $x \in \{0, 1\}$. A scoring rule $S$ is \emph{proper} if, for any $\alpha \in [0, 1]$, the function $p \mapsto \Ex{x \sim \Bern(\alpha)}{S(x, p)}$ is minimized at $p = \alpha$, i.e., predicting the true probability minimizes the expected loss. The U-Calibration error of~\cite{KLST23} quantifies the mis-calibration using the worst possible external regret:
\begin{equation}\label{eq:U-calibration}
    \UCal(x, p) \coloneqq \sup_{S}\left[\sum_{t=1}^{T}S(x_t, p_t) - \inf_{\beta \in [0, 1]}\sum_{t=1}^{T}S(x_t, \beta)\right],
\end{equation}
where the supremum is taken over all proper scoring rules $S$.

More generally, we refer to any calibration measure $\CM$ as being \emph{decision-theoretic} if, for all events $x$ and predictions $p$, the calibration measure is lower bounded by U-Calibration up to a universal constant factor: $\CM(x, p) \geq \Omega(1) \cdot \UCal(x, p)$. A decision-theoretic calibration measure upper bounds the external regret of any agent that acts on the forecaster's predictions.

\paragraph{V-Calibration.}
We will work with a calibration measure known as V-Calibration \cite{KLST23} that is more technically convenient.
V-Calibration is a modification of U-Calibration obtained from limiting the supremum in its definition (\Cref{eq:U-calibration}) to a narrow class of scoring rules of the form
\begin{equation}
    \label{eq:vcaleq}
    S_\alpha(x, p) \coloneq (\alpha - x)\cdot \mathrm{sgn}(p - \alpha)
\end{equation}
for any $\alpha \in [0, 1]$.
Formally, the V-Calibration error is defined as
\begin{equation}
\label{eq:V-calibration}
    \VCal(x, p) \coloneqq \sup_{\alpha, \beta \in [0, 1]}\left[\sum_{t=1}^{T}S_\alpha(x_t, p_t) - \sum_{t=1}^{T}S_\alpha(x_t, \beta)\right].
\end{equation}
Despite its simpler form, V-Calibration error is equivalent to U-Calibration error up to constant factor:
\begin{lemma}[Theorem~8 of~\cite{KLST23}]\label{lemma:UCal-vs-VCal}
For any $x \in \{0, 1\}^T$ and $p \in [0, 1]^T$, it holds that $$\frac{1}{2}\UCal(x, p) \le \VCal(x, p) \le \UCal(x, p).$$
\end{lemma}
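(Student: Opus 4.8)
The plan is to prove the two inequalities separately; the right-hand inequality is immediate and the left-hand one is where the work lies.

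For $\VCal(x,p)\le\UCal(x,p)$: fix any $\alpha$ and note that $S_\alpha$ from \eqref{eq:vcaleq} is itself a bounded proper scoring rule. Boundedness is clear since $|S_\alpha(x,p)|\le|\alpha-x|\le 1$, and properness follows because $\Ex{z\sim\Bern(\gamma)}{S_\alpha(z,p)}=(\alpha-\gamma)\sgn(p-\alpha)$ is minimized over $p$ at $p=\gamma$ for every $\gamma\in[0,1]$. Hence $S_\alpha$ is an admissible choice of $S$ in the supremum defining $\UCal$, and for any fixed $\alpha,\beta$ we have $\sum_t S_\alpha(x_t,p_t)-\sum_t S_\alpha(x_t,\beta)\le \sum_t S_\alpha(x_t,p_t)-\inf_{\beta'}\sum_t S_\alpha(x_t,\beta')\le \UCal(x,p)$. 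Taking the supremum over $(\alpha,\beta)$ gives the claim.

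For $\UCal(x,p)\le 2\VCal(x,p)$, the plan is to write an arbitrary bounded proper scoring rule as a nonnegative mixture of the elementary rules $S_\alpha$. By the classical representation theorem for proper scoring rules (Schervish), every bounded proper $S$ admits a decomposition $S(x,p)=\int_{(0,1)}S_\alpha(x,p)\,d\lambda(\alpha)+h(x)$ for some nonnegative measure $\lambda$ on $(0,1)$ and some $h$ depending only on $x$. The term $h(x)$ is independent of the prediction, so it appears identically in $\sum_t S(x_t,p_t)$ and in $\sum_t S(x_t,\beta)$ and therefore cancels in the regret. Using this cancellation, followed by the inequality $\inf_\beta\int g(\alpha,\beta)\,d\lambda(\alpha)\ge\int \inf_\beta g(\alpha,\beta)\,d\lambda(\alpha)$ with $g(\alpha,\beta)=\sum_t S_\alpha(x_t,\beta)$, we get
\[
\sum_t S(x_t,p_t)-\inf_\beta\sum_t S(x_t,\beta)\ \le\ \int_{(0,1)}\Big[\sum_t S_\alpha(x_t,p_t)-\inf_\beta\sum_t S_\alpha(x_t,\beta)\Big]d\lambda(\alpha)\ \le\ \lambda\big((0,1)\big)\cdot\VCal(x,p),
\]
where the last step uses that for each fixed $\alpha$ the bracketed quantity equals $\sup_\beta[\sum_t S_\alpha(x_t,p_t)-\sum_t S_\alpha(x_t,\beta)]\le\VCal(x,p)$. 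It remains to bound $\lambda((0,1))\le 2$, which is where $S\in[-1,1]$ is used: evaluating the representation at the endpoints gives $S(1,0)-S(1,1)=\int_{(0,1)}2(1-\alpha)\,d\lambda(\alpha)\le 2$ and $S(0,1)-S(0,0)=\int_{(0,1)}2\alpha\,d\lambda(\alpha)\le 2$, and adding these yields $\int_{(0,1)}d\lambda(\alpha)\le 2$. Taking the supremum over all bounded proper $S$ then gives $\UCal(x,p)\le 2\,\VCal(x,p)$.

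I expect the main obstacle to be making the representation step fully rigorous for the class of \emph{all} bounded proper scoring rules, which may be neither strictly proper nor continuous; one must argue that the mixture lives on the open interval $(0,1)$ with a residual term depending only on $x$, and handle the boundary conventions (the values of $S_\alpha$ at $p=\alpha$ and at the endpoints depend on the choice of $\sgn(0)$). Once the representation and its total-mass bound are in hand, the remaining interchange-of-infimum-and-integral argument and the endpoint evaluation are routine.
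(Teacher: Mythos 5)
The paper itself contains no proof of this lemma---it is imported verbatim as Theorem~8 of \cite{KLST23}---and your argument is essentially the proof given in that reference: the elementary rules $S_\alpha$ are themselves bounded proper scoring rules (yielding $\VCal(x,p)\le\UCal(x,p)$), and every proper scoring rule taking values in $[-1,1]$ is, up to outcome-only terms, a nonnegative (Schervish-type) mixture of the $S_\alpha$ with total mass at most $2$, which yields $\UCal(x,p)\le 2\VCal(x,p)$ exactly as you compute. The only caveat is the one you already flag: for possibly discontinuous proper rules the representation with a fixed $\sgn(0)$ convention fails at kink points (atoms of $\lambda$), so one must split atom masses or pass to one-sided limits over the finitely many relevant prediction values; this is a standard patch and is handled in the cited source, so your route is the intended one rather than a genuinely different argument.
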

We can rewrite the V-Calibration measure into an alternative form without the scoring rules. We prove the following proposition in \Cref{sec:altform-proof}.
\begin{restatable}{proposition}{altform}
\label{prop:altform}
The V-Calibration error takes the alternative form
\[
    \VCal(x, p) = 2 \cdot \sup_{\alpha \in [0, 1]}\max\{X_-^{(\alpha)} - \alpha N_-^{(\alpha)}, \alpha N_+^{(\alpha)} - X_+^{(\alpha)}\},
\]
where $N_{-}^{(\alpha)} \coloneq \sum_{t=1}^{T}\1{p_t < \alpha}$, $N_{+}^{(\alpha)} \coloneq \sum_{t=1}^{T}\1{p_t > \alpha}$, $X_{-}^{(\alpha)}\coloneq \sum_{t=1}^{T}x_t\cdot\1{p_t < \alpha}$,  and $X_{+}^{(\alpha)}\coloneq \sum_{t=1}^{T}x_t\cdot\1{p_t > \alpha}$.
\end{restatable}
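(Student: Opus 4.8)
The plan is to compute, for each fixed threshold $\alpha \in [0,1]$, the bracketed quantity in \eqref{eq:V-calibration} in closed form, and then take the supremum over $\alpha$. Fix $\alpha$ and split the timesteps into the three groups $\{p_t < \alpha\}$, $\{p_t = \alpha\}$, $\{p_t > \alpha\}$. Because $S_\alpha(x_t, p_t) = (\alpha - x_t)\sgn(p_t - \alpha)$ is $0$ on the middle group, we get $\sum_t S_\alpha(x_t, p_t) = a(\alpha) + b(\alpha)$ where $a(\alpha) \coloneqq X_-^{(\alpha)} - \alpha N_-^{(\alpha)}$ and $b(\alpha) \coloneqq \alpha N_+^{(\alpha)} - X_+^{(\alpha)}$. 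For the competitor term, $\sum_t S_\alpha(x_t, \beta) = \sgn(\beta - \alpha) \cdot (\alpha T - X)$ with $X \coloneqq \sum_t x_t$, so picking $\beta$ on the appropriate side of $\alpha$ (or $\beta = \alpha$ when $\alpha T = X$) gives $\inf_\beta \sum_t S_\alpha(x_t, \beta) = -\abs{\alpha T - X}$. Hence the bracket equals $a(\alpha) + b(\alpha) + \abs{\alpha T - X}$.

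Next I would rewrite $\alpha T - X$ by splitting $T$ and $X$ over the same three groups, which yields $\alpha T - X = b(\alpha) - a(\alpha) + d(\alpha)$, where $d(\alpha) \coloneqq \alpha N_0^{(\alpha)} - X_0^{(\alpha)}$ collects the contribution of the tied timesteps $\{t : p_t = \alpha\}$ (with $N_0^{(\alpha)}, X_0^{(\alpha)}$ defined analogously to $N_\pm^{(\alpha)}, X_\pm^{(\alpha)}$). Crucially $d(\alpha) = 0$ for every $\alpha \notin \{p_1, \dots, p_T\}$, and for such $\alpha$ the bracket simplifies to $a(\alpha) + b(\alpha) + \abs{b(\alpha) - a(\alpha)} = 2\max\{a(\alpha), b(\alpha)\}$ --- already the claimed expression. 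Since $\VCal(x, p) = \sup_\alpha (\text{bracket})$, it remains to show that the finitely many tied thresholds alter neither supremum.

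To handle the ties I would pass to one-sided limits through non-tied values. A short computation shows that, writing $a_0 = a(\alpha_0), b_0 = b(\alpha_0), d_0 = d(\alpha_0)$ at a tied $\alpha_0$, the pair $(a(\alpha'), b(\alpha'))$ tends to $(a_0 - d_0,\, b_0)$ as $\alpha' \to \alpha_0^+$ and to $(a_0,\, b_0 + d_0)$ as $\alpha' \to \alpha_0^-$ (with the evident one-sided modification at $\alpha_0 \in \{0,1\}$). Consequently $2b_0$ and $2a_0$ are each dominated by limits of non-tied bracket values $2\max\{a(\alpha'), b(\alpha')\}$, which gives $\VCal(x,p) \ge 2\max\{a_0, b_0\}$ for every tied $\alpha_0$, hence $\VCal(x,p) \ge 2\sup_\alpha \max\{a(\alpha), b(\alpha)\}$. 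For the reverse inequality, the bracket at a tied $\alpha_0$ evaluates to $\max\{2b_0 + d_0,\, 2a_0 - d_0\}$, and since all four of $a_0, a_0 - d_0, b_0, b_0 + d_0$ are bounded above by $\sup_\alpha \max\{a(\alpha), b(\alpha)\}$, a one-line comparison on the sign of $d_0$ bounds this bracket by $2\sup_\alpha \max\{a(\alpha), b(\alpha)\}$. Together with the non-tied case this proves $\VCal(x,p) = 2\sup_{\alpha \in [0,1]} \max\{a(\alpha), b(\alpha)\}$.

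Everything up through the closed-form per-$\alpha$ bracket is routine algebra; the only delicate point is the last paragraph, the tied timesteps $p_t = \alpha$. These are exactly the discontinuities of the strict-inequality counters $N_\pm^{(\alpha)}, X_\pm^{(\alpha)}$, and the bracket can genuinely exceed $2\max\{a(\alpha),b(\alpha)\}$ there, so one must verify --- via the one-sided limits above --- that any such threshold is always ``absorbed'' by a neighboring non-tied one.
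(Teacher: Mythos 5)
Your proposal is correct and follows essentially the same route as the paper's proof: evaluate the bracket in closed form for thresholds $\alpha$ avoiding the prediction values, where it equals $2\max\{X_-^{(\alpha)} - \alpha N_-^{(\alpha)},\, \alpha N_+^{(\alpha)} - X_+^{(\alpha)}\}$, and then argue via one-sided limits that the finitely many tied thresholds change neither supremum. The only cosmetic difference is that you compute $\inf_{\beta}\sum_t S_\alpha(x_t,\beta) = -\left|\alpha T - \sum_t x_t\right|$ directly, so $\mu = \tfrac1T\sum_t x_t$ plays no special role and you get the exact value $\max\{2b_0+d_0,\, 2a_0-d_0\}$ at tied points, whereas the paper substitutes the proper-scoring-rule minimizer $\beta=\mu$ and handles the resulting discontinuities at $\{p_1,\dots,p_T,\mu\}$ by an average-of-one-sided-limits computation (your endpoint cases $\alpha_0\in\{0,1\}$, which you flag but leave implicit, do check out since $a(0)=b(1)=0$ and $d_0$ has the favorable sign there).
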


\subsection{Truthfulness}
Calibration measures are often seen as measuring how close a forecaster's predictions are to the true probabilities that events occur.
However, even if one knows the exact probability that an event $x_t$ will occur, calibration does not necessarily incentivize one to predict truthfully~\cite{HQYZ24}.
Consider the truthful forecaster for the non-smoothed setting specified by $\cD \in \Delta(\{0, 1\}^T)$,
\[\Atruthful(\D)(b_1, b_2, \ldots, b_{t-1}) \coloneq \pr{x \sim \D}{x_t = 1\ |\ x_{1:(t-1)} = b_{1:(t-1)}},\]
which can be argued to be the only forecaster that makes the ``right'' predictions on distribution $\D$.
Given a reasonable calibration measure $\CM$, one might expect the error of the truthful forecaster to be close to the optimal error $\OPT_{\CM}(\D) \coloneq \inf_{\A}\err_{\CM}(\D, \A)$, where $\A$ ranges over all deterministic forecasters.
This property is known as \emph{truthfulness} \cite{HQYZ24}, where we say that a calibration measure $\CM$ is $(\alpha, \beta)$-truthful if, for every $\D \in \Delta(\{0, 1\}^T)$,
\begin{equation}\label{eq:gap}\err_{\CM}(\D, \Atruthful(\D)) \le \alpha \cdot \OPT_\CM(\D) + \beta.\end{equation}
Conversely, $\CM$ is said to have an $\alpha$-$\beta$ truthfulness gap if, for some distribution $\D$, $\OPT_\CM(\D) \le \alpha$ and $\err_{\CM}(\D, \Atruthful(\D))\ge \beta$.

In smoothed settings where the conditional probabilities are sampled according to $\cP$ and revealed to the forecaster, the truthful forecaster, $\Atruthful$, simply maps $(b_{1:(t-1)}, \pstar_t)$ to $\pstar_t$ for any $t \in [T]$ and $(b_{1:(t-1)}, \pstar_t) \in \{0, 1\}^{t-1}\times[0,1]$. We define $\OPT_{\CM}(\cP) \coloneqq \inf_{\A}\err_{\CM}(\cP, \A)$, and a $\CM$ as being $(\alpha, \beta)$-truthful if $\err_{\CM}(\cP, \Atruthful) \le \alpha \cdot \OPT_\CM(\cP) + \beta$. We similarly define the $\alpha$-$\beta$ truthfulness gap for smoothed settings.

U-Calibration is known to not be a truthful calibration measure~\cite{HQYZ24}. This might be counterintuitive since, by definition, truthful forecasting minimizes the expected penalty for each \emph{fixed} proper scoring rule. However, after taking a supremum over all proper scoring rules, the truthful forecaster ceases to be optimal for the resulting measure.
\begin{proposition}[Proposition A.3 of \cite{HQYZ24}]\label{prop:HQYZ24-UCal}
    The U-Calibration error has an $O(1)$-$\Omega(\sqrt T)$ truthfulness gap.
\end{proposition}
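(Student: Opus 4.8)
The plan is to produce a single distribution $\D$ witnessing both halves of the gap: a forecaster with $\UCal$ error exactly $0$ (so $\OPT_\UCal(\D)=0$), and $\err_\UCal(\D,\Atruthful(\D))=\Omega(\sqrt T)$. The first half rests on the observation that if the empirical frequency $\bar x\coloneqq\frac1T\sum_t x_t$ is the \emph{same constant} for every $x$ in the support of $\D$, then the forecaster playing $p_t=\bar x$ at every step has $\UCal=0$: for any proper scoring rule $S$, $\beta\mapsto\sum_t S(x_t,\beta)=T\cdot\Ex{y\sim\Bern(\bar x)}{S(y,\beta)}$ is minimized at $\beta=\bar x$, so this forecaster ties the best-in-hindsight constant predictor simultaneously for all $S$. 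The second half uses \Cref{lemma:UCal-vs-VCal} and the alternative form \Cref{prop:altform}: it suffices to force $\Atruthful(\D)$ to predict exactly $\tfrac12$ on a $\Theta(T)$-length block of i.i.d.\ fair coins, whose $\Theta(\sqrt T)$ empirical imbalance is then exposed by the $\VCal$ threshold just above $\tfrac12$ --- provided this imbalance is not ``undone'' by the rest of the sequence.

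I would take (say, $T$ a multiple of $4$): $x_1,\dots,x_{T/4}\simiid\Bern(\tfrac12)$, and, conditioned on $k\coloneqq\sum_{t\le T/4}x_t$, let $x_{T/4+1},\dots,x_T$ be a uniformly random binary string on those $3T/4$ coordinates containing exactly $3T/4-k$ ones. Since $k\in[0,T/4]$ the suffix always needs a number of ones in $[T/2,3T/4]\subseteq[0,3T/4]$, so this is well defined and $\sum_t x_t=3T/4$ deterministically; hence $\OPT_\UCal(\D)=0$ with $\bar x=\tfrac34$. For the truthful forecaster: on the prefix the coins are genuinely i.i.d.\ fair, so $\Atruthful$ predicts $p_t=\tfrac12$; on the suffix it predicts the sampling-without-replacement ratio $M_j\coloneqq(\text{remaining suffix ones})/(\text{remaining suffix slots})$. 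Choosing $\alpha=\tfrac12+\tfrac1{3T}$, which lies strictly below every suffix prediction that exceeds $\tfrac12$ (those are $\ge\tfrac12+\tfrac1{2\cdot 3T/4}$), one has $\{t:p_t<\alpha\}=\{1,\dots,T/4\}\cup\{t>T/4:p_t\le\tfrac12\}$, so \Cref{prop:altform} gives
\[
\VCal(x,p)\ \ge\ 2\!\!\sum_{t:\,p_t<\alpha}\!\!(x_t-\alpha)\ \ge\ 2\Bigl(S_n+R-\tfrac13\Bigr),\qquad S_n\coloneqq\!\!\sum_{t=1}^{T/4}\!\bigl(x_t-\tfrac12\bigr),\quad R\coloneqq\!\!\!\!\sum_{t>T/4:\,p_t\le 1/2}\!\!\!\!\bigl(x_t-\tfrac12\bigr),
\]
where the last step uses $|\{t:p_t<\alpha\}|\le T$.

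Now $S_n$ is a centered $\Binomial(T/4,\tfrac12)$ deviation, so $S_n\ge\tfrac14\sqrt T$ with constant probability, and everything comes down to bounding $|R|$ --- showing the correction suffix does not cancel the prefix imbalance inside $\VCal$. Here the design matters: because $\bar x=\tfrac34$, the suffix has ones-density at least $\tfrac23$, and $M_j$ is a bounded martingale started at $M_0=1-\tfrac{4k}{3T}\ge\tfrac23$ with increments at most $1/(\text{remaining slots})$; a maximal Azuma inequality (squared increments over steps with $\ge L$ remaining slots summing to $O(1/L)$) shows that with probability $\ge1-1/T$, $M_j>\tfrac12$ for all but the last $O(\log T)$ suffix steps, hence $|R|\le O(\log T)$. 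On the resulting constant-probability event $\VCal(x,p)\ge 2\bigl(\tfrac14\sqrt T-O(\log T)-\tfrac13\bigr)=\Omega(\sqrt T)$, so $\err_\UCal(\D,\Atruthful)\ge\err_\VCal(\D,\Atruthful)=\Omega(\sqrt T)$.

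The main obstacle is exactly the bound on $R$. A carelessly chosen correction suffix breaks everything: a deterministic block (ones then zeros, or zeros then ones), or even the ``balanced'' uniform arrangement obtained by aiming for $\bar x=\tfrac12$, has $\Theta(T)$ steps whose truthful prediction is $0$ or is just below $\tfrac12$; the threshold at $\tfrac12^+$ sweeps these in, and their $-\Theta(T)$ contribution swamps $S_n$. Pushing $\bar x$ a constant away from $\tfrac12$ (so the suffix is ones-heavy on every prefix of it) together with keeping the suffix a uniformly random arrangement (so the truthful suffix predictions track the density $\approx\bar x>\tfrac12$ rather than collapsing to $\{0,1\}$) is what makes $R$ polylogarithmic; the martingale/Azuma verification of this is the one genuinely technical piece.
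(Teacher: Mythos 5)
Your proposal is correct, but it takes a genuinely different route from the proof the paper relies on (the statement is imported from Proposition~A.3 of~\cite{HQYZ24}, and its construction is sketched in Section~3.1): there, the sequence is $T/2$ i.i.d.\ fair bits followed by $T/2$ deterministic ones, the truthful forecaster's $\Omega(\sqrt T)$ first-half bias is exposed at a threshold just above $1/2$ (the all-ones suffix, truthfully predicted as $1$, contributes nothing there), and the $O(1)$ side comes from an \emph{adaptive} dishonest forecaster that predicts $5/8$ and keeps predicting $5/8$ into the second half until its bias cancels. You instead make $\sum_t x_t = 3T/4$ deterministic, so that the constant base-rate predictor $p_t \equiv 3/4$ has $\UCal = 0$ simultaneously for every proper scoring rule; this buys you $\OPT_{\UCal}(\D)=0$ with no patch-up dynamics, and the price is paid on the truthful side, where the correction suffix is no longer invisible to the threshold at $1/2^{+}$ and you need both the base rate bounded away from $1/2$ and the without-replacement (hypergeometric/Azuma) concentration argument to confine suffix predictions $\le 1/2$ to the last $O(\log T)$ steps, so that $|R| = O(\log T)$ cannot cancel the $\Theta(\sqrt T)$ prefix imbalance. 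Your reduction via \Cref{lemma:UCal-vs-VCal} and \Cref{prop:altform} with $\alpha = \tfrac12 + \tfrac1{3T}$ is sound (suffix predictions exceeding $1/2$ are indeed $\ge \tfrac12 + \tfrac{2}{3T}$, and the $|\{t: p_t<\alpha\}|\le T$ correction costs only $1/3$), and the martingale bound you sketch is the standard one, with squared increments over steps having at least $L$ remaining slots summing to $O(1/L)$. Two details to make explicit in a full write-up: the bound $\Pr[|R| > O(\log T)] \le 1/T$ holds conditionally on every prefix realization (it only uses that the suffix has at least $T/2$ ones among $3T/4$ slots), which is what lets you intersect it with the constant-probability event $S_n \ge \tfrac14\sqrt T$ even though the suffix depends on $k$; and take the deviation threshold strictly below $1/6$ (or handle $M_j = \tfrac12$ separately), since a remaining density of exactly $1/2$ would otherwise be swept into $R$.
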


One example of a truthful calibration measure is the \emph{Subsampled Smooth Calibration Error (SSCE)} introduced by \cite{HQYZ24}.
SSCE is a variant of the \emph{smooth calibration error} calibration measure introduced by~\cite{KF08}: $\smCE(x, p) \coloneqq \sup_{f \in \mathcal{F}} \sum_{t=1}^T f(p_t) (x_t - p_t)$, where $\mathcal{F}$ is the family of $1$-Lipschitz functions from $[0, 1]$ to $[-1, 1]$.
SSCE is defined by subsampling a subset of the time horizon, and evaluating the Smooth Calibration Error on it.
Formally, letting $\Unif(S)$ denote the uniform distribution over a finite set $S$ and $x|_S$ denote the $|S|$-dimensional vector formed by the entries of $x$ indexed by $S$:
     \begin{equation}
     \label{eq:ssce}
         \RSCE(x, p) \coloneq \Ex{S \sim \Unif(2^{[T]})}{\smCE(x|_S, p|_S)}
         =   \Ex{y \sim \Unif(\{0, 1\}^T)}{\sup_{f \in \F}\sum_{t=1}^{T}y_t \cdot f(p_t)\cdot(x_t - p_t)}.
     \end{equation}
In light of \Cref{prop:completedecisionlowerbound}, since SSCE is complete and truthful, it cannot be decision-theoretic.

\section{Technical Overview}
\label{sec:overview}
\subsection{Non-truthfulness of U-Calibration and Its Variants}
The non-truthfulness of the U-Calibration error (\Cref{prop:HQYZ24-UCal}) comes from the incentive for a dishonest forecaster to ``patch up'' their previous mis-calibration. Specifically, \cite{HQYZ24} considered a length-$T$ sequence that consists of $T/2$ independent random bits followed by $T/2$ ones. In this case, the truthful forecaster predicts $1/2$ in the first half, and typically incurs an $\Omega(\sqrt{T})$ bias on those bits. This further translates into an $\Omega(\sqrt{T})$ U-Calibration error. On the other hand, a strategic forecaster may deliberately predict a biased value of $5/8$ on the first half, and continue predicting $5/8$ on the second half until the bias is close to $0$.\footnote{This happens with high probability, as the mean of the first half concentrates around $1/2 < 5/8$, while the mean of the entire sequence concentrates around $3/4 > 5/8$.} The resulting U-Calibration error can then be bounded by $O(1)$ in expectation.

This $O(1)$-$\Omega(\sqrt{T})$ truthfulness gap, however, would vanish once we apply the \emph{subsampling} technique of~\cite{HQYZ24}. The subsampled version, $\UCalsub$, evaluates the U-Calibration error on a random subset of the horizon. This introduces a $\Theta(\sqrt{T})$ error in the resulting penalty, so the strategic forecaster at best outperforms the truthful one by a constant factor. One might naturally wonder whether $\UCalsub$ is truthful in general. Unfortunately, as we outline below, there exist two additional ``failure modes'' of the U-Calibration error that cannot be remedied by subsampling alone.

\paragraph{Example 1: Non-truthfulness due to discontinuity.} We start by noting that the U-Calibration error, $\UCal(x, p)$, is not continuous in $p$. Suppose that, for some small $\eps > 0$, we have
\[
    (x_t, p_t) = \begin{cases}
        (1, 1/2 - \eps), & t \le T / 2,\\
        (0, 1/2 + \eps), & t > T / 2.
    \end{cases}
\]
Note that $p$ is almost calibrated: $\tilde p = (1/2, 1/2, \dots, 1/2)$ is entry-wise close to $p$, and perfectly calibrated with respect to $x$, which implies $\UCal(x, \tilde p) = 0$. However, $\UCal(x, p)$ is much larger: Consider the equivalent formulation of the V-Calibration error in \Cref{prop:altform} and take $\alpha = 1/2$. There are $N_{-} = T/2$ steps on which $p_t < \alpha$, and the outcomes on those steps sum up to $X_{-} = T/2$. By \Cref{lemma:UCal-vs-VCal}, we have $\UCal(x, p) \ge \VCal(x, p) \ge 2(X_{-} - \alpha N_{-}) = \Omega(T)$. Taking $\eps \to 0^{+}$ gives triples $(x, p, \tilde p)$ such that $\|p - \tilde p\|_{\infty} \to 0$ but $\UCal(x, p) = \Omega(T)$ and $\UCal(x, \tilde p) = 0$ are far away.

This implies that any complete and decision-theoretic calibration measure $\CM$ must be discontinuous. For the triple $(x, p, \tilde p)$ constructed as above, completeness (\Cref{def:complete-and-sound}) implies $\CM(x, \tilde p) = o(T)$ while being decision-theoretic requires $\CM(x, p) \ge \Omega(1)\cdot\UCal(x, p) = \Omega(T)$. Taking $\eps \to 0^+$ shows that $\CM$ is discontinuous.

The example above does not immediately give the $O(\sqrt{T})$-$\Omega(T)$ truthfulness gap in \Cref{prop:subsamptruth}. Towards showing that the truthful forecaster incurs an $\Omega(T)$ U-Calibration error, we need to design a sequence of true probabilities $\pstar_1, \pstar_2, \ldots, \pstar_T \approx 1/2$ and a threshold $\alpha \in [0, 1]$, such that $x_t = 1$ whenever $\pstar_t < \alpha$ and $x_t = 0$ whenever $\pstar_t > \alpha$. This is \emph{very} unlikely to happen if each $x_t$ is independently sampled from $\Bern(\pstar_t)$.

However, when nature picks each $\pstar_t$ based on the previous outcomes $x_{1:(t-1)}$, the hoped-for property \emph{can} be guaranteed via a simple binary search. At step $t = 1$, the nature starts with an interval $[l_1, r_1] = [1/2 - \eps, 1/2 + \eps]$ and picks $\pstar_1 = (l_1 + r_1) / 2$ as the middle point. After realizing $x_1 \sim \Bern(\pstar_1)$, if $x_1 = 1$, the nature updates $[l_2, r_2] \gets [\pstar_1, r_1]$; otherwise, $[l_2, r_2] \gets [l_1, \pstar_1]$. If the nature repeats this $T$ steps, we can verify that, for $\alpha \coloneqq (l_{T+1} + r_{T+1}) / 2$ and every $t \in [T]$: (1) $\pstar_t < \alpha$ implies $x_t = 1$; (2) $\pstar_t > \alpha$ implies $x_t = 0$. Then, a similar argument shows that truthful forecasting leads to $\UCal(x, \pstar) = \Omega(T)$. Furthermore, this argument is robust to subsampling, i.e., we also have $\UCalsub(x, \pstar) = \Omega(T)$. In contrast, a strategic forecaster might choose to predict $p_t = 1/2$ at every step. Since each $\pstar_t$ is in $[1/2 - \eps, 1/2 + \eps]$, as long as $\eps = O(1/\sqrt{T})$, $\sum_{t=1}^{T}x_t$ would concentrate around $T/2 \pm O(\eps T + \sqrt{T}) = T/2 \pm O(\sqrt{T})$. This shows that $\OPT_{\UCal}, \OPT_{\UCalsub} = O(\sqrt{T})$, and thus the $O(\sqrt{T})$-$\Omega(T)$ truthfulness gap.

We also further generalize this example to show that no calibration measure can be complete, decision-theoretic and non-trivially truthful simultaneously.

\paragraph{Example 2: Non-truthfulness due to hedging.} In the previous example, it is crucial that nature specifies the conditional expectation of each bit ($\pstar_t$) adaptively and with arbitrary precision. Nevertheless, we show that $\UCal$ can still have a large truthfulness gap, even if the events are drawn from a product distribution, the marginal probabilities of which are, in turn, drawn from smooth distributions. At a high level, this is because the regret minimization for downstream agents incentivizes \emph{hedging behaviors}, where forecasters benefit from exaggerating the uncertainty of future events.

We start with a simple, non-smoothed setting: For each $t \in [T]$, we set $\pstar_t = 1/5$ if $t \le T/2$, and $\pstar_t = 4/5$ if $t > T/2$. Each $x_t$ is independently sampled from $\Bern(\pstar_t)$. As in Example~1, truthful prediction typically leads to an $\Omega(\sqrt{T})$ bias at predicted values $1/5$ and $4/5$ each, and results in an $\Omega(\sqrt{T})$ U-Calibration error. In contrast, the forecaster can significantly lower its penalty by predicting $p_t = 2/5$ at $t \le T/2$ and $p_t = 3/5$ at $t > T/2$ instead. In light of \Cref{lemma:UCal-vs-VCal} and \Cref{prop:altform}, it suffices to upper bound the value of $\max\{X_- - \alpha N_-, \alpha N_+ - X_+\}$ for different values of $\alpha$. The worst cases are when $\alpha \to (2/5)^+$ and $\alpha \to (3/5)^-$. In the former case, we have $N_- = T/2$ while $X_-$ concentrates around $(T/2)\cdot (1/5) = T/10$, and is typically (except with probability $e^{-\Omega(T)}$) smaller than $\alpha N_- = T/5$. Similarly, when $\alpha \approx 3/5$, we have $N_+ = T/2$ and $X_+$ concentrates around $(T/2) \cdot (4/5) = 2T/5$, and is extremely unlikely to be below $\alpha N_+ = 3T/10$. This establishes the $e^{-\Omega(T)}$-$\Omega(\sqrt{T})$ truthfulness gap for $\UCal$, and the same construction works for the subsampled version $\UCalsub$ as well.

In the $c$-smoothed setting for some small constant $c$, instead of setting each $\pstar_t$ to $1/5$ or $4/5$, we draw each $\pstar_t$ independently and uniformly from either $[1/5-c, 1/5+c]$ or $[4/5-c,4/5+c]$. Here, a complication is that we cannot ``catch'' a high U-Calibration error by na\"ively setting $\alpha = 1/5 + c$ (and applying \Cref{lemma:UCal-vs-VCal} and \Cref{prop:altform}). This is because we would end up with $N_- = T/2$ and $X_-$ concentrating around $(T/2)\cdot(1/5) = T/10$, which is \emph{lower} than $\alpha N_- = (1/5 + c)\cdot(T/2) = T/10 + \Omega(T)$. Instead, we pick $\alpha = 1/5 - (1 - \gamma)c$ for some $\gamma > 0$ to be chosen. Since a uniform sample from $[1/5-c, 1/5+c]$ falls into $[1/5-c, \alpha]$ with probability $(\gamma c) / 2c = \gamma / 2$, we expect $N_- = \Theta(\gamma T)$. Furthermore, conditioning on that $\pstar_t \in [1/5-c, \alpha]$, $x_t$ has an expectation of $\frac{(1/5 - c) + \alpha}{2} = \alpha - \gamma c/2$. This shows that $X_-$ concentrates around $(\alpha - \gamma c/2)\cdot N_- = \alpha N_- - O(\gamma N_-)$ up to a typical deviation of $\sqrt{N_-} = \Theta(\sqrt{\gamma T})$. If we set $\gamma = \Theta(T^{-1/3})$ appropriately, the deviation would be $\Theta(\sqrt{\gamma T}) = \Theta(T^{1/3})$, dominating the $O(\gamma N_-) = O(\gamma^2 T) = O(T^{1/3})$ bias. This leads to an $\Omega(T^{1/3})$ penalty in both $\UCal$ and $\UCalsub$. In contrast, the dishonest forecasts (that take value either $2/5$ or $3/5$)  incur an $e^{-\Omega(T)}$ penalty under either $\UCal$ or $\UCalsub$. This shows that neither $\UCal$ and $\UCalsub$ can be truthful with sub-$\poly(T)$ parameters, even in the $\Omega(1)$-smoothed setting.

\subsection{Truthfulness of Subsampled Step Calibration}
Showing that $\stepCEsub$ is truthful involves two steps: lower bounding the optimal penalty that can be achieved by a (possibly dishonest) forecaster, and upper bounding the penalty incurred by the truthful forecaster. The first part follows from a result of~\cite{HQYZ24}: regardless of the forecasting algorithm $\A$, it holds that
\begin{equation}\label{eq:HQYZ24-lower}
    \Ex{(x, p) \sim (\D, \A), y \sim \Unif(\{0, 1\}^T)}{\left|\sum_{t=1}^{T}y_t \cdot (x_t - p_t)\right|}
\ge \Omega(\Ex{}{\gamma(\Var_T})),
\end{equation}
where $\gamma(x) = \begin{cases}x, & x \le 1,\\ \sqrt{x}, & x > 1\end{cases}$ and the random variable $\Var_T$ is defined as $\Var_T \coloneqq \sum_{t=1}^{T}\pstar_t(1-\pstar_t)$. Since the left-hand side of \Cref{eq:HQYZ24-lower} is a lower bound on $\stepCEsub(x, p)$, we also have $\OPT_{\stepCEsub}(\D) \ge \Omega(\Ex{}{\gamma(\Var_T)})$.

For simplicity, we assume in this section that $\Var_T$ is always $\Omega(T)$ (e.g., when every conditional probability is bounded away from $0$ and $1$), and focus on upper bounding $\err_{\stepCEsub}(\D, \Atruthful)$ by $\tilde O(\sqrt{T})$. The general case that $\Var_T$ can be much lower than $T$ can be handled via a doubling trick similar to the technique of~\cite{HQYZ24}. Also, we will upper bound $\stepCE$ instead of $\stepCEsub$, as all the analyses would naturally generalize to the subsampled version.

\paragraph{Warm-up \#1: Product distributions.} We start with the special case that $\D$ is a product distribution, i.e., $\pstar \in [0, 1]^T$ is fixed, and each $x_t$ is independently sampled from $\Bern(\pstar_t)$. Without loss of generality, $\pstar_1 < \pstar_2 < \cdots < \pstar_T$. Then, the step calibration error can be written as:
\[
    \stepCE(x, \pstar)
=   \max_{t \in [T]}\left|\sum_{i=1}^{t}(x_i - \pstar_i)\right|.
\]

The above is simply the maximum deviation $\max_{t \in [T]}|X_t|$ of a random walk $(X_t)_{t=0}^{T}$ defined as $X_0 = 0$ and $X_t = X_{t-1} + (x_t - \pstar_t)$. Na\"ively controlling its expectation via Hoeffding's inequality and a union bound would give an $O(\sqrt{T \log T})$ upper bound. We can shave the logarithmic factor using Kolmogorov's inequality, which gives
\[
    \pr{}{\max_{t \in [T]}|X_t| \ge \tau} \le \frac{\Ex{}{X_T^2}}{\tau^2} \le \frac{T}{4\tau^2}.
\]
Integrating this tail bound would then give $\Ex{}{\max_{t \in [T]}|X_t|} = O(\sqrt{T})$.

\paragraph{Warm-up \#2: Truthfulness up to an $O(\sqrt{\log(T/c)})$ factor.}
Unfortunately, the analysis above does not immediately generalize to non-product distributions, as the conditional probabilities of $\pstar_1, \ldots, \pstar_T$ are random and may not have a fixed ordering. One might resort to a ``covering $+$ union bound'' argument, but the family of step functions does not admit a finite covering (in the $\ell_{\infty}$ sense).

Fortunately, in the smoothed setting in which each $\pstar_t$ is randomly drawn from a $c$-smoothed distribution, a simple discretization argument would suffice. Let $\eps \coloneqq c / T^2$ and consider an $\eps$-net of the interval $[0, 1]$: $V_\eps \coloneqq \{0, \eps, 2\eps, \ldots, 1\}$. We will relax $\stepCE(x, \pstar)$ into the following:
\begin{equation}\label{eq:overview-naive-net}
    \max_{\alpha \in V_\eps}\left|\sum_{t=1}^{T}(x_t - \pstar_t)\cdot\1{\pstar_t \in [0, \alpha]}\right|.
\end{equation}

Suppose that the supremum over $\alpha \in [0, 1]$ in $\stepCE(x, \pstar)$ is achieved by some $\alpha^\star \in [i\eps, (i+1)\eps]$. Then, the difference between the values of $\sum_{t=1}^{T}(x_t - \pstar_t)\cdot\1{\pstar_t \in [0,\alpha]}$ at $\alpha^\star$ versus at $\alpha' = i\eps$ is at most
\[
    \sum_{t=1}^{T}\1{\pstar_t \in (\alpha', \alpha^\star]}
\le \sum_{t=1}^{T}\1{\pstar_t \in [i\eps, (i+1)\eps]},
\]
the number of steps on which $\pstar_t$ falls into the interval $[i\eps, (i+1)\eps]$ of length $\eps$. Since each $\pstar_t$ is drawn from a distribution with density at most $1/c$, $\pr{}{\pstar_t \in [i\eps, (i+1)\eps]} \le \eps / c = 1/T^2$. It then follows that the effect of replacing $[0, 1]$ with the $\eps$-net $V_\eps$ is negligible.

It remains to upper bound the expectation of \Cref{eq:overview-naive-net}. Since for each $\alpha \in V_\eps$, $\sum_{t=1}^{T}(x_t - \pstar_t)\cdot\1{\pstar_t \in [0,\alpha]}$ is the outcome of a $T$-step martingale, applying Hoeffding's inequality with a union bound over $V_\eps$ gives an upper bound of $O(\sqrt{T\log|V_\eps|}) = O(\sqrt{T\log(T/c)})$. Extending this to $\stepCEsub$ shows that $\stepCEsub$ is truthful up to an $O(\sqrt{\log(T/c)})$ factor.

\paragraph{Removing the $\polylog(T)$ factor.} We further tighten the multiplicative factor to $\sqrt{\log(1/c)}$ via a chaining argument. Our technique amounts to controlling the maximum deviation $\max_{t \in [T]}|X_t|$ in a martingale $(X_t)_{t=0}^{T}$ without applying Kolmogorov's inequality, and rather applies a more ``combinatorial'' analysis. This analysis turns out to be generalizable to non-product distributions.

As a warm-up, we revisit the toy problem below:
\begin{quote}
    \textbf{Max-deviation of random walk:} Consider the random walk $(X_t)_{t=0}^{T}$ where $X_0 = 0$ and $X_t = X_{t-1} \pm 1$ with equal probability. Prove that $\Ex{}{\max_{t \in [T]}|X_t|} = O(\sqrt{T})$.
\end{quote}
While above would follow from Kolmogorov's inequality and an integration, here is a different proof: We consider $\approx\log_2 T$ ``levels'' of random variables. The zeroth level consists of only $X_T - X_0$. The first level contains $X_T - X_{T/2}$ and $X_{T/2} - X_0$. The second level contains $X_T - X_{3T/4}, X_{3T/4} - X_{2T/4}, \ldots$. In general, the $i$-th level divides the horizon into $2^i$ blocks of length $T/2^i$, and considers the displacement within each block. Then, we note that each $X_t$ can be written as the sum of at most $\approx\log_2 T$ terms, at most one from each level. It follows that $\Ex{}{\max_{t \in [T]}|X_t|}$ is at most $\sum_{i=0}^{\log_2T}Y_i$, where $Y_i$ is the expectation of the maximum absolute value among level $i$. Since level $i$ contains $2^i$ terms, each of which is a sum of $T/2^i$ independent samples from $\Unif(\{\pm1\})$, Hoeffding's inequality with a union bound gives $Y_i = O\left(\sqrt{(T/2^i)\log 2^i}\right)$. Summing over all $i$ shows $\Ex{}{\max_{t \in [T]}|X_t|} = O(\sqrt{T})$.

Here is how we apply the above to the analysis of the $c$-smoothed setting. We consider two discretization of $[0, 1]$: $V_c \coloneqq \{0, c, 2c, \ldots, 1\}$ and $V_\eps \coloneqq \{0, \eps, 2\eps, \ldots, 1\}$ for $\eps = c/T^2$. As argued earlier, replacing the interval $[0, 1]$ in $\stepCE$ with $V_\eps$ comes with a negligible error. If we could further replace $V_\eps$ with $V_c$, we would be done: controlling the maximum over $V_c$ only involves a union bound over $|V_c| = O(1/c)$ martingales, and leads to an $O(\sqrt{T\log(1/c)})$ upper bound. Thus, it remains to control the error when we replace $V_\eps$ with $V_c$. We divide the interval $[0, 1]$ into sub-intervals of length $c/2, c/4, c/8, \ldots, \eps = c/T^2$. For each $i$ between $1$ and $O(\log T)$, the $i$-th level of the division consists of $2^i/c$ intervals of length $c/2^{i}$. For each level $i$ and $j \in [2^i/c]$, we consider:
\[
    \sum_{t=1}^{T}(x_t - \pstar_t)\cdot\1{(j-1)\cdot (c/2^i)\le \pstar_t \le j\cdot (c/2^i)},
\]
which is the outcome of a $T$-step martingale. Furthermore, since each $\pstar_t$ is sampled from a distribution with density $\le 1/c$, it falls into the length-$(c/2^i)$ interval with probability at most $2^{-i}$. Therefore, the contribution of the $i$-th level to the step calibration error is upper bounded by $\sqrt{(2^{-i}T)\cdot\log (2^i/c)}$. Summing over all $i$ gives the desired upper bound of $O(\sqrt{T\log(1/c)})$.

\subsection{Minimize Step Calibration in the Adversarial Setup}
We sketch the proof of the \Cref{thm:alg} by giving a simple non-constructive argument for the $O(\sqrt{T\log T})$ error rate; we derive an explicit and efficient algorithm in the actual proof.

We apply the minimax argument of~\cite{Hart22} for minimizing the $\ell_1$ calibration error (also called the ECE) in an adversarial prediction setting. First, we restrict the forecaster so that its prediction is always a multiple of $1/\sqrt{T}$. Then, we note that both the adversary and the forecaster have finitely many deterministic strategies---each deterministic strategy of the adversary (resp.\ forecaster) maps the history (all the previous outcomes and predictions) to the next outcome (resp.\ prediction). By the minimax theorem, it suffices to show that, against any given, possibly randomized strategy of the adversary, the forecaster can achieve an $O(\sqrt{T\log T})$ error with respect to $\stepCE$.

In this scenario, at each step $t$, the forecaster can compute the conditional probability%
\if\arxiv1
$\pstar_t$
\else
$\pstar_t = \pr{}{x_t = 1 \mid x_{1:(t-1)}, p_{1:(t-1)}}$
\fi
using the adversary's strategy. Then, the forecaster predicts $p_t$ obtained by rounding $\pstar_t$ to the nearest multiple of $1/\sqrt{T}$. To control the resulting step calibration error, we note that there are only $O(\sqrt{T})$ values of $\alpha$ that need to be considered (namely, the multiples of $1/\sqrt{T}$). For each fixed $\alpha$, we can bound $\left|\sum_{t=1}^{T}(x_t - p_t)\cdot\1{p_t \le \alpha}\right|$ by the sum of two terms: one involving $(x_t - \pstar_t)$ and another involving $(\pstar_t - p_t)$. Since $|\pstar_t - p_t| \le 1/\sqrt{T}$ for all $t$, the latter term is always $O(\sqrt{T})$. For the former, we apply a union bound over the $O(\sqrt{T})$ values of $\alpha$. The resulting bound would scale as $O(\sqrt{T\log T})$.
While this argument does not give an explicit algorithm, we can also cast step calibration minimization as a Blackwell approachability problem~\cite{blackwell_analog_1956, Foster99} and obtain an explicit algorithm using min-max game dynamics~\cite{Hart22,haghtalab2024unifying}.

\section{Non-truthfulness of U-Calibration}
\label{sec:u-calibration}
Previous observations of non-truthfulness in calibration measures centered around a specific source of non-truthfulness: an incentive that calibration measures provide to forecasters to ``cancel out'' previous errors in their forecast by intentionally mispredicting future events.
This form of non-truthfulness can be remedied by randomly subsampling which timesteps are included in the calibration measure computation \cite{HQYZ24}.

In this section, we identify two new and qualitatively distinct sources of non-truthfulness in the U-Calibration measure, neither of which can be remedied with subsampling alone.
The first source of non-truthfulness arises from the inherently discontinuous nature of the U-Calibration measure and is largely inevitable: one can show that any reasonable calibration measure that provides decision-theoretic guarantees must also be non-truthful due to discontinuity.
However, this form of non-truthfulness requires an adversary to precisely choose the event distribution $\cD$ and, as we will later see, largely disappears under smoothed analysis.
The second source of non-truthfulness arises from the incentive that U-Calibration provides to a forecaster to \emph{hedge} their predictions, i.e., to exaggerate their uncertainty in their predictions.
This form of non-truthfulness remains even in the smoothed setting.

\subsection{Non-truthfulness from Discontinuity}
The U-Calibration measure is discontinuous in a forecaster's prediction.
From a decision-theoretic perspective, this is because an agent's mapping from the forecaster's prediction to an action is usually discontinuous: a marginal change in the probability of an event occurring may result in an agent switching actions and perhaps incurring significantly higher or lower regret.
The following proposition describes one such case.

\begin{proposition}
    For any $\epsilon \in (0, 1/2)$ and even number $T \in \integers$, there is a sequence of events $x_{1:T}$ and predictions $p_{1:T}$ such that changing $p_{1:T}$ to a similar alternative set of predictions $\tilde p_{1:T}$ where $\norm{p_{1:T} - \tilde p_{1:T}}_\infty \leq \epsilon$ increases the U-Calibration measure from $\UCal(x_{1:T}, p_{1:T}) = 0$ to $\UCal(x_{1:T}, \tilde p_{1:T}) = \Omega(T)$.
\end{proposition}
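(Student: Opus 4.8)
The plan is to instantiate the construction outlined in Example~1 of the technical overview. Fix $\eps \in (0,1/2)$ and even $T$, and define
\[
    x_t = \begin{cases} 1, & t \le T/2,\\ 0, & t > T/2, \end{cases}
    \qquad
    \tilde p_t = \begin{cases} 1/2 - \eps, & t \le T/2,\\ 1/2 + \eps, & t > T/2, \end{cases}
\]
and let $p \coloneqq \tfrac12\vec{1}_T$ be the constant-$1/2$ forecast. Since $\eps < 1/2$, every entry of $\tilde p$ lies in $(0,1)$, and $\norm{p - \tilde p}_\infty = \eps$, so $\tilde p$ is an admissible $\eps$-perturbation of $p$. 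It then remains to establish the two numerical claims $\UCal(x,p) = 0$ and $\UCal(x,\tilde p) = \Omega(T)$.

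For $\UCal(x,p) = 0$ I would argue directly from \Cref{eq:U-calibration}. The sequence $x$ contains exactly $T/2$ ones and $T/2$ zeros, so for any proper scoring rule $S$ and any constant $\beta$ we have $\sum_{t=1}^T S(x_t,\beta) = \tfrac{T}{2}\bigl(S(1,\beta)+S(0,\beta)\bigr)$, and by properness of $S$ this is minimized over $\beta \in [0,1]$ at $\beta = 1/2$. Hence, since $p_t \equiv 1/2$, the bracketed regret term $\sum_{t=1}^T S(x_t, p_t) - \inf_{\beta}\sum_{t=1}^T S(x_t,\beta)$ vanishes for every proper $S$, and taking the supremum yields $\UCal(x,p) = 0$. (Equivalently, $p$ is perfectly calibrated with respect to $x$, and a perfectly calibrated forecast has zero U-Calibration error.)

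For $\UCal(x,\tilde p) = \Omega(T)$ I would use \Cref{lemma:UCal-vs-VCal} to reduce to a lower bound on $\VCal(x,\tilde p)$ and then apply the alternative form in \Cref{prop:altform} at the single threshold $\alpha = 1/2$: the steps $t \le T/2$ are precisely those with $\tilde p_t < \alpha$, so $N_-^{(1/2)} = T/2$, and all of them have $x_t = 1$, so $X_-^{(1/2)} = T/2$; thus $X_-^{(1/2)} - \tfrac12 N_-^{(1/2)} = T/4$, giving $\VCal(x,\tilde p) \ge 2\cdot(T/4) = T/2$ and hence $\UCal(x,\tilde p) \ge T/2 = \Omega(T)$. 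Equivalently, one can plug the single scoring rule $S_{1/2}(x,p) = (\tfrac12 - x)\,\sgn(p - \tfrac12)$ from \Cref{eq:vcaleq} directly into \Cref{eq:U-calibration}: one checks $S_{1/2}(x_t,\tilde p_t) = 1/2$ for every $t$ (the first half has $x_t = 1$, $\tilde p_t < 1/2$, and the second half has $x_t = 0$, $\tilde p_t > 1/2$), whereas $\sum_{t=1}^T S_{1/2}(x_t,\beta) = 0$ for every constant $\beta$, so the regret term equals $T/2$.

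There is no real obstacle here, since the construction is explicit and both estimates are essentially one-liners; the only points needing care are (i) invoking \emph{properness} — rather than merely an $\ECE$-style calibration argument — to certify $\UCal(x,p)=0$, and (ii) matching the strict-inequality convention of \Cref{prop:altform}, where $\tilde p_t = 1/2 - \eps$ is strictly below $\alpha = 1/2$, so $N_-^{(1/2)}$ and $X_-^{(1/2)}$ count exactly the first half as intended. Finally, letting $\eps \to 0^+$ produces triples with $\norm{p - \tilde p}_\infty \to 0$ yet $\UCal(x,\tilde p) = \Omega(T)$ while $\UCal(x,p) = 0$, which in addition certifies the claimed discontinuity of any complete and decision-theoretic calibration measure; but this is not needed for the proposition itself.
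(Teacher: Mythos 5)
Your proposal is correct and follows essentially the same route as the paper's proof: the identical half-ones/half-zeros construction with predictions at $\tfrac12 \pm \eps$ versus the constant $\tfrac12$ forecast, with the $\Omega(T)$ lower bound obtained via \Cref{lemma:UCal-vs-VCal} and \Cref{prop:altform} at $\alpha = \tfrac12$ (your labeling of $p$ versus $\tilde p$ matches the proposition statement, whereas the paper's proof swaps the two roles, which is immaterial). The only additions are cosmetic: you spell out the properness argument certifying $\UCal(x, \tfrac12\cdot\vec{1}_T) = 0$, which the paper asserts via perfect calibration, and you give an equivalent direct computation with the single proper scoring rule $S_{1/2}$.
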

Our proof of this proposition also shows that \emph{no} calibration measure can be complete, continuous, and decision-theoretic simultaneously.
\begin{proof}
    We define the events $x_{1:T}$ and original predictions $p_{1:T}$ as $(x_t, p_t) = (1, \tfrac 12 - \eps)$ for the first half of timesteps $t \in [T/2]$ and  $(x_t, p_t) = (0, \tfrac 12 + \eps)$ for the second half of timesteps $t > \tfrac T2$. Recall the equivalent form of the V-Calibration error from \Cref{prop:altform}. For $\alpha = 1/2$, there are $N_{-}^{(\alpha)} = T/2$ steps on which $p_t < \alpha$, and the events on those steps sum up to $X_{-}^{(\alpha)} = T/2$. It follows that
    \[
        \VCal(x, p)
    \ge 2\left(X_{-}^{(\alpha)} - \alpha \cdot N_{-}^{(\alpha)}\right)
    =   \Omega(T).
    \]
    This further implies $\UCal(x, p) = \Omega(T)$ due to the equivalence of $\UCal$ and $\VCal$ (Lemma~\ref{lemma:UCal-vs-VCal}). On the other hand, the alternative predictions $\tilde p_{1:T} = \tfrac 12 \cdot \vec{1}_T$ would guarantee $\UCal(x, p) = 0$ and $\|p - \tilde p\|_{\infty} \le \eps$.
\end{proof}

The discontinuity of the U-Calibration measure provides a source of non-truthfulness that cannot be avoided with the subsampling technique of \cite{HQYZ24}.
The following proposition demonstrates an $O(\sqrt T)$-$\Omega(T)$ truthfulness gap for U-Calibration, as well as for its subsampled variant $\UCalsub$:
 \begin{align}
    \label{eq:subsampled}
\UCalsub(x_{1:T}, p_{1:T}) = \EEs{S \sim \Unif(2^{[T]})}{\UCal(x |_S, p|_S)}.
\end{align}
\begin{proposition}
\label{prop:subsamptruth}
Both the U-Calibration measure $\UCal$ and its subsampled variant $\UCalsub$ suffer from an $O(\sqrt T)$-$\Omega(T)$ truthfulness gap.
\end{proposition}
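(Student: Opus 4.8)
I would prove both gaps at once with a single distribution $\D$, built from a binary search, on which the truthful forecaster is pinned to error $\Omega(T)$ while the constant-$\tfrac12$ forecaster achieves error $O(\sqrt T)$. Concretely, fix $\eps := 1/\sqrt T$ (for small $T$ the statement is vacuous). Nature maintains an interval $[\ell_t, r_t]$, starting from $[\ell_1, r_1] = [\tfrac12 - \eps, \tfrac12 + \eps]$; at step $t$ it reveals the midpoint $\pstar_t := (\ell_t + r_t)/2$, draws $x_t \sim \Bern(\pstar_t)$, and sets $[\ell_{t+1}, r_{t+1}] := [\pstar_t, r_t]$ if $x_t = 1$ and $[\ell_{t+1}, r_{t+1}] := [\ell_t, \pstar_t]$ if $x_t = 0$. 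This defines $\D$, and by construction $\Atruthful(\D)$ predicts $p_t = \pstar_t$ at every step.

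\textbf{Separation lemma (the main step).} Let $\alpha := (\ell_{T+1} + r_{T+1})/2$. The left endpoints $\ell_t$ are non-decreasing with $\ell_{t+1} = \pstar_t$ exactly when $x_t = 1$, and symmetrically for the right endpoints; since $[\ell_{T+1}, r_{T+1}]$ has positive length, $\ell_{T+1} < \alpha < r_{T+1}$. Hence $x_t = 1 \Rightarrow \pstar_t = \ell_{t+1} \le \ell_{T+1} < \alpha$ and $x_t = 0 \Rightarrow \pstar_t = r_{t+1} \ge r_{T+1} > \alpha$, so $\pstar_t < \alpha \iff x_t = 1$, $\pstar_t > \alpha \iff x_t = 0$, and no $\pstar_t$ equals $\alpha$. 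I expect this combinatorial observation to be the only nontrivial part; everything after it is routine second-moment estimates.

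\textbf{Truthful forecaster incurs $\Omega(T)$.} With $p = \pstar$: for any subset $S \subseteq [T]$, restricting the counts of \Cref{prop:altform} to indices in $S$, the separation lemma gives $X_{-}^{(\alpha)} = N_{-}^{(\alpha)}$ and $X_{+}^{(\alpha)} = 0$ (all counted over $S$), while $N_{-}^{(\alpha)} + N_{+}^{(\alpha)} = |S|$. Plugging in $\alpha \in [\tfrac12 - \eps, \tfrac12 + \eps]$ (so $\alpha, 1-\alpha \ge \tfrac14$) yields $\VCal(x|_S, \pstar|_S) \ge \tfrac12 \max\{N_{-}^{(\alpha)}, N_{+}^{(\alpha)}\} \ge |S|/4$, hence $\UCal(x|_S, \pstar|_S) \ge |S|/4$ by \Cref{lemma:UCal-vs-VCal}. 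Taking $S = [T]$ gives $\UCal(x, \pstar) \ge T/4$ surely, and averaging over $S \sim \Unif(2^{[T]})$, where $\mathbb{E}|S| = T/2$, gives $\UCalsub(x, \pstar) \ge T/8$. Thus $\err_{\UCal}(\D, \Atruthful), \err_{\UCalsub}(\D, \Atruthful) = \Omega(T)$.

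\textbf{Constant forecaster incurs $O(\sqrt T)$, and conclusion.} For the forecaster that predicts $\tfrac12$ everywhere, specializing \Cref{prop:altform} with $p_t \equiv \tfrac12$ (over any index set $S$) reduces to the limits $\alpha \to (\tfrac12)^{\pm}$ and gives $\VCal(x|_S, \tfrac12\vec{1}) = 2\,\bigl|\sum_{t \in S}(x_t - \tfrac12)\bigr|$, so $\UCal(x|_S, \tfrac12\vec{1}) \le 4\,\bigl|\sum_{t\in S}(x_t - \tfrac12)\bigr|$ by \Cref{lemma:UCal-vs-VCal}. Decomposing $x_t - \tfrac12 = (x_t - \pstar_t) + (\pstar_t - \tfrac12)$, the first parts are martingale differences with $\mathbb{E}[(x_t - \pstar_t)^2] \le \tfrac14$, the second parts have $|\pstar_t - \tfrac12| \le \eps$, and $S$ is independent of $x$; a second-moment bound then gives $\mathbb{E}\,\bigl|\sum_{t\in S}(x_t - \tfrac12)\bigr| \le \sqrt{T/8} + \eps T = O(\sqrt T)$, and identically for $S = [T]$. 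Hence $\OPT_{\UCal}(\D), \OPT_{\UCalsub}(\D) = O(\sqrt T)$. Combining with the previous paragraph and the definition \eqref{eq:gap}, both $\UCal$ and $\UCalsub$ have an $O(\sqrt T)$-$\Omega(T)$ truthfulness gap, as claimed.
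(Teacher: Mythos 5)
Your proposal is correct and follows essentially the same route as the paper's proof: the identical binary-search construction of $\cD$ within $[\tfrac12-\eps,\tfrac12+\eps]$, the separation property at $\alpha=\pstar_{T+1}$ combined with \Cref{prop:altform} and \Cref{lemma:UCal-vs-VCal} to force $\Omega(|S|)$ error for the truthful forecaster on every subset, and the constant-$\tfrac12$ forecaster analyzed via the martingale-plus-bias decomposition for the $O(\sqrt T)$ upper bound. The only differences are cosmetic (interval bookkeeping instead of the increment recursion, and folding the random subset $S$ into the second-moment bound rather than conditioning on a fixed $S$).
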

\begin{proof}
We will analyze the truthfulness gaps of V-Calibration $\VCal$ and its subsampled version $\VCalsub$; the proposition would then follow from Lemma~\ref{lemma:UCal-vs-VCal}.

\paragraph{The distribution of events.} Let $\eps \in (0, 1/4)$ be sufficiently small such that $\eps = O(1/\sqrt{T})$.
We now construct a distribution $\cD$ such that the conditional probability $\pstar_t$ at every timestep $t$ is guaranteed to fall into the interval $[\tfrac 12 - \eps, \tfrac 12 + \eps]$.
Let us define $\cD$ as fixing $\pstar_1 = \tfrac 12$ and, for all $t \in [T]$, letting
\[
\pstar_{t+1} = 
\begin{cases} 
\pstar_t + \frac{\epsilon}{2^t} & \text{if } x_t = 1, \\
\pstar_t - \frac{\epsilon}{2^t} & \text{if } x_t = 0.
\end{cases}
\]
That is, distribution $\cD$ adversarially sets the probabilities $\pstar_t$ in a binary search fashion based on the realizations of the preceding events $x_{1:t-1}$.

\paragraph{Truthful forecasts give a linear penalty.} By our adversarial construction of $\cD$, regardless of the realization of $(x, \pstar)$, the following holds for $\alpha^\star \asseq \pstar_{T+1}$ and every $t \in [T]$: (1) $\pstar_t \ne \alpha^\star$; (2) $\pstar_t < \alpha^\star$ implies $x_t = 1$; and (3) $\pstar_t > \alpha^\star$ implies $x_t = 0$.

Towards lower bounding $\VCal(x, \pstar)$, we consider the equivalent formulation of the V-Calibration error in \Cref{prop:altform} at $\alpha = \alpha^\star$. Our construction guarantees $X_{-}^{(\alpha)} = N_{-}^{(\alpha)}$ and $X_{+}^{(\alpha)} = 0$. Since $\alpha \in [1/2 - \eps, 1/2 + \eps] \subseteq [1/4, 3/4]$, we have
\[
    X_{-}^{(\alpha)} - \alpha \cdot N_{-}^{(\alpha)}
=   (1 - \alpha)\cdot N_{-}^{(\alpha)}
\ge N_{-}^{(\alpha)} / 4
\]
and
\[
    \alpha\cdot N_{+}^{(\alpha)} - X_{+}^{(\alpha)}
=   \alpha\cdot N_{+}^{(\alpha)}
\ge N_{+}^{(\alpha)} / 4.
\]
It follows that
\[
    \max\left\{X_{-}^{(\alpha)} - \alpha N_{-}^{(\alpha)}, \alpha N_{+}^{(\alpha)} - X_{+}^{(\alpha)}\right\}
\ge \max\left\{N_{-}^{(\alpha)}, N_{+}^{(\alpha)}\right\} / 4
\ge T / 8.
\]
By \Cref{prop:altform}, $\VCal(x, \pstar) \ge \Omega(T)$ always holds, which in turn gives $\err_{\VCal}(\cD, \Atruthful(\cD)) = \Omega(T)$.

To lower bound $\err_{\VCalsub}(\cD, \Atruthful(\cD))$, we note that the same argument as above gives
\[
    \VCal(x|_S, \pstar|_S) \ge \Omega(|S|)
\]
for every $S \subseteq [T]$. Taking an expectation over $S \sim \Unif(2^{[T]})$ gives $\err_{\VCalsub}(\cD, \Atruthful(\cD)) = \Omega(T)$.

\paragraph{Dishonest forecasts with an $O(\sqrt{T})$ penalty.} On the other hand, constantly predicting $1/2$ gives an $O(\sqrt{T})$ error with respect to both $\VCal$ and $\VCalsub$.
To see this, we apply \Cref{prop:altform}:
\[
    \VCal(x, \tfrac12\cdot\vec{1}_T)
=   2 \cdot \sup_{\alpha \in [0, 1]}\max\left\{X_{-}^{(\alpha)} - \alpha N_{-}^{(\alpha)}, \alpha N_{+}^{(\alpha)} - X_{+}^{(\alpha)}\right\}.
\]
Since all predictions take value $1/2$, we have
\[
    X_{-}^{(\alpha)} - \alpha N_{-}^{(\alpha)}
=   \1{\alpha > 1/2}\cdot \left(\sum_{t=1}^{T}x_t - \alpha T\right)
\le \left|\sum_{t=1}^{T}x_t - \frac{T}{2}\right|
\]
and
\[
    \alpha N_{+}^{(\alpha)} - X_{+}^{(\alpha)}
=   \1{\alpha < 1/2}\cdot \left(\alpha T - \sum_{t=1}^{T}x_t\right)
\le \left|\sum_{t=1}^{T}x_t - \frac{T}{2}\right|.
\]
Therefore, we have
\[
    \VCal(x, \tfrac12\cdot\vec{1}_T)
\le 2\left|\sum_{t=1}^{T}(x_t - 1/2)\right|
\le 2\left|\sum_{t=1}^{T}(x_t - \pstar_t)\right| + 2\left|\sum_{t=1}^{T}(\pstar_t - 1/2)\right|.
\]

Since $\pstar_t \in [1/2 - \eps, 1/2 + \eps]$ holds for every $t \in [T]$, the second term above is always at most $2\eps T = O(\sqrt{T})$. The first term is the deviation of a $T$-step martingale with bounded differences, and is thus bounded by $O(\sqrt{T})$ in expectation. Therefore, we have
\[
    \OPT_{\VCal}(\cD)
\le \Ex{x \sim \cD}{\VCal(x, \tfrac12\cdot\vec{1}_T)}
\le O(\sqrt{T}).
\]

Again, the argument above can be easily extended to show that
\[
    \Ex{x \sim \cD}{\VCal(x|_S, \tfrac12\cdot\vec{1}_{|S|})}
\le O\left(\sqrt{|S|}\right)
\le O(\sqrt{T})
\]
holds for every fixed $S \subseteq [T]$. Taking an expectation over $S \sim \Unif(2^{[T]})$ gives
\[
    \OPT_{\VCalsub}(\cD)
\le \Ex{x \sim \cD, S \sim \Unif(2^{[T]})}{\VCal(x|_S, \tfrac12\cdot\vec{1}_{|S|})}
\le O(\sqrt{T}).
\]

This establishes the $O(\sqrt{T})$-$\Omega(T)$ truthfulness gaps for $\VCal$ and $\VCalsub$ and finishes the proof.
\end{proof}

We can generalize the proof of Proposition~\ref{prop:subsamptruth} beyond the U-Calibration measure to \emph{any} decision-theoretic calibration measure that satisfies the very weak condition of completeness (Definition~\ref{def:complete-and-sound}). 
Recall that a calibration measure is decision-theoretic if it upper bounds the external regret of an agent that acts on the forecaster's predictions (or equivalently upper bounds U-Calibration) and a calibration measure is complete if it is low for a base-rate forecaster when all events occur with the same constant probability.
This generalization, stated formally in following proposition, implies that---without smoothed analysis---the requirement of a calibration measure providing decision-theoretic guarantees is directly at odds with that of being truthful.

\begin{proposition}
\label{prop:completedecisionlowerbound}
    Consider any complete decision-theoretic calibration measure $\CM_T$.
    Suppose that its completeness guarantee for $\alpha = \tfrac 12$ takes the rate of $f$, i.e.,
    \[\Ex{x_1, \ldots, x_T \sim \Bern(1/2)}{\CM_T(x, \tfrac 12 \cdot\vec{1}_T)} = O(f(T)).\]
    Then, $\CM_T$ has a truthfulness gap of $O(f(T))$-$\Omega(T)$.
\end{proposition}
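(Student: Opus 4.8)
The plan is to reuse the binary-search event distribution from the proof of \Cref{prop:subsamptruth}, but to take the perturbation magnitude $\eps$ polynomially tiny, say $\eps \coloneqq T^{-3}$. The point is that the $\Omega(T)$ lower bound on the truthful forecaster's error holds for \emph{every} $\eps \in (0,1/4)$ and does not degrade as $\eps \to 0$, whereas shrinking $\eps$ makes the induced event distribution $\cD$ statistically indistinguishable from the uniform distribution on $\{0,1\}^T$ --- which is precisely the distribution the completeness guarantee speaks about. Since $\CM_T$ is a black box here (it need not be continuous; indeed, by the discussion in \Cref{sec:u-calibration} a decision-theoretic measure cannot be), we cannot evaluate a strategic forecaster's error directly as in \Cref{prop:subsamptruth}; instead we transport the completeness bound along this statistical closeness.

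Concretely, define $\cD$ by $\pstar_1 = 1/2$ and $\pstar_{t+1} = \pstar_t + \eps/2^t$ if $x_t = 1$ and $\pstar_{t+1} = \pstar_t - \eps/2^t$ if $x_t = 0$, so that every $\pstar_t$ lies in $[1/2-\eps, 1/2+\eps] \subseteq [1/4, 3/4]$. For the truthful forecaster, set $\alpha^\star \coloneqq \pstar_{T+1}$; exactly as in \Cref{prop:subsamptruth}, for every realization of $(x,\pstar)$ and every $t \in [T]$ we have $\pstar_t \ne \alpha^\star$, while $\pstar_t < \alpha^\star \Rightarrow x_t = 1$ and $\pstar_t > \alpha^\star \Rightarrow x_t = 0$. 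Evaluating \Cref{prop:altform} at $\alpha = \alpha^\star$ (so that $X_-^{(\alpha)} = N_-^{(\alpha)}$, $X_+^{(\alpha)} = 0$, and $N_-^{(\alpha)} + N_+^{(\alpha)} = T$), together with $\alpha^\star \in [1/4,3/4]$, gives $\VCal(x,\pstar) \ge 2 \cdot \tfrac14 \max\{N_-^{(\alpha)}, N_+^{(\alpha)}\} \ge T/4$ deterministically. Since $\CM$ is decision-theoretic and $\UCal \ge \VCal$ (\Cref{lemma:UCal-vs-VCal}), this yields $\CM(x,\pstar) \ge \Omega(1)\cdot\UCal(x,\pstar) = \Omega(T)$ for every realization, and hence $\err_{\CM}(\cD, \Atruthful(\cD)) = \Omega(T)$.

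For the upper bound on $\OPT_{\CM}(\cD)$, consider the forecaster that always predicts $1/2$; it induces the pair $(x, \tfrac12\vec{1}_T)$ with $x \sim \cD$. The conditional law of $x_t$ given $x_{1:(t-1)}$ is $\Bern(\pstar_t)$ under $\cD$ and $\Bern(1/2)$ under $\Unif(\{0,1\}^T)$, and these differ in total variation by $|\pstar_t - 1/2| \le \eps$; a standard hybrid argument (the chain rule for total variation) then gives $d_{\mathrm{TV}}(\cD, \Unif(\{0,1\}^T)) \le \eps T$. Because $\CM_T$ takes values in $[0,T]$, passing between these two laws changes the expectation of $\CM_T(\cdot, \tfrac12\vec{1}_T)$ by at most $T\cdot \eps T = \eps T^2 \le 1$, so by the completeness rate $f$,
\[
\OPT_{\CM}(\cD) \le \err_{\CM}(\cD, \A) \le \Ex{x \sim \Unif(\{0,1\}^T)}{\CM_T(x, \tfrac12\vec{1}_T)} + 1 = O(f(T)),
\]
where $\A$ is the constant-$\tfrac12$ forecaster. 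Combining with the previous paragraph gives the $O(f(T))$-$\Omega(T)$ truthfulness gap.

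The only delicate step is the last one: since $\CM$ is not continuous, we cannot perturb $\cD$ entrywise and argue pointwise about $\CM$, which is what \Cref{prop:subsamptruth} does via the explicit form of $\VCal$. The fix is to use only total-variation closeness and the boundedness of $\CM_T$ --- a reduction that is insensitive to discontinuity --- and this is the sole place the exact completeness rate $f$ enters and the sole reason $\eps$ must be driven to $o(1/T)$; the rest is a verbatim reprise of the binary-search analysis.
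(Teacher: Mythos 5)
Your proposal is correct and follows essentially the same route as the paper: the binary-search distribution with a tiny perturbation, the deterministic $\Omega(T)$ lower bound for the truthful forecaster via \Cref{prop:altform} and the decision-theoretic property, and a total-variation-plus-boundedness transfer of the completeness guarantee from $\Unif(\{0,1\}^T)$ to $\cD$ (the paper takes $\eps = O(f(T)/T^2)$ so the error term is directly $O(f(T))$, while your fixed $\eps = T^{-3}$ yields an additive $+1$, harmless here since any decision-theoretic measure has $f(T) = \Omega(\sqrt{T})$).
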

\begin{proof}
As in the proof of Proposition~\ref{prop:subsamptruth}, we will define the distribution $\cD$ by setting $\pstar_1 = \tfrac 12$ and
\[
\pstar_{t+1} = 
\smash{
\begin{cases} 
\pstar_t + \frac{\epsilon}{2^t} & \text{if } x_t = 1, \\
\pstar_t - \frac{\epsilon}{2^t} & \text{if } x_t = 0
\end{cases}
}
\]
for some small $\epsilon = O(f(T)/T^2)$.
We therefore have from Proposition~\ref{prop:subsamptruth} that the truthful forecaster's U-Calibration measure is lower bounded by $\EEs{x \sim \cD}{\UCal(x, \pstar)} \geq \Omega(T)$.
Because $\CM_T$ is a decision-theoretic calibration measure, it must upper bound the U-Calibration measure, meaning that
\[
    \err_{\CM_T}(\cD, \Atruthful(\cD))
=   \EEs{x \sim \cD}{\CM_T(x, \pstar)}
\geq \Omega\left(\EEs{x \sim \cD}{\UCal(x, \pstar)}\right) \geq \Omega(T).\]

To upper bound the calibration measure for the non-truthful forecaster, we first observe that the construction of $\cD$ guarantees $|p^\star_t - 1/2| \leq \epsilon$ for all $t \in [T]$.
Thus, the total variation distance between $\cD$ and $\Unif(\{0, 1\}^T)$ is $O(\epsilon T)$. Since $\CM_T(\cdot, \cdot)$ takes value in $[0, T]$, by our choice of $\epsilon = O(f(T)/T^2)$ and the completeness of $\CM$, we have
\[
    \EEs{x \sim \cD}{\CM_T(x,  1/2\cdot\vec{1}_T)}
\le \EEs{x \sim \Unif(\{0, 1\}^T)}{\CM_T(x,  1/2\cdot\vec{1}_T)} + T \cdot O(\eps T)
= O(f(T)).
\]
\end{proof}

However, the existence of this conflict---and the proof of Proposition~\ref{prop:completedecisionlowerbound}---hinges on the assumption that an adversary has the ability to choose the event distribution $\cD$, in particular each conditional probability $\pstar_t$, in an arbitrarily precise way so as to exploit discontinuity.
As we will later see, under smoothed analysis where an adversary has limited precision in choosing $\pstar_t$, this inevitability of non-truthfulness for decision-theoretic measures largely disappears.

\subsection{Non-truthfulness from Hedging}
We now demonstrate a second source of non-truthfulness in the U-Calibration measure that arises from the incentivization of \emph{hedging}: forecasters can reduce their expected U-Calibration measure by portraying their beliefs as being more uncertain than they truly are.
This form of non-truthfulness does not depend on the existence of an adversary that is able to choose the distribution $\cD$, or equivalently $\pstar_{1:T}$, with high precision.
It also cannot be remedied with the technique of randomly subsampling timesteps.

One might expect that hedging, by introducing bias to a forecaster's predictions, should increase the U-Calibration measure.
However, we can construct a setting, stated formally in the following proposition, where hedging results in an $\Omega(T)$ bias but a U-Calibration measure of $0$.
Moreover, we can design this example to be asymmetric where the forecaster only hedges its predictions in one direction, e.g., if it believes the event will likely occur.
\begin{proposition}
There is a sequence of events $x_{1:T}$ and predictions $p_{1:T}$ with zero U-Calibration measure $\UCal(x, p) = 0$ but linear bias $\left|\sum_{t=1}^{T}(x_t - p_t)\right| = \Omega(T)$.
\end{proposition}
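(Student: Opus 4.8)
The plan is to exhibit a simple two-phase construction in which all predictions are \emph{hedged toward $1/2$}, so that the V-Calibration error (and hence $\UCal$, by \Cref{lemma:UCal-vs-VCal} and \Cref{prop:altform}) is exactly zero, yet the aggregate bias $\sum_t (x_t - p_t)$ is linear. Concretely, take $T$ even, and for the first half of timesteps $t \le T/2$ set $(x_t, p_t) = (1, 3/4)$, while for the second half $t > T/2$ set $(x_t, p_t) = (0, 1/4)$. Then $\sum_{t=1}^T (x_t - p_t) = (T/2)(1 - 3/4) + (T/2)(0 - 1/4) = 0$, which is \emph{not} what we want — so instead I would make the construction asymmetric: keep $(x_t,p_t) = (1, 3/4)$ on the first half but on the second half also predict a value that is hedged downward relative to the truth. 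The cleanest choice is $(x_t, p_t) = (1, 3/4)$ for \emph{all} $t \in [T]$: then $\sum_t (x_t - p_t) = T/4 = \Omega(T)$, the bias is linear and one-directional, and it remains only to check $\UCal(x, p) = 0$.

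The key step is verifying $\UCal(x,p) = 0$ via \Cref{prop:altform}. With $p_t \equiv 3/4$ and $x_t \equiv 1$, for any threshold $\alpha \in [0,1]$ we have: if $\alpha \ge 3/4$ then $N_-^{(\alpha)} = T$, $X_-^{(\alpha)} = T$, $N_+^{(\alpha)} = X_+^{(\alpha)} = 0$, so $X_-^{(\alpha)} - \alpha N_-^{(\alpha)} = (1-\alpha)T \ge 0$ — this is \emph{positive}, not zero, so $\UCal > 0$ here. That rules out the all-$3/4$ construction too. The fix is to use the standard two-phase hedging example but calibrated so the per-half biases do not cancel: set $(x_t, p_t) = (1, 1/2)$ for $t \le T/2$ and $(x_t, p_t) = (0, 1/4)$ for $t > T/2$. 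Now the bias is $(T/2)(1/2) + (T/2)(-1/4) = T/8 = \Omega(T)$, with the forecaster hedging upward only in the second phase (predicting $1/4$ when the truth is $0$, i.e., exaggerating uncertainty in one direction). For $\UCal$: using \Cref{prop:altform}, note the predictions take only the values $1/2$ and $1/4$. For $\alpha \in [1/4, 1/2)$: $N_-^{(\alpha)} = T/2$ (the second-phase steps), $X_-^{(\alpha)} = 0$, $N_+^{(\alpha)} = T/2$, $X_+^{(\alpha)} = T/2$; so $X_-^{(\alpha)} - \alpha N_-^{(\alpha)} = -\alpha T/2 < 0$ and $\alpha N_+^{(\alpha)} - X_+^{(\alpha)} = (\alpha - 1)T/2 < 0$. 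For $\alpha \ge 1/2$: $N_-^{(\alpha)} = T$, $X_-^{(\alpha)} = T/2$, $N_+^{(\alpha)} = X_+^{(\alpha)} = 0$; so $X_-^{(\alpha)} - \alpha N_-^{(\alpha)} = T/2 - \alpha T \le -T/2 \cdot (2\alpha - 1) \le 0$ when $\alpha \ge 1/2$. For $\alpha < 1/4$: all indicators vanish. Hence the supremum of the max of the two quantities is $\le 0$, and since it is trivially $\ge 0$ (take $\alpha = 0$), we get $\VCal(x,p) = 0$, so $\UCal(x,p) = 0$.

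I expect the main obstacle to be getting the constants in the two-phase construction to line up so that (i) every relevant value of $\alpha$ gives a nonpositive V-Calibration contribution (forcing $\UCal = 0$, not merely $O(1)$ or $o(T)$) while simultaneously (ii) the two phases contribute bias of the \emph{same sign} rather than cancelling. The tension is that making $\UCal = 0$ wants the predictions to sit ``outside'' the empirical frequencies in a consistent way, which naively pushes the two phases' biases in opposite directions. The resolution — predicting $1/2$ on a run of $1$'s and $1/4$ on a run of $0$'s — works because on the run of ones the bias $x_t - p_t = 1/2 > 0$ is large while on the run of zeros the bias $x_t - p_t = -1/4$ is smaller in magnitude, so they sum to $\Omega(T)$, and the ordering of predicted values ($1/4 < 1/2$) combined with the monotone placement of outcomes (zeros at the lower prediction, ones at the higher) is exactly what makes every thresholded partial sum in \Cref{prop:altform} point the ``safe'' way. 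Once the construction is fixed, the verification is a finite case analysis over the at most two prediction values, which is routine.
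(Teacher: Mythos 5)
Your final construction ($(x_t,p_t)=(1,\tfrac12)$ for $t\le T/2$ and $(0,\tfrac14)$ for $t>T/2$) is correct and takes essentially the same route as the paper, which verifies $\UCal=0$ for the repeated two-step block $x=(0,1)$, $p=(0,\tfrac34)$ via \Cref{lemma:UCal-vs-VCal} and the case analysis of \Cref{prop:altform}. The only nit is that at the boundary values $\alpha=\tfrac14$ and $\alpha=\tfrac12$ the strict inequalities in \Cref{prop:altform} give counts different from your case groupings ($N_-^{(\alpha)}=0$ and $N_-^{(\alpha)}=T/2$, respectively, and likewise the indicators do not all vanish for $\alpha<\tfrac14$), but in each of these cases the relevant maximum is still $\le 0$, so the conclusion stands.
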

\begin{proof}
We will construct a simple example for $T = 2$, which can be repeated an arbitrary number of times to attain the claim for arbitrary $T$.
Consider the events $x = (0, 1)$ and forecasts $p = (0, 3/4)$. To show that $\UCal(x, p) = 0$, by \Cref{lemma:UCal-vs-VCal,prop:altform}, it suffices to prove that the following holds for every $\alpha \in [0, 1]$:
\[
    \max\left\{X_{-}^{(\alpha)} - \alpha N_{-}^{(\alpha)}, \alpha N_{+}^{(\alpha)} - X_{+}^{(\alpha)}\right\} \le 0,
\]
where $N_{-}^{(\alpha)}$ (resp., $N_{+}^{(\alpha)}$) denotes the number of timesteps on which the prediction is strictly below (resp., above) $\alpha$, and $X_{-}^{(\alpha)}$ (resp., $X_{+}^{(\alpha)}$) denotes the sum of the events on those steps. This can be done via the following cases analysis:
\begin{itemize}
    \item When $\alpha = 0$, we have $N_{-} = X_{-} = 0$ and $N_{+} = X_{+} = 1$. This gives $X_{-} - \alpha N_{-} = 0$ and $\alpha N_{+} - X_{+} = -1 < 0$.
    \item When $\alpha \in (0, 3/4)$, we have $N_{-} = 1$, $X_{-} = 0$ and $N_{+} = X_{+} = 1$. This gives $X_{-} - \alpha N_{-} = -\alpha < 0$ and $\alpha N_{+} - X_{+} = \alpha - 1 < 0$.
    \item When $\alpha = 3/4$, we have $N_{-} = 1$, $X_{-} = 0$, and $N_{+} = X_{+} = 0$. This gives $X_{-} - \alpha N_{-} = -\alpha < 0$ and $\alpha N_{+} - X_{+} = 0$.
    \item When $\alpha \in (3/4, 1]$, we have $N_{-} = 2$, $X_{-} = 1$, and $N_{+} = X_{+} = 0$. This gives $X_{-} - \alpha N_{-} = 1 - 2\alpha < 0$ and $\alpha N_{+} - X_{+} = 0$.
\end{itemize}

Therefore, we have $\UCal(x,p) = 0$.
On the other hand, the total bias of the predictions is $|(x_1 + x_2) - (p_1 + p_2)| = 1/4$.
Repeated $T / 2$ times, this gives a bias of $T/8 = \Omega(T)$ and a U-Calibration measure of $0$.
\end{proof}

This example demonstrates that the bias of hedging predictions does not negatively affect the U-Calibration measure.
On the other hand, hedging often provides an explicit advantage.
For example, hedging an honest prediction of $\pstar_t = 1$ down to $p_t = 3/4$ may not incur any cost, but may instead be of benefit for a previous or future timestep $t'$ where the event outcome is high variance with $\pstar_{t'} = 3/4$ and the forecaster seeks to ``dilute'' the variance.
In this way, we can construct an $e^{-\Omega(T)}$-$\Omega(\sqrt{T})$ truthfulness gap for the U-Calibration measure and its subsampled variant.
Later, we will extend this construction to the smoothed setting.

\begin{proposition}
    \label{prop:simplesubsamptruth}
    Both the U-Calibration measure $\UCal$ and its subsampled version $\UCalsub$ have an  $e^{-\Omega(T)}$-$\Omega(\sqrt{T})$ truthfulness gap.
\end{proposition}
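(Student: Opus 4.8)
The plan is to run the \emph{hedging} construction from the technical overview and verify the two sides of the gap, working throughout with $\VCal$ (and $\VCalsub(x,p)\coloneqq\Ex{S\sim\Unif(2^{[T]})}{\VCal(x|_S,p|_S)}$) rather than $\UCal$, since $\VCal\le\UCal\le 2\VCal$ by \Cref{lemma:UCal-vs-VCal} (and hence $\VCalsub\le\UCalsub\le 2\VCalsub$). Let $\cD$ be the \emph{product} distribution with $\pstar_t=1/5$ for $t\le T/2$ and $\pstar_t=4/5$ for $t>T/2$, each $x_t\sim\Bern(\pstar_t)$ independently, so that $\Atruthful(\cD)$ predicts $1/5$ on the first half and $4/5$ on the second. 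The competing \emph{strategic} forecaster $\A$ predicts $p_t=2/5$ on the first half and $p_t=3/5$ on the second. All the estimates below are obtained by plugging these forecasts into the alternative form of $\VCal$ in \Cref{prop:altform}.

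\emph{The truthful forecaster is penalized $\Omega(\sqrt T)$.} In \Cref{prop:altform} take $\alpha\to(1/5)^+$: then $N_-^{(\alpha)}$ is exactly the set of first-half steps, $X_-^{(\alpha)}=\sum_{t\le T/2}x_t$, so $\VCal(x,\pstar)\ge 2\max\{B_1,0\}$ where $B_1\coloneqq\sum_{t\le T/2}(x_t-1/5)$. Since $B_1$ is a centered sum of $T/2$ independent bounded terms of variance $\Theta(T)$, a standard Paley--Zygmund / second-moment argument gives $\Ex{}{\max\{B_1,0\}}=\tfrac12\Ex{}{|B_1|}=\Omega(\sqrt T)$, so $\err_{\VCal}(\cD,\Atruthful(\cD))=\Omega(\sqrt T)$. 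Applying the same $\alpha$ to a subsample $S$ gives $\VCal(x|_S,\pstar|_S)\ge 2\max\{B_1^S,0\}$ with $B_1^S\coloneqq\sum_{t\in S\cap[T/2]}(x_t-1/5)$; conditioned on $S$ its positive part has expectation $\Omega(\sqrt{|S\cap[T/2]|})$, and since $|S\cap[T/2]|\ge T/8$ with probability $1-e^{-\Omega(T)}$, averaging over $S$ yields $\err_{\VCalsub}(\cD,\Atruthful(\cD))=\Omega(\sqrt T)$.

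\emph{The strategic forecaster has error $e^{-\Omega(T)}$.} Here I want $\VCal(x,p)=0$ off an exponentially rare event. The predictions take only the values $2/5$ and $3/5$, so in \Cref{prop:altform} the supremum over $\alpha\in[0,1]$ is attained (in the limit) at one of the finitely many breakpoints $\{0,(2/5)^\pm,(3/5)^\pm,1\}$, and in each regime $\max\{X_-^{(\alpha)}-\alpha N_-^{(\alpha)},\,\alpha N_+^{(\alpha)}-X_+^{(\alpha)}\}$ is an affine function of $\sum_{t\le T/2}x_t$ (mean $T/10$) and $\sum_{t>T/2}x_t$ (mean $2T/5$) with expectation $\le-\Omega(T)$ --- e.g.\ at $\alpha\to(2/5)^+$ it equals $\max\{\sum_{t\le T/2}x_t-T/5,\;T/5-\sum_{t>T/2}x_t\}$, both arguments having mean $-\Omega(T)$. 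A Chernoff bound in each regime plus a union bound gives $\pr{}{\VCal(x,p)>0}\le e^{-\Omega(T)}$, and as $\VCal\le O(T)$ this yields $\err_{\UCal}(\cD,\A)\le 2\,\err_{\VCal}(\cD,\A)\le e^{-\Omega(T)}$, hence $\OPT_{\UCal}(\cD)\le e^{-\Omega(T)}$.

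\emph{The subsampled case --- the main obstacle.} The delicate point is $\err_{\VCalsub}(\cD,\A)$: for a subsample $S$ the breakpoint analysis goes through with $T/2$ replaced by $n_1\coloneqq|S\cap[T/2]|$ and $n_2\coloneqq|S\cap(T/2,T]|$, but the two extreme-$\alpha$ terms $(2/5)(n_1+n_2)-\sum_{t\in S}x_t$ and $\sum_{t\in S}x_t-(3/5)(n_1+n_2)$ have expectation $\le-\Omega(T)$ only when $n_1$ and $n_2$ are comparable --- a subsample concentrated on a single half genuinely makes $p|_S$ miscalibrated. I handle this by splitting $\Ex{S}{\cdot}$ into ``balanced'' $S$ (say $n_1,n_2\in[T/5,T/3]$), on which the per-regime Chernoff bounds still give $\VCal(x|_S,p|_S)=0$ with probability $1-e^{-\Omega(T)}$, and the ``unbalanced'' complement, which under $S\sim\Unif(2^{[T]})$ (so $n_1,n_2\sim\Binomial(T/2,1/2)$) has probability $e^{-\Omega(T)}$ and on which $\VCal(x|_S,p|_S)\le|S|\le T$ trivially. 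Both contributions are $e^{-\Omega(T)}$, so $\OPT_{\UCalsub}(\cD)\le 2\,\err_{\VCalsub}(\cD,\A)\le e^{-\Omega(T)}$. Together with the $\Omega(\sqrt T)$ truthful lower bounds, this gives the $e^{-\Omega(T)}$-$\Omega(\sqrt T)$ truthfulness gap for both $\UCal$ and $\UCalsub$.
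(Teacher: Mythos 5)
Your proposal is correct and follows essentially the same route as the paper: the same $\pstar=(1/5,\dots,4/5)$ versus $p=(2/5,\dots,3/5)$ construction, reduction to $\VCal$ via \Cref{lemma:UCal-vs-VCal}, the breakpoint case analysis from \Cref{prop:altform} with Chernoff bounds for the strategic forecaster, and binomial anti-concentration near $\alpha\to(1/5)^+$ for the truthful $\Omega(\sqrt T)$ bound. The only (cosmetic) difference is in the subsampled strategic bound, where you condition on a ``balanced'' subsample $S$ and discard the exponentially rare unbalanced ones, whereas the paper applies Chernoff bounds to $N_1,N_2,X_1,X_2$ jointly over the randomness of $(x,S)$; both yield the same $e^{-\Omega(T)}$ conclusion.
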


\begin{proof}
Again, we analyze V-Calibration rather than U-Calibration, i.e., we will establish the truthfulness gap of $\VCal$ and $\VCalsub$, and the proposition would then follow from \Cref{lemma:UCal-vs-VCal}.

Let $T$ be an even number. Let $\pstar = (\tfrac 15, \tfrac 15, \ldots, \tfrac 15, \tfrac 45, \tfrac 45, \ldots, \tfrac 45)$ be the vector with $\tfrac T2$ copies of $\tfrac 15$ and $\tfrac 45$ each and $\D$ be the product distribution $\prod_{t=1}^{T}\Bern(\pstar_t)$. Similarly, let $p$ be the alternative ``non-truthful'' prediction $(\tfrac 25, \tfrac 25, \ldots, \tfrac 25, \tfrac 35, \tfrac 35, \ldots, \tfrac 35)$ where again both $\tfrac 25$ and $\tfrac 35$ appear exactly $\tfrac T2$ times. We will show that, under both $\VCal$ and $\VCalsub$, the truthful forecaster $\Atruthful(\cD)$ that predicts according to $\pstar$ receives an $\Omega(\sqrt{T})$ penalty, whereas predicting according to $p$ leads to an $e^{-\Omega(T)}$ penalty.

\paragraph{Truthful forecasts lead to an $\Omega(\sqrt{T})$ penalty.} We start by showing that $\err_{\VCal}(\cD, \Atruthful(\cD))$ and $\err_{\VCalsub}(\cD, \Atruthful(\cD))$ are both lower bounded by $\Omega(\sqrt{T})$. Towards applying \Cref{prop:altform}, we fix $\alpha = 1/5 + \eps$ for an arbitrarily small $\eps > 0$, and aim to lower bound the quantity
\[
    X_{-}^{(\alpha)} - \alpha\cdot N_{-}^{(\alpha)}
=   X_{-}^{(\alpha)} - (1/5 + \eps)\cdot\frac{T}{2},
\]
where $N_{-}^{(\alpha)} = T / 2$ is the number of timesteps on which the prediction is strictly smaller than $\alpha = 1/5 + \eps$, and $X_{-}^{(\alpha)} = \sum_{t=1}^{T/2}x_t$ is the sum of the $T/2$ events on those steps.

By definition of $\cD$, $X_{-}^{(\alpha)}$ follows the distribution $\Binomial(T/2, 1/5)$. Then, it holds with probability $\Omega(1)$ that $X_{-}^{(\alpha)} \ge (T/2)\cdot(1/5) + \Omega(\sqrt{T}) = T / 10 + \Omega(\sqrt{T})$, i.e., $X_{-}^{(\alpha)}$ exceeds its mean by $\Omega(\sqrt{T})$. Conditioning on this event, by \Cref{prop:altform}, we have
\[
    \VCal(x, \pstar) \ge X_{-}^{(\alpha)} - (1/5 + \eps)\cdot\frac{T}{2}
\ge \Omega(\sqrt{T}) - \eps T.
\]
This implies that
\[
    \err_{\VCal}(\cD, \Atruthful(\cD)) = \Ex{x \sim \cD}{\VCal(x, \pstar)} \ge \Omega(1) \cdot [\Omega(\sqrt{T}) - \eps T].
\]
Taking $\eps \to 0^{+}$ proves $\err_{\VCal}(\cD, \Atruthful(\cD)) = \Omega(\sqrt{T})$.

For $\VCalsub$, we note that the argument above shows that, for any fixed $S \subseteq [T]$,
\[
    \Ex{x \sim \cD}{\VCal(x|_S, \pstar|_S)} \ge \Omega\left(\sqrt{|S \cap [T/2]|}\right).
\]
Then, over the randomness in $S \sim \Unif(2^{[T]})$, $|S \cap [T/2]|$ follows $\Binomial(T/2, 1/2)$. It follows that $\Ex{S \sim \Unif(2^{[T]})}{\sqrt{|S \cap [T/2]|}} = \Omega(\sqrt{T})$, and
\[
    \err_{\VCalsub}(\cD, \Atruthful(\cD))
=   \Ex{x \sim \cD, S \subseteq 2^{[T]}}{\VCal(x|_S, \pstar|_S)}
=   \Omega(\sqrt{T}).
\]

\paragraph{Dishonest forecasts lead to an exponentially small penalty.} Suppose that the forecaster strategically predicts according to $p = (\tfrac 25, \tfrac 25, \ldots, \tfrac 25, \tfrac 35, \tfrac 35, \ldots, \tfrac 35)$ instead. Again, we start with the analysis for $\VCal$. Let $X_1 \coloneq \sum_{t=1}^{T/2}x_t$ and $X_2 \coloneq \sum_{t=T/2+1}^{T}x_t$ denote the sums of the first and second halves of the event sequence, respectively. Note that over the randomness in $x \sim \cD$, $X_1$ follows $\Binomial(T/2, 1/5)$ and $X_2$ follows $\Binomial(T/2, 4/5)$. By a Chernoff bound, with probability at least $1 - e^{-\Omega(T)}$, the following three inequalities hold simultaneously:
\begin{equation}\label{eq:UCal-hedging-three-ineq}
    X_1 \le \frac{T}{5}, \quad X_2 \ge \frac{3T}{10}, \quad \frac{2T}{5} \le X_1 + X_2 \le \frac{3T}{5}.
\end{equation}

It remains to show that the inequalities above together imply $\VCal(x, p) = 0$. Assuming this, since the V-Calibration error is at most $T$, we would then have $\Ex{x \sim \cD}{\VCal(x, p)} \le e^{-\Omega(T)}\cdot T = e^{-\Omega(T)}$ as desired. By \Cref{prop:altform}, it suffices to show that the inequalities in \eqref{eq:UCal-hedging-three-ineq} together imply that, for every $\alpha \in [0, 1]$,
\[
    \max\left\{X_{-}^{(\alpha)} - \alpha\cdot N_{-}^{(\alpha)}, \alpha\cdot N_{+}^{(\alpha)} - X_{+}^{(\alpha)}\right\} \le 0.
\]
This can be done via the following case analysis:
\begin{itemize}
    \item When $\alpha \in [0, 2/5)$, we have $N_{-}^{(\alpha)} = X_{-}^{(\alpha)} = 0$, $N_{+}^{(\alpha)} = T$ and $X_{+}^{(\alpha)} = X_1 + X_2$. It follows that
    \[
        X_{-}^{(\alpha)} - \alpha\cdot N_{-}^{(\alpha)} = 0
    \quad \text{and} \quad
    \alpha\cdot N_{+}^{(\alpha)} - X_{+}^{(\alpha)}
    =   \alpha T - (X_1 + X_2)
    \le \frac{2}{5}T - \frac{2T}{5}
    =   0.
    \]
    \item When $\alpha = 2/5$, we have $N_{-}^{(\alpha)} = X_{-}^{(\alpha)} = 0$, $N_{+}^{(\alpha)} = T/2$ and $X_{+}^{(\alpha)} = X_2$. It follows that
    \[
        X_{-}^{(\alpha)} - \alpha\cdot N_{-}^{(\alpha)} = 0
    \quad \text{and} \quad
    \alpha\cdot N_{+}^{(\alpha)} - X_{+}^{(\alpha)}
    =   \alpha T/2 - X_2
    \le \frac{T}{5} - \frac{3T}{10}
    <   0.
    \]
    \item When $\alpha \in (2/5, 3/5)$, we have $N_{-}^{(\alpha)} = N_{+}^{(\alpha)} = T/2$, $X_{-}^{(\alpha)} = X_1$, and $X_{+}^{(\alpha)} = X_2$. It follows that
    \[
        X_{-}^{(\alpha)} - \alpha\cdot N_{-}^{(\alpha)} = X_1 - \alpha\cdot\frac{T}{2}
    \le \frac{T}{5} - \frac{2}{5}\cdot\frac{T}{2}
    =   0
    \]
    and
    \[
    \alpha\cdot N_{+}^{(\alpha)} - X_{+}^{(\alpha)}
    =   \alpha T/2 - X_2
    \le \frac{3}{5}\cdot\frac{T}{2} - \frac{3T}{10}
    =   0.
    \]
    \item The remaining cases that $\alpha = 3/5$ and $\alpha \in (3/5, 1]$ hold by symmetry.
\end{itemize}
Therefore, we conclude that
\[
    \OPT_{\VCal}(\cD)
\le \Ex{x \sim \cD}{\VCal(x, p)}
\le e^{-\Omega(T)}.
\]

Towards upper bounding $\OPT_{\VCalsub}(\cD)$, we analyze the quantity
\[
    \Ex{x \sim \cD}{\VCalsub(x, p)}
=   \Ex{x \sim \cD, S \sim \Unif(2^{[T]})}{\VCal(x|_S, p|_S)}.
\]
As in the analysis for $\VCal$, we will identify a high-probability event (over the randomness in both $x$ and $S$) that: (1) happens with probability $1 - e^{-\Omega(T)}$; (2) implies $\VCal(x|_S, p|_S) = 0$. It would then immediately follow that $\Ex{x \sim \cD}{\VCalsub(x, p)} = e^{-\Omega(T)}$.

Let $N_1 \coloneqq |S \cap \{1, 2, \ldots, T/2\}|$ (resp., $N_2 \coloneqq |S \cap \{T/2 + 1, T/2 + 2, \ldots, T\}|$) denote the number of timesteps among the first half (resp., the second half) of the sequence that get subsampled into $S$. Let $X_1 \coloneqq \sum_{t \in S}x_t\cdot \1{t \le T/2}$ and $X_2 \coloneqq \sum_{t \in S}x_t \cdot \1{t > T/2}$ denote the sum of the events on those steps, respectively.

Note that each of $N_1$, $N_2$, $X_1$ and $X_2$ can be written as a sum of $\Omega(T)$ independent Bernoulli random variables. Furthermore, over the randomness in both $x$ and $S$, we have $\Ex{}{N_1} = \Ex{}{N_2} = T/4$, $\Ex{}{X_1} = T/20$ and $\Ex{}{X_2} = T/5$. Let $\eps \coloneqq 1/40$ be a small constant. By a Chernoff bound and the union bound, the following conditions hold simultaneously with probability $1 - e^{-\Omega(T)}$:
\begin{itemize}
    \item $N_1, N_2 \in [T/4 - \eps T, T/4 + \eps T]$.
    \item $X_1 \le T/20 + \eps T$ and $X_2 \ge T/5 - \eps T$.
    \item $X_1 + X_2 \in [T/4 - \eps T, T/4 + \eps T]$.
\end{itemize}

It remains to show that, assuming the conditions above, we have $\VCal(x|_S, p|_S) = 0$. Towards applying \Cref{prop:altform}, we analyze the quantity
\[
    \max\left\{X_{-}^{(\alpha)} - \alpha\cdot N_{-}^{(\alpha)}, \alpha\cdot N_{+}^{(\alpha)} - X_{+}^{(\alpha)}\right\}
\]
for $\alpha \in [0, 1]$ with respect to events $x|_S$ and predictions $p|_S$:
\begin{itemize}
    \item When $\alpha \in [0, 2/5)$, we have
    \[
        N_{-}^{(\alpha)} = 0, X_{-}^{(\alpha)} = 0, N_{+}^{(\alpha)} = N_1 + N_2, X_{+}^{(\alpha)} = X_1 + X_2.
    \]
    It follows that
    \[
        X_{-}^{(\alpha)} - \alpha\cdot N_{-}^{(\alpha)} = 0
    \]
    and
    \[
        \alpha\cdot N_{+}^{(\alpha)} - X_{+}^{(\alpha)}
    =   \alpha\cdot(N_1 + N_2) - (X_1 + X_2)
    \le \frac{2}{5} \cdot \left(\frac{T}{2} + 2\eps T\right) - \left(\frac{T}{4} - \eps T\right)
    =   -\frac{T}{20} + \frac{9}{5}\eps T
    <   0.
    \]
    \item When $\alpha = 2/5$, we have
    \[
        N_{-}^{(\alpha)} = 0, X_{-}^{(\alpha)} = 0, N_{+}^{(\alpha)} = N_2, X_{+}^{(\alpha)} = X_2.
    \]
    It follows that
    \[
        X_{-}^{(\alpha)} - \alpha\cdot N_{-}^{(\alpha)} = 0
    \]
    and
    \[
        \alpha\cdot N_{+}^{(\alpha)} - X_{+}^{(\alpha)}
    =   \frac{2}{5}\cdot N_2 - X_2
    \le \frac{2}{5} \cdot \left(\frac{T}{4} + \eps T\right) - \left(\frac{T}{5} - \eps T\right)
    =   -\frac{T}{10} + \frac{7}{5}\eps T
    <   0.
    \]
    \item When $\alpha \in (2/5, 3/5)$, we have
    \[
        N_{-}^{(\alpha)} = N_1, X_{-}^{(\alpha)} = X_1, N_{+}^{(\alpha)} = N_2, X_{+}^{(\alpha)} = X_2.
    \]
    It follows that
    \[
        X_{-}^{(\alpha)} - \alpha\cdot N_{-}^{(\alpha)}
    =   X_1 - \alpha \cdot N_1
    \le \frac{T}{20} + \eps T - \frac{2}{5}\cdot \left(\frac{T}{4} - \eps T\right)
    =   -\frac{T}{20} + \frac{7}{5}\eps T
    <   0.
    \]
    and
    \[
        \alpha\cdot N_{+}^{(\alpha)} - X_{+}^{(\alpha)}
    =   \alpha\cdot N_2 - X_2
    \le \frac{3}{5} \cdot \left(\frac{T}{4} + \eps T\right) - \left(\frac{T}{5} - \eps T\right)
    =   -\frac{T}{20} + \frac{8}{5}\eps T
    <   0.
    \]
    \item Finally, by symmetry, we have an upper bound of $0$ in the cases that $\alpha = 3/5$ and $\alpha \in (3/5, 1]$.
\end{itemize}

Therefore, we conclude that
\[
    \OPT_{\VCalsub}(\cD)
\le \Ex{x \sim \cD}{\VCalsub(x, p)}
\le e^{-\Omega(T)}.
\]
\end{proof}

By adding noises to the marginal probabilities, we can show that an $e^{-\Omega(T)}$-$\poly(T)$ truthfulness gap remains even if we assume a smoothed setting where an adversary cannot precisely choose the conditional probabilities $\pstar_{1:T}$.
Recall that, in the $c$-smoothed setting, the adversary is forced to sample each conditional probability $\pstar_t$ from a distribution with density upper bounded by $1/c$.

\begin{proposition}
\label{prop:subsampsmooth}
Both the U-Calibration measure $\UCal$ and its subsampled version $\UCalsub$ have an $e^{-\Omega(T)}$-$\Omega(T^{1/3})$ truthfulness gap, even when the event distribution is a product distribution with marginal probabilities perturbed by independent noises uniformly sampled from $[-c, c]$ for some constant $c \in (0, 1/5)$.
\end{proposition}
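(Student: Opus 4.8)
The plan is to lift the non-smoothed construction behind \Cref{prop:simplesubsamptruth} to the $c$-smoothed setting by replacing the two fixed conditional probabilities $1/5$ and $4/5$ with uniform perturbations of them. Concretely, let $\cP$ be the instance under which the nature, independently of the history, draws $\pstar_t \sim \Unif([1/5 - c, 1/5 + c])$ for each $t \le T/2$ and $\pstar_t \sim \Unif([4/5 - c, 4/5 + c])$ for each $t > T/2$, then samples $x_t \sim \Bern(\pstar_t)$. Since $c < 1/5$ the two supports lie strictly inside $(0,1)$ and each perturbed marginal has density $\tfrac{1}{2c} \le \tfrac1c$, so $\cP$ is a legitimate $c$-smoothed instance and, after integrating out $\pstar$, a product distribution with marginals $1/5, 4/5$ perturbed by $\Unif([-c, c])$ noise, as required. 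As in \Cref{prop:simplesubsamptruth}, I will establish the gap for $\VCal$ and $\VCalsub(x,p) \coloneqq \Ex{S \sim \Unif(2^{[T]})}{\VCal(x|_S, p|_S)}$ and then transfer it to $\UCal$ and $\UCalsub$ via \Cref{lemma:UCal-vs-VCal} and its pointwise consequence for the subsampled versions (obtained by averaging the inequality over $S$). All hidden constants are allowed to depend on $c$.

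For the strategic forecaster I reuse the hedging predictions $p_t = 2/5$ for $t \le T/2$ and $p_t = 3/5$ for $t > T/2$. The proof of \Cref{prop:simplesubsamptruth} only uses that the $x_t$ are independent with mean $1/5$ on the first half and $4/5$ on the second half, and this still holds here because $\EE{\pstar_t}$ equals $1/5$ (resp.\ $4/5$). Hence a Chernoff bound shows that the inequalities in \eqref{eq:UCal-hedging-three-ineq}, and their subsampled analogues from the proof of \Cref{prop:simplesubsamptruth}, hold with probability $1 - e^{-\Omega(T)}$, on which event the same case analysis via \Cref{prop:altform} gives $\VCal(x, p) = 0$ (resp.\ $\VCal(x|_S, p|_S) = 0$). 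Since $\VCal \le T$, this yields $\OPT_{\VCal}(\cP), \OPT_{\VCalsub}(\cP) \le e^{-\Omega(T)}$.

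The heart of the argument is lower bounding the truthful forecaster's error ($p_t = \pstar_t$). The threshold $\alpha = 1/5 + c$ is too crude: it gives $N_{-}^{(\alpha)} = T/2$ but $X_{-}^{(\alpha)}$ concentrates near $T/10 < \alpha N_{-}^{(\alpha)}$, so $X_{-}^{(\alpha)} - \alpha N_{-}^{(\alpha)}$ is negative. Instead I take $\alpha \coloneqq 1/5 - (1 - \gamma)c$ with $\gamma = c_2 T^{-1/3}$ for a small constant $c_2 = c_2(c)$. Because $\alpha < 1/5 < 4/5 - c$, the second half contributes nothing, and $N_{-}^{(\alpha)} = |I|$, $X_{-}^{(\alpha)} = \sum_{t \in I} x_t$ for $I \coloneqq \{t \le T/2 : \pstar_t < \alpha\}$; here $|I| \sim \Binomial(T/2, \gamma/2)$, so $|I| \in [\gamma T/8, \gamma T/2]$ with probability $1 - o(1)$. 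Writing $X_{-}^{(\alpha)} - \alpha N_{-}^{(\alpha)} = \sum_{t \in I}(x_t - \pstar_t) + \sum_{t \in I}(\pstar_t - \alpha)$, the second sum lies deterministically in $[-|I|\gamma c, 0]$, and conditioned on $\pstar_{1:T}$ the first sum is a sum of $|I|$ independent mean-zero terms bounded by $1$ with variance $\ge c_3 |I|$ for a constant $c_3 = c_3(c) > 0$ (each $\pstar_t$ with $t \in I$ lies in $[1/5 - c, 1/5)$, hence is bounded away from $0$ and $1$); by a standard anti-concentration bound (e.g.\ Berry--Esseen) it exceeds $\sqrt{c_3 |I|}$ with probability $\Omega(1)$ once $|I|$ is large. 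On the event $|I| \in [\gamma T/8, \gamma T/2]$ we have $|I|\gamma c = O(\gamma^2 T) = O(c_2^2 T^{1/3})$ and $\sqrt{c_3 |I|} = \Omega(\sqrt{c_2 c_3}\, T^{1/3})$, so choosing $c_2$ small enough makes the bias at most half the fluctuation; thus with probability $\Omega(1)$, $X_{-}^{(\alpha)} - \alpha N_{-}^{(\alpha)} \ge \tfrac12 \sqrt{c_3 |I|} = \Omega(T^{1/3})$, and \Cref{prop:altform} gives $\VCal(x, \pstar) = \Omega(T^{1/3})$ with probability $\Omega(1)$, hence $\err_{\VCal}(\cP, \Atruthful) = \Omega(T^{1/3})$. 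Running the same computation with $I$ replaced by $I \cap S$ (whose size is $\Binomial(T/2, \gamma/4)$ and for which the same small $\gamma$ works) gives $\VCal(x|_S, \pstar|_S) = \Omega(T^{1/3})$ with probability $\Omega(1)$ over $(\pstar, x, S)$, so $\err_{\VCalsub}(\cP, \Atruthful) = \Omega(T^{1/3})$. Combining with the dishonest bounds and \Cref{lemma:UCal-vs-VCal} finishes the proof.

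I expect the truthful lower bound to be the only delicate part. Since $\EE{X_{-}^{(\alpha)} - \alpha N_{-}^{(\alpha)}}$ is negative for every admissible $\alpha$, one genuinely must exploit an anti-concentration event rather than argue in expectation, and the balance between the $\Theta(\gamma^2 T)$ downward bias and the $\Theta(\sqrt{\gamma T})$ fluctuation is exactly what forces $\gamma = \Theta(T^{-1/3})$ and hence yields the weaker $\Omega(T^{1/3})$ gap compared with the $\Omega(\sqrt T)$ of the non-smoothed case. One also has to verify that the relevant counts $|I|$ (resp.\ $|I \cap S|$) land in their typical $\Theta(\gamma T)$ window, so that the fluctuation is on the claimed $\Theta(T^{1/3})$ scale; everything else mirrors the proofs of \Cref{prop:simplesubsamptruth} and \Cref{prop:altform}.
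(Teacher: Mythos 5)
Your proposal is correct and follows essentially the same route as the paper: the identical smoothed construction, the same hedging predictions $(2/5,\dots,3/5)$ reducing the dishonest forecaster's error to $e^{-\Omega(T)}$ via the product-marginal argument, and the same choice of threshold $\alpha \approx 1/5 - (1-\Theta(\gamma))c$ with $\gamma = \Theta(T^{-1/3})$ balancing the $O(\gamma^2 T)$ downward bias against the $\Theta(\sqrt{\gamma T})$ fluctuation, established via a CLT/Berry--Esseen anti-concentration step and then repeated with $S$-subsampling. The only differences are cosmetic (conditioning on $\pstar$ and splitting off $\sum_{t\in I}(x_t-\pstar_t)$ rather than using $X_-^{(\alpha)} \mid N_-^{(\alpha)} \sim \Binomial(N_-^{(\alpha)}, \tfrac15-(1-\gamma)c)$, and a reparametrized $\gamma$), so no substantive gap.
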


Intuitively, the constructions of both \Cref{prop:simplesubsamptruth,prop:subsampsmooth} involve incentivizing forecasters to exaggerate the uncertainty of their forecasts.
Hedging one's forecasts allows any and all downstream agents to avoid paying for the variance of the events and attain zero regret with high probability.
Notably, this form of non-truthfulness that arises when studying decision-theoretic calibration measures is qualitatively different from the non-truthfulness shown by \cite{HQYZ24}, which is more concerned with incentivizing forecasters to dishonestly ``patch up'' previous erroneous forecasts than with incentivizing forecasters to proactively hedge for (potentially future) uncertainty.

\begin{proof}
Let $T$ be an even number.
We construct a $c$-smoothed prior $\cP$ by sampling each event probability $\pstar_t$ in the first half of timesteps (i.e., $t \le T/2$) from the uniform distribution over $[\tfrac 15-c, \tfrac 15+c]$, and similarly sampling each $\pstar_t$ independently and uniformly from $[\tfrac 45 - c, \tfrac 45 + c]$ for $t \ge T/2+1$. More formally, for every $t \in [T]$ and $b_{1:(t-1)} \in \{0, 1\}^{t-1}$, we have
\[
    \cP(b_{1:(t-1)}) = \begin{cases} \Unif([\tfrac15 - c, \tfrac15 + c]), & t \le T/2,\\
    \Unif([\tfrac45 - c, \tfrac45 + c]), & t \ge T/2 + 1.
    \end{cases}
\]

\paragraph{Dishonest forecasts with low penalties.} Let $p = (\tfrac 25, \tfrac 25, \ldots, \tfrac 25, \tfrac 35, \tfrac 35, \ldots, \tfrac 35)$ be the vector with $T/2$ copies of $\tfrac 25$ and $\tfrac 35$ and represent the alternative ``non-truthful'' prediction. The analysis of the non-truthful forecaster remains largely the same as in the proof of Proposition~\ref{prop:simplesubsamptruth}.
This is because, by drawing $x \in \{0, 1\}^T$ according to $\cP$, the marginal distribution of $x$ remains the same as earlier, i.e., $x \sim \cD \coloneqq \prod_{t=1}^{T}\Bern(\overline{p}_t)$ where $\overline{p} = (\tfrac 15, \tfrac 15, \ldots, \tfrac 15, \tfrac 45, \tfrac 45, \ldots, \tfrac 45)$. It follows that
\[
    \OPT_{\UCal}(\cP)
\le \Ex{x \sim \calP}{\UCal(x, p)}
=   \Ex{x \sim \cD}{\UCal(x, p)}
\le e^{-\Omega(T)},
\]
and $\OPT_{\UCalsub}(\cP) \le e^{-\Omega(T)}$ by the same token.

\paragraph{Truthful forecasts give a high U-Calibration error.}
It remains to analyze the truthful forecaster by lower bounding the expectation of $\UCal(x, \pstar)$ and $\UCalsub(x, \pstar)$. In light of \Cref{lemma:UCal-vs-VCal,prop:altform}, we let $\alpha \coloneqq \tfrac15 - (1 - 2\gamma)c \in [\tfrac 15 - c, \tfrac 15 + c]$ for some $\gamma \in [0, 1]$ to be chosen later, and focus on lower bounding the quantity
\[
    X_{-}^{(\alpha)} - \alpha \cdot N_{-}^{(\alpha)},
\]
where $N_{-}^{(\alpha)} = \sum_{t=1}^{T}\1{\pstar_t < \alpha}$ is the number of steps on which the prediction is strictly below $\alpha$, and $X_{-}^{(\alpha)} = \sum_{t=1}^{T}x_t\cdot\1{\pstar_t < \alpha}$ is the sum of the events on those steps.

By our construction of $\cP$, for each $t \in \{1, 2, \ldots, T/2\}$, $\pstar_t$ is drawn independently and uniformly from $[\tfrac15 - c, \tfrac15 + c]$. It follows that
\[
    \pr{}{\pstar_t < \alpha} = \frac{\alpha - (\tfrac15 - c)}{2c}
=   \frac{2\gamma c}{2c} = \gamma,
\]
and $N_{-}^{(\alpha)}$ follows $\Binomial(T/2, \gamma)$. Furthermore, conditioning on that $\pstar_t < \alpha$, $\pstar_t$ is uniformly distributed over $[\tfrac15-c, \alpha)$, which implies that the conditional probability of $x_t = 1$ is $(\tfrac15 - c + \alpha) / 2 = \tfrac15 - (1 - \gamma)c$. Therefore, conditioning on the value of $N_{-}^{(\alpha)}$, $X_{-}^{(\alpha)}$ follows $\Binomial(N_{-}^{(\alpha)}, \tfrac15 - (1 - \gamma)c)$.

By a Chernoff bound, as long as $\gamma = \Omega(1/T)$, $N_{-}^{(\alpha)} \in [\gamma T / 4, \gamma T]$ holds with probability $\Omega(1)$. Furthermore, conditioning on the realization of $N_{-}^{(\alpha)} \in [\gamma T / 4, \gamma T]$, $X_{-}^{(\alpha)}$ has a conditional expectation of $\left[\tfrac15 - (1 - \gamma)c\right] \cdot N_{-}^{(\alpha)}$ and a conditional variance of
\[
    N_{-}^{(\alpha)} \cdot \left[\frac15 - (1 - \gamma)c\right] \cdot \left[\frac45 + (1 - \gamma)c\right]
\ge \frac{\gamma T}{4} \cdot \left(\frac{1}{5} - c\right) \cdot \frac{4}{5}
\ge \Omega(\gamma T),
\]
where the last step applies $c < 1/5$. By a central limit theorem, it holds with probability at least $\Omega(1)$ that
\begin{equation}\label{eq:CLT-condition}
    X_{-}^{(\alpha)}
\ge \left[\frac15 - (1 - \gamma)c\right] \cdot N_{-}^{(\alpha)} + 2\sqrt{\gamma T}.
\end{equation}
Assuming that both $N_{-}^{(\alpha)} \in [\gamma T / 4, \gamma T]$ and \eqref{eq:CLT-condition} hold, for $\gamma = T^{-1/3}$, we have
\begin{align*}
    X_{-}^{(\alpha)} - \alpha \cdot N_{-}^{(\alpha)}
&\ge \left[\tfrac15 - (1 - \gamma)c\right] \cdot N_{-}^{(\alpha)} - \left[\tfrac15 - (1 - 2\gamma)c\right] \cdot N_{-}^{(\alpha)} + 2\sqrt{\gamma T}\\
&=   2\sqrt{\gamma T} - \gamma c \cdot N_{-}^{(\alpha)}\\
&\ge 2\sqrt{\gamma T} - \gamma c \cdot \gamma T \tag{$N_{-}^{(\alpha)} \le \gamma T$}\\
&\ge T^{1/3}. \tag{$\gamma = T^{-1/3}$, $c \le 1$}
\end{align*}

Therefore, it holds with probability $\Omega(1)$ that $X_{-}^{(\alpha)} - \alpha \cdot N_{-}^{(\alpha)} \ge T^{1/3}$. By \Cref{lemma:UCal-vs-VCal,prop:altform}, we have
\[
    \err_{\UCal}(\cP, \Atruthful)
=   \Ex{(x, \pstar) \sim \cP}{\UCal(x, \pstar)}
\ge 2\Ex{(x, \pstar) \sim \cP}{\max\{X_{-}^{(\alpha)} - \alpha \cdot N_{-}^{(\alpha)}, 0\}}
=   \Omega(T^{1/3}).
\]

\paragraph{Truthful forecasts give a high subsampled U-Calibration error.} The analysis for $\UCalsub$ is almost the same. Consider a random subset $S \sim \Unif(2^{[T]})$. To lower bound $\UCal(x|_S, \pstar|_S)$, we examine the quantity
\[
    X_{-}^{(\alpha)} - \alpha \cdot N_{-}^{(\alpha)},
\]
where $\alpha = \tfrac15 - (1 - 2\gamma)c$, $N_{-}^{(\alpha)} = \sum_{t=1}^{T}\1{t \in S \wedge \pstar_t < \alpha}$, and $X_{-}^{(\alpha)} = \sum_{t=1}^{T}x_t\cdot\1{t \in S \wedge \pstar_t < \alpha}$. 

For each $t \in \{1, 2, \ldots, T/2\}$, the events $t \in S$ and $\pstar_t < \alpha$ are independent and happen with probabilities $1/2$ and $\gamma$, respectively. Therefore, $N_{-}^{(\alpha)}$ follows $\Binomial(T/2, \gamma / 2)$. Moreover, given $N_{-}^{(\alpha)}$, the conditional distribution of $X_{-}^{(\alpha)}$ is still $\Binomial(N_{-}^{(\alpha)}, \tfrac15 - (1 - \gamma)c)$. Then, the rest of the analysis goes through by considering the typical realization of $N_{-}^{(\alpha)} \in [\gamma T / 8, \gamma T / 2]$.
\end{proof}

\section{Step Calibration}
\label{sec:step-calibration}
In light of \Cref{prop:altform}, V-Calibration (\Cref{eq:V-calibration}) corresponds to testing the predictions on intervals of form $[0, \alpha)$ and $(\alpha, 1]$: If, among the steps where the prediction is $< \alpha$ (resp., $> \alpha$), the actual fraction of ones is significantly higher (resp., lower) than $\alpha$, we know that the predictions must be far from calibration.
Naturally, we would get a stronger measure if, in the above comparison, we replace $\alpha$ with the actual average of the predictions, and take an absolute value to penalize both over- and under-estimation.

Formally, we consider the following calibration measure, which we term the \emph{step calibration error}:
\[
    \stepCE(x, p) \coloneq \sup_{\alpha \in [0, 1]}\left|\sum_{t=1}^{T}(x_t - p_t)\cdot\1{p_t \le \alpha}\right|.
\]
Compared with the smooth calibration error of~\cite{KF08}, the step calibration error replaces the family of Lipschitz functions with the family of ``step functions'': $\{p \mapsto \1{p \le \alpha}: \alpha \in [0, 1]\}$.

The definition above is robust in the sense that, if we replace the step functions with the union of a constant number of intervals, the resulting error only increases by a constant factor.
The definition above also has a decision-theoretic interpretation: If we replace the benchmark in V-Calibration (i.e., $\inf_{\beta \in [0, 1]}\sum_{t=1}^{T}S_{\alpha}(x_t, \beta)$) with the sum
\[
    \sum_{t=1}^{T}\EEs{x'_t \sim \Bern(p_t)}{S_{\alpha}(x'_t, p_t)}
=   \sum_{t=1}^{T}(p_t - \alpha)\cdot\sgn(\alpha - p_t)
\]
and add an additional absolute value, we get
\[
    \sup_{\alpha \in [0, 1]}\left|\sum_{t=1}^{T}(x_t - \alpha)\cdot\sgn(\alpha - p_t) - \sum_{t=1}^{T}(p_t - \alpha)\cdot\sgn(\alpha - p_t)\right|
=   \sup_{\alpha \in [0, 1]}\left|\sum_{t=1}^{T}(x_t - p_t)\cdot\sgn(\alpha - p_t)\right|,
\]
which has a similar form to $\stepCE$, except that the step function is replaced with the sign function.
This definition can be interpreted as follows: Assuming that the bits were indeed draw from $\Bern(p_1)$ through $\Bern(p_T)$, we expect a loss of $\Ex{x'_t \sim \Bern(p_t)}{S_{\alpha}(x'_t, p_t)}$ at each step $t$. If our actual loss, $\sum_{t=1}^{T}S_{\alpha}(x_t, p_t)$, is significantly higher or lower, we are certain that the predictions cannot be calibrated.

We prove in \Cref{sec:equivfactor-proof} that the step calibration error is equivalent to the above variant of V-Calibration up to a constant factor.

\begin{restatable}{fact}{equivfactor}
\label{fact:equivfactor}
    For any $x \in \{0, 1\}^T$ and $p \in [0, 1]^T$,
    \begin{align*}
    \frac 13 \sup_{\alpha \in [0, 1]}\left|\sum_{t=1}^{T}(x_t - p_t)\cdot\sgn(\alpha - p_t)\right| 
    \leq \stepCE(x, p) 
    \leq \sup_{\alpha \in [0, 1]}\left|\sum_{t=1}^{T}(x_t - p_t)\cdot\sgn(\alpha - p_t)\right|.
    \end{align*}
\end{restatable}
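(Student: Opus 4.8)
\emph{Proof proposal.} The plan is to route both inequalities through a single pointwise identity. Write $Q(\alpha) \coloneqq \sum_{t=1}^{T}(x_t-p_t)\1{p_t\le\alpha}$ and $Q^-(\alpha) \coloneqq \sum_{t=1}^{T}(x_t-p_t)\1{p_t<\alpha}$, so that $\stepCE(x,p)=\sup_{\alpha\in[0,1]}\abs{Q(\alpha)}$; let $R(\alpha)\coloneqq\sum_{t=1}^{T}(x_t-p_t)\sgn(\alpha-p_t)$ be the quantity appearing inside the supremum on both sides of the claim; and set $A\coloneqq\sum_{t=1}^{T}(x_t-p_t)$. The first step is the elementary identity $\1{p_t\le\alpha}+\1{p_t<\alpha}-1=\sgn(\alpha-p_t)$, checked case by case on $p_t<\alpha$, $p_t=\alpha$, $p_t>\alpha$; summing it against $(x_t-p_t)$ gives, for every $\alpha$,
\[
    R(\alpha)=Q(\alpha)+Q^-(\alpha)-A .
\]
I will also record two easy consequences: $A=Q(1)$ since every $p_t\le 1$, hence $\abs{A}\le\stepCE(x,p)$; and $Q^-(\alpha)=Q(\alpha')$ for $\alpha'\coloneqq\max\{p_t:p_t<\alpha\}$ whenever that set is nonempty (and $Q^-(\alpha)=0$ otherwise), hence $\abs{Q^-(\alpha)}\le\stepCE(x,p)$ as well.

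Given this, the bound $\frac13\sup_\alpha\abs{R(\alpha)}\le\stepCE(x,p)$ — equivalently $\sup_\alpha\abs{R(\alpha)}\le 3\stepCE(x,p)$ — is immediate: the identity and the triangle inequality give $\abs{R(\alpha)}\le\abs{Q(\alpha)}+\abs{Q^-(\alpha)}+\abs{A}\le 3\stepCE(x,p)$ for every $\alpha\in[0,1]$.

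For the reverse inequality $\stepCE(x,p)\le\sup_\alpha\abs{R(\alpha)}$ I would argue in two moves. First, $\sup_\alpha\abs{R(\alpha)}\ge\abs{A}$: evaluating the identity at $\alpha=1$ gives $R(1)=Q^-(1)=\sum_{p_t<1}(x_t-p_t)=A+\#\{t:p_t=1,\ x_t=0\}\ge A$, and at $\alpha=0$ it gives $R(0)=-\sum_{p_t>0}(x_t-p_t)=-A+\#\{t:p_t=0,\ x_t=1\}\ge -A$, so $\max(\abs{R(0)},\abs{R(1)})\ge\abs{A}$; the only place the endpoints $0,1$ enter is that $x_t-p_t$ has a fixed sign when $p_t\in\{0,1\}$. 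Second, for every $\alpha$ I claim $\abs{2Q(\alpha)-A}\le\sup_\beta\abs{R(\beta)}$: if $\alpha\ge\max_t p_t$ then $Q(\alpha)=A$ and this reduces to the previous bound; otherwise, setting $v\coloneqq\min\{p_t:p_t>\alpha\}$ and picking any $\beta\in(\alpha,v)$ — so $\beta\in[0,1)$ and $\{t:p_t\le\alpha\}=\{t:p_t<\beta\}=\{t:p_t\le\beta\}$ — the identity yields exactly $R(\beta)=Q(\beta)+Q^-(\beta)-A=2Q(\alpha)-A$. Finally, take $\alpha^\star$ attaining $\stepCE(x,p)=\abs{Q(\alpha^\star)}$; using $\abs{A}\le\stepCE(x,p)$ and the reverse triangle inequality, $\abs{2Q(\alpha^\star)-A}\ge 2\stepCE(x,p)-\abs{A}\ge\stepCE(x,p)$, so $\sup_\beta\abs{R(\beta)}\ge\abs{2Q(\alpha^\star)-A}\ge\stepCE(x,p)$.

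The routine content is the identity and the two endpoint computations. The one point needing care — the \emph{main obstacle} — is exactly the informal step of ``replacing $\alpha$ by the actual average'': the step functions $\1{p_t\le\alpha}$ and the sign functions $\sgn(\alpha-p_t)$ are not related by an exact linear substitution, and the discrepancy is carried by $Q^-$ and by the boundary behaviour at $p_t\in\{0,1\}$. Organizing the second direction around the auxiliary quantities $A$ and $2Q(\alpha)-A$, and the choice of $\beta$ just above $\alpha$, is what makes these contributions cancel cleanly; the constant $3$ is not claimed to be tight.
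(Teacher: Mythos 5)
Your proof is correct and follows essentially the same route as the paper's: the identity $\sgn(\alpha-p_t)=\1{p_t\le\alpha}+\1{p_t<\alpha}-1$ is exactly the paper's decomposition $\sgn(\alpha-p_t)=\1{p_t\le\alpha}-\1{p_t\le 1}+\1{p_t<\alpha}$, and both arguments hinge on perturbing the threshold off the set $\{p_1,\dots,p_T\}$ plus endpoint evaluations to control the total bias $A=\sum_t(x_t-p_t)$. The only cosmetic difference is in closing the bound $\stepCE(x,p)\le\sup_\beta\abs{R(\beta)}$: you use the exact relation $R(\beta)=2Q(\alpha)-A$ together with $\abs{A}\le\stepCE(x,p)$ and a reverse triangle inequality at the maximizer, whereas the paper writes $\abs{Q(\alpha)}\le\tfrac12\abs{A}+\tfrac12\abs{R(\alpha')}$ and bounds $\abs{A}$ by $\sup_\beta\abs{R(\beta)}$ — the same computation rearranged.
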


\subsection{Step Calibration Error is Complete, Sound and Decision-Theoretic}
We show that the step calibration error is complete and sound in the sense of \Cref{def:complete-and-sound}. Furthermore, it is decision-theoretic, i.e., $\stepCE$ always upper bounds the U-Calibration error up to a constant factor.

\begin{proposition}\label{prop:stepCE-complete-sound-decision-theoretic}
    The step calibration error, $\stepCE$, is complete and sound. Moreover, for any $x \in \{0, 1\}^T$ and $p \in [0, 1]^T$, it holds that
    \[
        \stepCE(x, p) \ge \frac{1}{8}\UCal(x, p).
    \]
\end{proposition}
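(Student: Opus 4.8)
The plan is to prove the three assertions separately, with the decision-theoretic bound $\stepCE(x,p)\ge\tfrac18\UCal(x,p)$ being the only one that requires a genuine argument. For that bound I would chain two reductions. First, by \Cref{lemma:UCal-vs-VCal} it suffices to show $\VCal(x,p)\le 4\stepCE(x,p)$, since then $\UCal(x,p)\le 2\VCal(x,p)\le 8\stepCE(x,p)$. Second, using the alternative form in \Cref{prop:altform}, $\VCal(x,p)=2\sup_{\alpha}\max\{A_\alpha,B_\alpha\}$ with $A_\alpha\coloneqq X_-^{(\alpha)}-\alpha N_-^{(\alpha)}$ and $B_\alpha\coloneqq \alpha N_+^{(\alpha)}-X_+^{(\alpha)}$, so it is enough to prove $A_\alpha\le\stepCE(x,p)$ and $B_\alpha\le 2\stepCE(x,p)$ for every $\alpha\in[0,1]$; this yields $\VCal(x,p)\le 2\cdot 2\stepCE(x,p)$.

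Both inequalities come from one elementary observation. On the steps counted by $N_-^{(\alpha)}$ we have $p_t<\alpha$, hence $x_t-\alpha\le x_t-p_t$; summing gives $A_\alpha=\sum_{t:\,p_t<\alpha}(x_t-\alpha)\le\sum_{t:\,p_t<\alpha}(x_t-p_t)$. Now $\{t:\,p_t<\alpha\}$ either is empty (in which case $A_\alpha=0$) or equals $\{t:\,p_t\le\alpha'\}$ for $\alpha'\coloneqq\max\{p_s:\,p_s<\alpha\}\in[0,1]$, so the right-hand sum is exactly one of the signed quantities over which $\stepCE$ takes a supremum of absolute values; hence $A_\alpha\le\stepCE(x,p)$. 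Symmetrically, on the steps counted by $N_+^{(\alpha)}$ we have $p_t>\alpha$, hence $\alpha-x_t\le p_t-x_t$, so $B_\alpha\le\sum_{t:\,p_t>\alpha}(p_t-x_t)=-\sum_{t=1}^T(x_t-p_t)+\sum_{t:\,p_t\le\alpha}(x_t-p_t)$; each of these two summands is bounded in absolute value by $\stepCE(x,p)$ (the first by taking threshold $1$), so $B_\alpha\le 2\stepCE(x,p)$. This completes the decision-theoretic bound.

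Completeness and soundness are short calculations directly from the definition. Predicting the events exactly gives $\stepCE(x,x)=0$ termwise; if $p$ is perfectly calibrated for $x$, grouping $\sum_{t:\,p_t\le\alpha}(x_t-p_t)$ by distinct predicted values makes every group vanish, so $\stepCE(x,p)=0$; and if all predictions equal a constant $c\in[0,1]$ (written $c$ to avoid clashing with the smoothing parameter), only thresholds $\alpha\ge c$ contribute, so $\stepCE(x,c\cdot\vec{1}_T)=\abs{\sum_t x_t-cT}$, whose expectation under $x_1,\dots,x_T\sim\Bern(c)$ is at most $\sqrt{Tc(1-c)}=o_c(T)$ by Jensen's inequality. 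For soundness, the same identity gives $\stepCE(x,\beta\cdot\vec{1}_T)=\abs{\sum_t x_t-\beta T}$, whose expectation under $x_1,\dots,x_T\sim\Bern(\alpha)$ is at least $\abs{\alpha-\beta}\,T=\Omega_{\alpha,\beta}(T)$ again by Jensen; and for $p=\vec{1}_T-x$, writing $k\coloneqq\sum_t x_t$, threshold $\alpha=0$ yields partial sum $k$ and threshold $\alpha=1$ yields partial sum $2k-T$, so $\stepCE(x,\vec{1}_T-x)\ge\max\{k,\abs{2k-T}\}\ge T/3=\Omega(T)$.

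The only point demanding care is the strict-versus-nonstrict bookkeeping between \Cref{prop:altform} (which uses $\1{p_t<\alpha}$ and $\1{p_t>\alpha}$) and the definition of $\stepCE$ (which uses $\1{p_t\le\alpha}$); I do not expect a real obstacle, since $p$ takes finitely many values and the threshold $\alpha'=\max\{p_s:\,p_s<\alpha\}$ realizes the strict partial sum exactly. Tracking the constants through the two reductions ($\UCal\to\VCal$ and $\VCal\to\stepCE$) together with the factor $2$ in the alternative form is what produces the final $\tfrac18$.
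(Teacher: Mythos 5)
Your proof is correct, and the heart of it---the decision-theoretic bound $\stepCE(x,p)\ge\tfrac18\UCal(x,p)$---follows essentially the same route as the paper: pass to $\VCal$ via \Cref{lemma:UCal-vs-VCal}, use the alternative form of \Cref{prop:altform}, bound the ``$-$'' term by shifting the strict threshold to a nearby admissible one (your $\alpha'=\max\{p_s:p_s<\alpha\}$ plays the role of the paper's $\beta$ with $(\beta,\alpha^*)\cap\{p_t\}=\emptyset$), and bound the ``$+$'' term by the decomposition $\1{p_t>\alpha}=\1{p_t\le 1}-\1{p_t\le\alpha}$, which costs the factor $2$ and yields the same constant $\tfrac18$. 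Where you genuinely diverge is in completeness and soundness: the paper disposes of these in two lines by observing $\stepCE\le\ECE$ (so completeness is inherited from the ECE) and $\stepCE\ge\Omega(1)\cdot\UCal$ (so soundness is inherited from U-Calibration), citing known properties of those measures, whereas you verify all five conditions of \Cref{def:complete-and-sound} by direct computation---$\stepCE(x,x)=0$, $\stepCE=0$ under perfect calibration by grouping terms with equal predictions, $\stepCE(x,c\cdot\vec{1}_T)=|\sum_t x_t-cT|$ for constant predictions handled by Jensen in both directions, and the $\max\{k,|2k-T|\}\ge T/3$ argument for $p=\vec{1}_T-x$. Your route is more self-contained and makes the constants explicit; the paper's is shorter and reuses the bound you already proved (soundness follows for free from the decision-theoretic inequality), at the price of leaning on prior results for the ECE and U-Calibration.
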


\begin{proof}
    We start by showing that $\stepCE$ is decision-theoretic. Let $A$ be a shorthand for 
    \if\arxiv0
    $\sup_{\alpha \in [0, 1]}\max\left\{X_{-}^{(\alpha)} - \alpha N_{-}^{(\alpha)}, \alpha N_{+}^{(\alpha)} - X_{+}^{(\alpha)}\right\}$.
    \else
    \[\sup_{\alpha \in [0, 1]}\max\left\{X_{-}^{(\alpha)} - \alpha N_{-}^{(\alpha)}, \alpha N_{+}^{(\alpha)} - X_{+}^{(\alpha)}\right\}.\]
    \fi
    By \Cref{lemma:UCal-vs-VCal,prop:altform}, we have
    \[
        A
    =   \frac{1}{2}\VCal(x, p)
    \ge \frac{1}{4}\UCal(x, p),
    \]
    so it suffices to prove $\stepCE(x, p) \ge A/2$.

    \paragraph{Step calibration is decision-theoretic.} Note that, over all possible values of $\alpha \in [0, 1]$, the term $\max\left\{X_{-}^{(\alpha)} - \alpha N_{-}^{(\alpha)}, \alpha N_{+}^{(\alpha)} - X_{+}^{(\alpha)}\right\}$ only takes finitely many (concretely, $O(T)$) different values. Thus, the supremum $A$ can be achieved at some $\alpha^* \in [0, 1]$, i.e.,
    \[
        A = \max\left\{X_{-}^{(\alpha^*)} - \alpha^* N_{-}^{(\alpha^*)}, \alpha^* N_{+}^{(\alpha^*)} - X_{+}^{(\alpha^*)}\right\}.
    \]
    We consider the following two cases, depending on which of the two terms above is larger.
    
    \paragraph{Case 1: $A = X_{-}^{(\alpha^*)} - \alpha^* N_{-}^{(\alpha^*)}$.} If $\alpha^* = 0$, we have $N_{-}^{(\alpha^*)} = X_{-}^{(\alpha^*)} = 0$. Then, $A = 0$, and $\stepCE(x, p) \ge A/2$ vacuously holds. If $\alpha^* > 0$, we can find $\beta \in [0, \alpha^*)$ such that $(\beta, \alpha^*) \cap \{p_1, p_2, \ldots, p_T\} = \emptyset$. Then, for every $t \in [T]$, $p_t < \alpha^*$ holds if and only if $p_t \le \beta$. It follows that
    \begin{align*}
        \stepCE(x, p)
    &\ge\sum_{t=1}^{T}(x_t - p_t)\cdot\1{p_t \le \beta}\\
    &=  \sum_{t=1}^{T}(x_t - p_t)\cdot\1{p_t < \alpha^*} \tag{$p_t < \alpha^* \iff p_t \le \beta$}\\
    &\ge\sum_{t=1}^{T}(x_t - \alpha^*)\cdot\1{p_t < \alpha^*} \tag{$p_t < \alpha^* \implies x_t - p_t \ge x_t - \alpha^*$}\\
    &=  X_{-}^{(\alpha^*)} - \alpha^* N_{-}^{(\alpha^*)}
    =   A,
    \end{align*}
    where the first step relaxes the supremum in the definition of $\stepCE$ to a fixed term at $\beta$, and removes the absolute value.

    \paragraph{Case 2: $A = \alpha^* N_{+}^{(\alpha^*)} - X_{+}^{(\alpha^*)}$.} In this case, we note that
    \[
        \sum_{t=1}^{T}(p_t - x_t) \cdot \1{p_t > \alpha^*}
    \ge \sum_{t=1}^{T}(\alpha^* - x_t) \cdot \1{p_t > \alpha^*}
    =  \alpha^* N_{+}^{(\alpha^*)} - X_{+}^{(\alpha^*)} 
    =   A.
    \]
    Since $\1{p_t > \alpha^*} = \1{p_t \le 1} - \1{p_t \le \alpha^*}$ holds for every $t \in [T]$, we have $B_1 - B_{\alpha^*} = A$, where
    \[
        B_1 = \sum_{t=1}^{T}(p_t - x_t) \cdot \1{p_t \le 1}
    \quad\text{and}\quad
        B_{\alpha^*} = \sum_{t=1}^{T}(p_t - x_t) \cdot \1{p_t \le \alpha^*}.
    \]
    Then, either $B_1$ or $B_{\alpha^*}$ must have an absolute value of at least $A/2$. It follows that
    \[
        \stepCE(x, p)
    \ge \max_{\beta \in \{\alpha^*, 1\}}\left|\sum_{t=1}^{T}(x_t - p_t)\cdot\1{p_t \le \beta}\right|
    =   \max\{|B_{\alpha^*}|, |B_1|\}
    \ge A/2.
    \]

    \paragraph{Completeness and soundness.} To verify the completeness, we note that $\stepCE$ is always upper bounded by the expected calibration error (ECE) defined as
    \[
        \ECE(x, p) \coloneqq \sum_{\alpha \in [0, 1]}\left|\sum_{t=1}^{T}(x_t - p_t)\cdot\1{p_t = \alpha}\right|.
    \]
    This is because, for every $\alpha \in [0, 1]$, it holds that
    \[
        \left|\sum_{t=1}^{T}(x_t - p_t)\cdot\1{p_t \le \alpha}\right|
    \le \sum_{\alpha' \in [0, \alpha]}\left|\sum_{t=1}^{T}(x_t - p_t)\cdot\1{p_t = \alpha'}\right|
    \le \ECE(x, p).
    \]
    Then, since the ECE is complete (e.g., \cite[Table 2]{HQYZ24}), $\stepCE$ is also complete.
    
    Similarly, since $\stepCE$ is lower bounded by $\UCal$ up to a constant factor, $\stepCE$ inherits the soundness of the U-Calibration error (e.g., \cite[Table 2]{HQYZ24}).
\end{proof}

\subsection{Step Calibration Error is Truthful After Subsampling}
As a warm-up, we start by proving that the subsampled version of step calibration, $\stepCEsub$, is $(\alpha, \beta)$-truthful for some parameters $\alpha, \beta = \polylog(T/c)$ in the $c$-smoothed setting. Later, we will improve the parameter $\alpha$ to $O(\sqrt{\log(1/c)})$ and also prove its tightness (for the truthfulness of $\stepCEsub$).

Recall that in the $c$-smoothed setting, the conditional expectation of each $x_t$ (given $x_{1:(t-1)}$) is sampled from a distribution with density upper bounded by $1/c$ (e.g., the uniform distribution over an interval of length $c$). Formally, the setting is described by a function $\calP:\bigcup_{t=1}^{T}\{0, 1\}^{t-1} \to \Delta_c([0, 1])$, where for each $t \in [T]$ and $(x_1, x_2, \ldots, x_{t-1}) \in \{0, 1\}^{t-1}$, $\calP(x_1, x_2, \ldots, x_{t-1})$ specifies a distribution over $[0, 1]$---with a density upper bounded by $c$---from which the conditional expectation of $x_t|x_1, x_2, \ldots, x_{t-1}$ is drawn. The sequence $x \in \{0, 1\}^{T}$ is determined sequentially as follows: For each $t \in [T]$, we draw $\pstar_t \sim \calP(x_1, x_2, \ldots, x_{t-1})$ and then draw $x_t \sim \Bern(\pstar_t)$. Furthermore, the true conditional probability for the event $x_t = 1$, $\pstar_t$, is observable by the forecaster. Thus, the truthful forecaster always predicts $\pstar_t$ at time $t$.

\paragraph{The $c = \Omega(1)$ regime.} Towards showing that $\stepCEsub$ is truthful, we will lower bound the optimal error that can be achieved on $\cP$ (namely, $\OPT_{\stepCEsub}(\cP)$) and upper bound the error incurred by the truthful forecaster (namely, $\err_{\stepCEsub}(\cP, \Atruthful)$). When $c = \Omega(1)$ is a fixed constant, we can easily obtain the following lower bound:
\[
    \OPT_{\stepCEsub}(\cP) = \Omega(\sqrt{T}),
\]
i.e., every forecaster must incur an $\Omega(\sqrt{T})$ penalty measured by $\stepCEsub$ in expectation. The expectation above is over the randomness in the realized outcomes $x$ as well as the forecaster that generates the predictions $p$. To see this, we note that
\[
    \stepCE(x, p)
\coloneq \sup_{\alpha \in [0, 1]}\left|\sum_{t=1}^{T}(x_t - p_t)\cdot\1{p_t \in [0, \alpha]}\right|
\ge \left|\sum_{t=1}^{T}(x_t - p_t)\right|,
\]
and it follows that
\[
    \stepCEsub(x, p)
\ge \Ex{y\sim\Unif(\{0, 1\}^T)}{\left|\sum_{t=1}^{T}y_t\cdot(x_t - p_t)\right|}.
\]

Then, the results of~\cite{HQYZ24} lower bound the right-hand side above by $\Ex{}{\gamma(\Var_T)}$, where
\[
\smash{
    \Var_T \coloneqq \sum_{t=1}^{T}\pstar_t(1 - \pstar_t)}
\]
is the \emph{realized variance} up to time $T$, and $\gamma(x) \coloneq x \cdot\1{x \le 1} + \sqrt{x} \cdot\1{x > 1}$. In the $c = \Omega(1)$ regime, we can show that $\Var_T \ge \Omega(T)$ with high probability, which implies the $\Omega(\sqrt{T})$ lower bound.

It remains to show that, when $c = \Omega(1)$, truthful forecasts lead to an $O(\sqrt{T})$ error with respect to the $\stepCEsub$ measure. As before, it suffices to show this for $\stepCE$, as the argument should extend to the subsampled version easily.
Via an easy ``covering $+$ union bound'' argument, we can prove an upper bound of
\[
    \err_{\stepCEsub}(\cP, \Atruthful)
=   \Ex{(x, \pstar) \sim \cP}{\stepCEsub(x,\pstar)}
=   O\left(\sqrt{T\log(T/c)}\right),
\]
almost matching $\OPT_{\stepCEsub} = \Omega(\sqrt{T})$ in the $c = \Omega(1)$ regime.

\paragraph{The small-$c$ regime.} The argument above, unfortunately, does not directly apply to the case where $c$ is small. This is because, in the worst case, $\Var_T$ can be as low as $\Theta(cT)$ (e.g., when each conditional distribution follows the uniform distribution over $[0, c]$). As a result, we can at best lower bound $\OPT_{\stepCEsub}$ by $\Omega(\sqrt{cT})$. Then, our upper bound on the $\stepCEsub$ incurred by the truthful forecaster---$\Ex{}{\stepCEsub(x,\pstar)}=O(\sqrt{T\log(T/c)})$---would be higher by a factor of $\sqrt{1/c}$.

Instead, we will replace both bounds---the lower bound on $\OPT_{\stepCEsub}(\cP)$ and the upper bound on $\err_{\stepCEsub}(\cP, \Atruthful)$---with ones that depend on the realized variance of the distribution, i.e., %
\[
    \Var_T \coloneqq \sum_{t=1}^{T}\pstar_t(1 - \pstar_t).
\]
It should be noted that the ``right'' way of defining $\Var_T$ should be using $\pstar_t$, rather than using the mean of the distribution $\calP(x_1, x_2, \ldots, x_{t-1})$. This is because revealing the value of $\pstar_t \sim \calP(x_{1:(t-1)})$ might significantly decrease the remaining variance in $x_t$.\footnote{Consider the case that $\calP(x_{1:(t-1)})$ is symmetric around $1/2$ and puts most of the probability mass near $0$ and $1$.}

The following lemma is implicit in~\cite{HQYZ24}:
\begin{lemma}[Theorem 6.5 of~\cite{HQYZ24}]\label{lemma:lower}
    For any choice of $\calP$, it holds that
    \[
        \OPT_{\stepCEsub}(\calP) = \Omega\left(\Ex{\calP}{\gamma(\Var_T)}\right),
    \]
    where $\gamma(x) \coloneqq x \cdot \1{x \le 1} + \sqrt{x} \cdot \1{x > 1}$.
\end{lemma}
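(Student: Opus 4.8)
The plan is to derive the lemma from the subsampled-bias lower bound of~\cite{HQYZ24} recorded in~\eqref{eq:HQYZ24-lower}, together with the elementary observation that $\stepCEsub$ dominates the subsampled bias. First I would establish the pointwise inequality $\stepCEsub(x, p) \ge \Ex{y \sim \Unif(\{0, 1\}^T)}{|\sum_{t=1}^{T} y_t (x_t - p_t)|}$: for every fixed subset $S \subseteq [T]$, plugging $\alpha = 1$ into the supremum defining $\stepCEsub$ already gives $\sup_{\alpha \in [0,1]}|\sum_{t=1}^{T} (x_t - p_t)\1{p_t \le \alpha \land t \in S}| \ge |\sum_{t \in S}(x_t - p_t)|$, since $p_t \le 1$ for all $t$; averaging over $S \sim \Unif(2^{[T]})$ and identifying $S$ with its indicator vector $y$ yields the claim.

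Next I would fix an arbitrary forecaster $\A$ (in the smoothed model, a map from $(x_{1:(t-1)}, \pstar_t)$ to $p_t$), take the expectation of this pointwise bound over $(x, \pstar, p) \sim (\calP, \A)$, and invoke~\eqref{eq:HQYZ24-lower} (i.e., Theorem~6.5 of~\cite{HQYZ24}) to conclude $\err_{\stepCEsub}(\calP, \A) \ge \Omega(\Ex{\calP}{\gamma(\Var_T)})$. Since this holds uniformly over all $\A$, taking the infimum over $\A$ gives the stated bound on $\OPT_{\stepCEsub}(\calP)$. The rest is just unwinding definitions.

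The one point requiring care --- and the main, though mild, obstacle --- is that~\eqref{eq:HQYZ24-lower} must be applied in the \emph{smoothed} setting, where the forecaster observes $\pstar_t$ before predicting, rather than in the basic binary-sequence setting of~\cite{HQYZ24}. I would argue this is harmless: the lower bound~\eqref{eq:HQYZ24-lower} only uses that, conditioned on everything the forecaster knows at step $t$, the outcome $x_t \sim \Bern(\pstar_t)$ and hence has conditional variance $\pstar_t(1 - \pstar_t)$; enlarging the forecaster's information to include $\pstar_t$ (so that $p_t$ is measurable with respect to $x_{1:(t-1)}$ and $\pstar_t$) does not change this conditional variance, so the martingale/anti-concentration argument underlying~\eqref{eq:HQYZ24-lower} goes through verbatim with $\Var_T = \sum_{t=1}^{T} \pstar_t(1 - \pstar_t)$ defined via the realized $\pstar_t$. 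This is precisely why $\Var_T$ must be defined using $\pstar_t$ rather than the mean of $\calP(x_{1:(t-1)})$, as remarked above: revealing $\pstar_t$ can collapse the residual variance of $x_t$, and the relevant lower bound is driven by the post-revelation variance $\pstar_t(1-\pstar_t)$.
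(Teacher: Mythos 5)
Your proposal is correct and follows essentially the same route as the paper's own proof: relax $\stepCEsub$ to the subsampled total bias $\Ex{y \sim \Unif(\{0,1\}^T)}{\left|\sum_{t=1}^{T}y_t(x_t - p_t)\right|}$ by taking $\alpha = 1$ in the supremum, and then invoke the lower bound of Theorem~6.5 of~\cite{HQYZ24} (\Cref{eq:HQYZ24-lower}) uniformly over forecasters. Your additional remark that the argument transfers to the smoothed setting because the bound depends only on the conditional variance $\pstar_t(1-\pstar_t)$ given the forecaster's information is exactly the point the paper handles implicitly (and motivates its definition of $\Var_T$ via the realized $\pstar_t$), so there is no gap.
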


\begin{proof}
    Note that
    \[
        \stepCE(x, p)
    =   \sup_{\alpha \in [0, 1]}\left|\sum_{t=1}^{T}(x_t - p_t)\cdot\1{p_t \in [0, \alpha]}\right|
    \ge \left|\sum_{t=1}^{T}(x_t - p_t)\right|.
    \]
    It follows that
    \[
        \stepCEsub(x, p)
    \ge \Ex{y \sim \Unif(\{0, 1\}^{T})}{\left|\sum_{t=1}^{T}y_t\cdot(x_t - p_t)\right|},
    \]
    so the rest of the proof follows from~\cite{HQYZ24} (which relaxes the SSCE to the same expression as above).
\end{proof}

We can show that the $\stepCEsub$ incurred by the truthful forecaster nearly matches the above, up to a multiplicative factor and an additive term of $\polylog(T/c)$:
\begin{lemma}\label{lemma:upper}
    For any $\calP$ that is $c$-smoothed, we have
    \[
        \err_{\stepCEsub}(\calP, \Atruthful)
    \le O\left(\Ex{}{\gamma(\Var_T)}\cdot\sqrt{\log(T/c)} + \log^2(T/c)\right),
    \]
    where $\gamma(x) \coloneqq x \cdot \1{x \le 1} + \sqrt{x} \cdot \1{x > 1}$.
\end{lemma}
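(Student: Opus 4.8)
The plan is to bound the step calibration error of the truthful forecaster (which predicts $p_t = \pstar_t$) by $O\big(\Ex{}{\gamma(\Var_T)}\sqrt{\log(T/c)} + \polylog(T/c)\big)$, and then to observe that the same argument survives subsampling. Concretely, conditioning on a fixed $S \subseteq [T]$, the quantity $\sup_\alpha \absflat{\sum_{t \in S}(x_t - \pstar_t)\1{\pstar_t \le \alpha}}$ has exactly the form of $\stepCE$ with the realized variance replaced by $\Var_T^S := \sum_{t \in S}\pstar_t(1 - \pstar_t) \le \Var_T$; since $\gamma$ is nondecreasing, averaging the resulting bound over $S \sim \Unif(2^{[T]})$ gives the claimed bound for $\err_{\stepCEsub}(\calP, \Atruthful)$. (The cruder comparison $\stepCEsub \le \tfrac12 \stepCE + O(\sqrt T)$ from \Cref{lemma:stepCE-vs-stepCEsub} is useless here, since its additive $O(\sqrt T)$ swamps the target whenever $\Var_T \ll T$.) So it suffices to bound $\Ex{}{\stepCE(x, \pstar)} = \Ex{}{\sup_{\alpha \in [0,1]}\absflat{M^{(\alpha)}_T}}$, where $M^{(\alpha)}_\tau := \sum_{t \le \tau}(x_t - \pstar_t)\1{\pstar_t \le \alpha}$.

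First I would discretize the threshold. Fix the grid $V_\eps := \{0, \eps, 2\eps, \dots, 1\}$ for $\eps$ a sufficiently small polynomial in $c/T$ (e.g.\ $\eps = c^2/T^8$), so that $\log\absflat{V_\eps} = O(\log(T/c))$. If $\sup_\alpha\absflat{M^{(\alpha)}_T}$ is attained at $\alpha^\star \in [i\eps, (i+1)\eps)$, then $\absflat{M^{(\alpha^\star)}_T - M^{(i\eps)}_T} \le N_i := \#\{t : i\eps < \pstar_t \le (i+1)\eps\}$, so $\stepCE(x,\pstar) \le \max_{\alpha \in V_\eps}\absflat{M^{(\alpha)}_T} + \max_i N_i$. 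Because each $\pstar_t$ has conditional density $\le 1/c$ given $x_{1:t-1}$, a pair-wise union bound (over the $O(1/\eps)$ intervals and over pairs of timesteps within an interval, using the tower rule) gives $\pr{}{\max_i N_i \ge 2} \le (1/\eps)\binom{T}{2}(\eps/c)^2 = O(T^2 \eps / c^2) = o(1)$, and since $\max_i N_i \le T$ always, $\Ex{}{\max_i N_i} = O(1)$. (A first-moment bound on $\pr{}{N_i \ge 1}$ is too weak, because there are $\Theta(1/\eps)$ intervals.)

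Next, for each fixed $\alpha$, $(M^{(\alpha)}_\tau)$ is a martingale with increments in $[-1,1]$, and---crucially, in the filtration that reveals $\pstar_t$ before $x_t$ (the information the forecaster actually has)---its predictable quadratic variation is the pathwise quantity $\sum_t \1{\pstar_t \le \alpha}\pstar_t(1-\pstar_t) \le \Var_T$. Freedman's inequality then gives, for every deterministic $v$, $\pr{}{\absflat{M^{(\alpha)}_T} \ge \lambda,\ \Var_T \le v} \le 2\exp(-\lambda^2/(2v + 2\lambda/3))$. To handle the fact that $\Var_T$ is random, I would peel dyadically: decompose on $\{\Var_T \le 1\}$ and on $\{2^{j-1} < \Var_T \le 2^j\}$ for $j = 1, \dots, O(\log T)$. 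On the $j$-th band, $\max_{\alpha \in V_\eps}\absflat{M^{(\alpha)}_T} = W_j$ where $W_j := \max_{\alpha \in V_\eps}\absflat{M^{(\alpha)}_T}\cdot\1{\Var_T \le 2^j}$, and a union bound over $V_\eps$ plus Freedman gives $\pr{}{W_j > \lambda} \le 2\absflat{V_\eps}\exp(-\lambda^2/(2^{j+1} + 2\lambda/3))$. Writing $\theta_j := C(\sqrt{2^j \log\absflat{V_\eps}} + \log\absflat{V_\eps})$ for a large universal constant $C$ and splitting $W_j \cdot\1{2^{j-1}<\Var_T\le 2^j} \le \theta_j \cdot\1{2^{j-1}<\Var_T\le 2^j} + W_j \cdot\1{W_j > \theta_j}$: the first pieces sum to $O(\sqrt{\log\absflat{V_\eps}})\sum_j \sqrt{2^j}\pr{}{2^{j-1}<\Var_T\le 2^j} + O(\log\absflat{V_\eps})$, which is $O(\Ex{}{\gamma(\Var_T)}\sqrt{\log(T/c)} + \log(T/c))$ using that $\sqrt{2^j} \le \sqrt2\,\sqrt{\Var_T} = \sqrt2\,\gamma(\Var_T)$ on the $j$-th band; the second pieces sum to $\sum_j T\cdot\pr{}{W_j > \theta_j} = O(\log T)$, since $C$ large makes $\pr{}{W_j > \theta_j} \le \absflat{V_\eps}^{-1}$. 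The band $\{\Var_T \le 1\}$ contributes $\Ex{}{\max_\alpha\absflat{M^{(\alpha)}_T}\1{\Var_T\le1}} = O(\log(T/c))$ via Freedman with $v=1$. Combining with the discretization error proves the bound for $\stepCE$, hence (by the reduction above) for $\stepCEsub$.

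The main obstacles are the two places where a naive argument loses too much. First, because the variance proxy $\Var_T$ is random one cannot simply plug it into a tail bound; the dyadic peeling together with the truncation split $W_j = W_j\1{W_j \le \theta_j} + W_j\1{W_j > \theta_j}$ is what prevents an extra $\Theta(\log T)$ factor (a Cauchy--Schwarz reduction would instead cost a ruinous $\Theta(T^{1/4})$ factor). Second, to obtain the bound in terms of $\Var_T = \sum_t \pstar_t(1-\pstar_t)$ rather than the generally incomparable $\sum_t \Ex{}{\pstar_t(1-\pstar_t) \mid x_{1:t-1}}$, the martingale concentration must be run in the finer filtration where $\pstar_t$ is observed before $x_t$, so that the predictable quadratic variation is a pathwise object dominated by $\Var_T$.
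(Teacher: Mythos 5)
Your proof is correct, and it shares the paper's outer skeleton (replace the supremum over $\alpha\in[0,1]$ by a $\poly(T/c)$-fine grid, control the rounding error by the maximum bin occupancy of the $\pstar_t$'s, then apply Freedman plus a union bound over the grid to the martingales $M^{(\alpha)}$, in the filtration where $\pstar_t$ is revealed before $x_t$ so that the predictable quadratic variation is the pathwise $\sum_t \1{\pstar_t\le\alpha}\pstar_t(1-\pstar_t)\le\Var_T$), but it handles the key difficulty---the randomness of $\Var_T$---by a genuinely different device. The paper follows the doubling trick of~\cite{HQYZ24}: it cuts the time horizon into adaptively stopped epochs whose realized variance is capped at $2^{k-1}$ by the stopping rule, bounds the supremum within each epoch (where the variance proxy is deterministic), and sums over epochs, using the fact that epoch $k$ is reached only when $\Var_T + cT\gtrsim 2^{k}$ to convert the sum into $\E[\sqrt{\Var_T}]$. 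You instead keep a single martingale over the whole horizon and peel on the terminal value of $\Var_T$, invoking Freedman in its joint-event form $\Pr[|M^{(\alpha)}_T|\ge\lambda,\ \langle M^{(\alpha)}\rangle_T\le v]$ together with the truncation $W_j\1{\mathrm{band}_j}\le\theta_j\1{\mathrm{band}_j}+W_j\1{W_j>\theta_j}$; your pairwise bound for $\E[\max_i N_i]=O(1)$ also fills in a step the paper only asserts. Each route buys something: because your dyadic bands of $\Var_T$ are disjoint events, the additive $\log|V_\eps|$ terms do not accumulate over the $O(\log T)$ scales, so you in fact get an additive $O(\log(T/c))$ rather than the stated $O(\log^2(T/c))$ (a strictly stronger conclusion, which is fine); the paper's epoch machinery, by contrast, keeps the variance proxy deterministic inside each block and is exactly the scaffolding that is reused---with the extra multiscale chaining over subintervals of $[0,1]$---in the sharper Theorem~\ref{thm:strongupperbound}, where the grid must be coarse ($\approx c$-spaced) and your single-grid union bound would no longer give the $\sqrt{\log(1/c)}$ factor. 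Your treatment of subsampling (condition on $S$, note the restricted martingale has quadratic variation at most $\Var_T$, average over $S$) matches the paper's and is sound, as is your observation that \Cref{lemma:stepCE-vs-stepCEsub} cannot be used here.
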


Combining \Cref{lemma:lower,lemma:upper} shows that $\stepCEsub$ is $(\alpha, \beta)$-truthful for $\alpha = O(\sqrt{\log(T/c)})$ and $\beta = O(\log^2(T/c))$ in any $c$-smoothed setting.

\begin{proof}[Proof sketch]
    In the definition of $\stepCE$, we replace $\alpha \in [0, 1]$ in the supremum with a $(c/T^2)$-net of $[0, 1]$. Note that the change in the calibration measure is bounded by the maximum number of $\pstar$s that fall into the same length-$(c/T^2)$ bin. This can be shown to be $O(1)$ in expectation and will not be dominating.

    For each fixed choice of $\alpha$ (out of the $O(T^2/c)$ choices), the quantity $\sum_{t=1}^{T}(x_t - \pstar_t)\cdot\1{\pstar_t \in [0, \alpha]}$ induces a martingale. Then, we apply the same doubling trick as in~\cite{HQYZ24}to bound its deviation from zero. For the block in which the realized variance is roughly $2^k$ ($k = 0, 1, 2, \ldots, O(\log T))$, we take a union bound over $O(T^2/c)$ martingales, each with $\le T$ increments bounded between $\pm 1$ and a total realized variance $\le 2^k$. Freedman's inequality and the union bound show that the maximum deviation is at most $O\left(\sqrt{2^k\log(T/c)} + \log(T/c)\right)$ in expectation. Summing over $k$ would then give an upper bound of
    \[
        \Ex{}{\sqrt{\Var_T}} \cdot O(\sqrt{\log (T/c)}) + O(\log T) \cdot O(\log(T/c)).
    \]

    Finally, we note that $\sqrt{x} \le \gamma(x) + 1$ holds for every $x \ge 0$, so the upper bound above can be further relaxed to $\Ex{}{\gamma(\Var_T)} \cdot O(\sqrt{\log (T/c)}) + O(\log^2(T/c))$.
\end{proof}

\subsection{A Tighter Truthfulness Guarantee}
We give a more refined analysis that improves the $\sqrt{\log(T/c)}$ multiplicative factor to $\sqrt{\log(1/c)}$ by implementing a more involved covering strategy.

We start by noting that an $\Omega(\sqrt{\log(1/c)})$ factor is unavoidable; this follows from a ``binary search'' construction similar to the one for the $O(\sqrt{T})$-$\Omega(T)$ truthfulness gap of U-Calibration (\Cref{prop:subsamptruth}).

\begin{proposition}
\label{prop:soundlowerbound}
For any $c \in [\Theta(2^{-T}), \Theta(1)]$, in the $c$-smoothed setting, the step calibration error measure has an $O(\sqrt{cT} + 1)$-$\Omega(\sqrt{T \log(1/c)})$ truthfulness gap and the subsampled step calibration error measure has an $O(\sqrt T)$-$\Omega(\sqrt{T \log(1/c)})$ truthfulness gap.
\end{proposition}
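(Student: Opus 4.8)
The plan is to transplant the adversarial binary‑search construction of \Cref{prop:subsamptruth} into the $c$‑smoothed world, where the adversary can no longer pin down a conditional probability exactly but only to a window of width $c$. The fix is to replace each single refinement step of the binary search by an entire \emph{phase} of $m \asymp T/K$ timesteps, where $K = \Theta(\log(1/c))$ is the number of refinements one can perform before the interval shrinks to the resolution floor $\Theta(c)$. Concretely: partition $[T]$ into $K$ phases of $m=\lfloor T/K\rfloor$ steps, and maintain nested intervals $I_1\supset I_2\supset\cdots\supset I_{K+1}$ with $I_1\subseteq[1/3,2/3]$, $|I_k|=2^{-(k-1)}|I_1|$, and (by choosing $K=\lfloor\log_2(1/c)\rfloor-O(1)$) $|I_{K+1}|$ a fixed small multiple of $c$. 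Writing $\mu_k$ for the midpoint of $I_k$, during phase $k$ the adversary draws each $\pstar_t\sim\Unif([\mu_k-c/2,\mu_k+c/2])$ (a legal $c$‑smoothed distribution), reveals it, observes $x_t\sim\Bern(\pstar_t)$, and at the end of the phase sets $I_{k+1}$ to the upper half of $I_k$ if $\sum_{t\in\text{phase }k}(x_t-\mu_k)>0$ and the lower half otherwise. A short geometric‑series computation (the same one used in \Cref{prop:subsamptruth}) shows the phase windows $[\mu_k-c/2,\mu_k+c/2]$ are pairwise disjoint, and that with $\alpha^\star$ the midpoint of $I_{K+1}$, every phase that went ``up'' lies entirely below $\alpha^\star$ and every phase that went ``down'' lies entirely above it.

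For the \textbf{truthful forecaster} this yields, at the threshold $\alpha=\alpha^\star$,
\[
\stepCE(x,\pstar)\;\ge\;\Bigl|\sum_{k:\,\text{phase }k\text{ went up}} D_k\Bigr|,\qquad D_k:=\sum_{t\in\text{phase }k}(x_t-\pstar_t).
\]
I would decompose $D_k=S_k+E_k$ with $S_k=\sum_{t\in k}(x_t-\mu_k)$ — which is exactly the adversary's decision variable, so ``went up'' $\iff S_k>0$ — and $E_k=\sum_{t\in k}(\mu_k-\pstar_t)$. Conditioned on $\mu_k\in[1/3,2/3]$, the $x_t$ in phase $k$ are i.i.d.\ $\Bern(\mu_k)$, so $S_k$ has variance $\Theta(m)$ and, by standard anti‑concentration, $\E[S_k^+\mid\mu_k]=\Omega(\sqrt m)$, while $\E[|E_k|\mid\mu_k]=O(c\sqrt m)$. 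Hence $\E[D_k\1{S_k>0}]\ge\Omega(\sqrt m)-O(c\sqrt m)=\Omega(\sqrt m)$ once $c$ is below a universal constant, and summing over the $K$ phases together with Jensen gives $\err_{\stepCE}(\cP,\Atruthful)\ge|\sum_k\E[D_k\1{S_k>0}]|=\Omega(K\sqrt m)=\Omega(\sqrt{KT})=\Omega(\sqrt{T\log(1/c)})$. The $\stepCEsub$ lower bound is immediate from $\stepCEsub\ge\tfrac12\stepCE$ (\Cref{lemma:stepCE-vs-stepCEsub}). The regime $c=\Omega(1)$ is separate and trivial: there $\log(1/c)=O(1)$ and $\Var_T=\Omega(T)$, so $\err_{\stepCE}(\cP,\Atruthful)\ge\OPT_{\stepCE}(\cP)=\Omega(\sqrt T)$ already by \Cref{lemma:lower}.

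For the \textbf{strategic forecaster}, the $\stepCEsub$ claim reduces — via $\stepCEsub(x,p)\le\tfrac12\stepCE(x,p)+O(\sqrt T)$ (\Cref{lemma:stepCE-vs-stepCEsub}) — to exhibiting a forecaster with $\E[\stepCE]=O(\sqrt T)$; since $\Var_T=\Theta(T)$ this is optimal up to constants/logs and is achieved by an online step‑calibration algorithm (the explicit $\tilde O(\sqrt T)$ algorithm of \Cref{thm:alg}, or by patching, since the forecaster knows $\cP$). The $\stepCE$ bound $O(\sqrt{cT}+1)$ needs more: the forecaster must exploit that the \emph{non‑subsampled} measure is patchable. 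I would have it use knowledge of the phase structure to continually cancel its accumulated bias — effectively running step calibration at the coarse scale of the width‑$c$ windows rather than at full resolution — so that for \emph{every} threshold $\alpha$ the caught partial sum is $O(\sqrt{cT}+1)$ rather than $O(\sqrt T)$.

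The bulk of the work — the construction, the window‑disjointness and $\alpha^\star$‑separation bookkeeping, and the phase‑wise anti‑concentration — is a routine transcription of \Cref{prop:subsamptruth} with ``single step'' replaced by ``phase''. The \emph{main obstacle} is the last step above: designing and analyzing the strategic forecaster for the non‑subsampled measure so that it simultaneously controls all thresholds (not just the total bias) down to the $O(\sqrt{cT}+1)$ level, which is where the delicate martingale/combinatorial accounting lives and where the $O(\sqrt{cT}+1)$ (as opposed to $O(\sqrt T)$) strategic guarantee must be squeezed out.
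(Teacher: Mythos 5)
Your truthful-forecaster analysis is essentially the paper's argument (binary search at resolution $c$, fix $\alpha=\alpha^\star$, drop the absolute value, and get $\Omega(\sqrt{m})$ per phase by anti-concentration), and that half would go through. The genuine gap is the strategic-forecaster side, which you yourself flag as ``the main obstacle'' and leave unproven --- and your construction in fact forecloses it. Because you fill the \emph{entire} horizon with binary-search phases whose conditional probabilities stay in $[1/3,2/3]$, every timestep carries constant conditional variance, so $\Var_T=\Theta(T)$ no matter what the forecaster does. There is then no low-variance region against which accumulated bias can be cancelled: any attempt to patch a $\Theta(\sqrt{T})$ bias requires predicting values bounded away from the true means for $\Omega(\sqrt{T})$ steps, and the thresholds that exclude the patching bucket (or include only it) still catch an $\Omega(\sqrt{T})$ partial sum. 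So the claimed $\OPT_{\stepCE}(\cP)=O(\sqrt{cT}+1)$ --- which for small $c$ is $O(1)$ --- cannot hold for your $\cP$; the best one can hope for there is $\Theta(\sqrt{T})$. Even your fallback for the subsampled measure is short of the statement: \Cref{thm:alg} only gives $O(\sqrt{T\log T})$, not the claimed $O(\sqrt{T})$, and ``patching, since the forecaster knows $\cP$'' is exactly the step that needs a construction and a multi-threshold analysis.

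The paper's construction differs in precisely this respect: it spends only the first $T/5$ steps on the $k=\Theta(\log(1/c))$ binary-search epochs and devotes the remaining $4T/5$ steps to near-deterministic probabilities (uniform on $[0,c]$, then on $[1-c,1]$). This keeps the truthful lower bound $\Omega(\sqrt{Tk})$ (the last epoch contributes nothing at $\alpha^\star$), while giving the dishonest forecaster something to patch against: it predicts $\tfrac12$ in the first $T/5$ steps, then $\tfrac{c}{2}$ during the near-zero run, then keeps predicting $\tfrac12$ during the near-one run until the bias at level $\tfrac12$ cancels, and finally $1-\tfrac{c}{2}$. Every threshold then catches either the cancelled $\tfrac12$-bucket or buckets whose outcomes have variance $O(c)$ per step, yielding $\OPT_{\stepCE}(\cP)=O(\sqrt{cT}+1)$, and the $O(\sqrt{T})$ bound for $\stepCEsub$ follows from \Cref{lemma:stepCE-vs-stepCEsub} rather than from \Cref{thm:alg}. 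To repair your proposal you would need to modify the nature distribution in the same spirit (reserve a large low-variance portion of the horizon) and then actually carry out the all-thresholds analysis of the patching forecaster; as written, the proposal establishes only the $\Omega(\sqrt{T\log(1/c)})$ half of the gap.
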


\begin{proof}
Without loss of generality, we assume that $c \leq \tfrac 1 {16}$ and $T$ is divisible by $5\left\lfloor\log_2\frac{1}{8c}\right\rfloor$.

\paragraph{Construction of $\cP$.} We construct the $c$-smoothed setting $\cP$ by specifying the distribution from which each $\pstar_t$ is drawn. We divide the timesteps $[T]$ into $k + 1$ epochs $T_1, \dots, T_k, T_{k+1}$, where $k = \left\lfloor\log_2\frac{1}{8c}\right\rfloor$, epoch $T_{k+1} = \left\{\tfrac T5 + 1, \tfrac T5 + 2, \ldots, T\right\}$, and epoch $T_i = \left\{\tfrac{T(i - 1)}{5k} + 1, \tfrac{T(i - 1)}{5k} + 2, \ldots, \tfrac{T i }{5k}\right\}$ for all $i \in [k]$. This division is well-defined by our assumption that $T$ is divisible by $5k$.

For each $i \in [k]$, every timestep $t$ in epoch $T_i$ will have the same distribution of $\pstar_t$: $\pstar_t$ is uniformly distributed over $[w_i-\tfrac c2, w_i+\tfrac c2]$, where $w_i$ is defined in terms of the realized events of the previous epoch $\{x_{t}\}_{t \in T_{i-1}}$: We set $w_1 = \tfrac 12$ and, for every $i \ge 2$,
\[
w_{i} = 
\begin{cases} 
w_{i-1} + \frac{1}{2^{i+2}} & \text{if } \mu_{i-1} \geq w_{i-1}, \\
w_{i-1} - \frac{1}{2^{i+2}} & \text{if } \mu_{i-1} < w_{i-1},
\end{cases}
\]
where we use $\mu_i = \tfrac 1 {|T_{i}|} \sum_{t \in T_{i}} x_t$ to denote the average outcome in epoch $T_i$.
This guarantees that $|w_i - w_{i'}| \geq 2^{-(k + 3)} \geq c$ for all $i, i' \in [k+1]$ where $i \neq i'$ and that every $\pstar_t$ falls into the interval $[1/4, 3/4]$.
This in turn ensures $\alpha^\star = w_{k+1}$ satisfies, for every $i \in [k]$: (1) $w_i \notin [\alpha^\star - c, \alpha^\star + c]$, (2) $w_i  < \alpha^\star$ implies $\mu_i \geq w_i$, and (3) $w_i > \alpha^\star$ implies $\mu_i < w_i$.

For the last epoch, which spans the last $4T/5$ timesteps, we define $\pstar_t$ to be sampled uniformly from $[0, c]$ for timesteps $t \in [T/5+1, 3T/5]$ and $\pstar_t$ to be sampled uniformly from $[1 - c, 1]$ for timesteps $t \in [3T/5+1, T]$.

\paragraph{Truthful forecaster.}
We will analyze the subsampled step calibration error of the truthful forecaster, with the (non-subsampled) step calibration error bound following identically.
We first lower bound $\stepCEsub$ by fixing $\alpha = \alpha^\star$ and removing the absolute value:
\begin{align*}
\stepCEsub(x, \pstar)
&= \EEs{S \sim \Unif(2^{[T]})}{\sup_{\alpha \in [0, 1]} \abs{\sum_{t=1}^T (x_t - \pstar_t) \cdot \1{\pstar_t \in [0, \alpha] \land t \in S}}}
\\
&\geq \EEs{S \sim \Unif(2^{[T]})}{\sum_{t=1}^T (x_t - \pstar_t) \cdot \1{\pstar_t \in [0, \alpha^\star] \land t \in S}}.
\end{align*}
We next bound in expectation the right-hand summand for each epoch $T_i$ individually.
Fixing such an epoch $i$ and realization of previous outcomes $x'_{1:\max T_{i-1}}$ where $\max T_{i-1}$ is the last timestep of epoch $i-1$, we can lower bound
\begin{align*}
&~\EEsc{\substack{S \sim \Unif(2^{[T]}) \\ (x, \pstar) \sim \cP}}{\sum_{t \in T_i}  (x_t - \pstar_t) \cdot \1{\pstar_t \in [0, \alpha^\star] \land t \in S}}{x_{1:\max T_{i-1}} = x'_{1:\max T_{i-1}}} \\
=   &~\frac 12 \EEsc{\substack{ (x, \pstar) \sim \cP}}{\sum_{t \in T_i}  (x_t - w_i) \cdot \1{w_i \in [0, \alpha^\star]}}{x_{1:\max T_{i-1}} = x'_{1:\max T_{i-1}}}
\\
=   &~\frac 12  \EEsc{\substack{(x, \pstar) \sim \cP}}{ \EEs{X \sim \Binomial(|T_i|, w_i)}{\max\{X - |T_i| \cdot w_i, 0\}}}{x_{1:\max T_{i-1}} = x'_{1:\max T_{i-1}}} \\
\geq &~\frac 12 \min_{p \in [1/4, 3/4]} \EEs{\substack{X \sim \Binomial(|T_i|, p) }}{\max\{X - |T_i| \cdot p, 0\}}.
\end{align*}
In the above, the first step applies $w_i \notin [\alpha^\star - c, \alpha^\star + c]$ and marginalizes out $S$. The second step holds since $w_i \in [0, \alpha^\star]$ if and only if $\mu_i \ge w_i$, which is equivalent to $\sum_{t \in T_i}x_t \ge |T_i|\cdot w_i$. The last inequality follows from $w_i \in [1/4, 3/4]$.
To bound the last expectation, we first observe that the variance $\sigma^2$ of $X \sim \Binomial(|T_i|, p)$ is at least $\sigma^2 \geq \tfrac 3 {16} |T_i|$.
By the Berry-Esseen central limit theorem, we have $\Pr[X - |T_i| p \leq \sigma] \leq \Phi(1) + O(1/\sqrt{|T_i|})$.
Thus,
\[
    \EEs{X}{\max\{X - |T_i| \cdot p, 0\}}
\ge \sigma \cdot \Pr[X - |T_i| p \geq  \sigma]
\ge \Omega(\sigma)
\ge \Omega\left(\sqrt{|T_i|}\right).
\]
We can thus bound the step calibration error in the first $k$ epochs by
\begin{align*}
&\EEs{(x, \pstar) \sim \cP, S \sim \Unif(2^{[T]})}{\sum_{i=1}^k \sum_{t \in T_i}  (x_t - \pstar_t) \cdot \1{\pstar_t \in [0, \alpha^\star] \land t \in S}} 
\geq \Omega\left(\sum_{i=1}^k \sqrt{|T_i|}\right) = \Omega(\sqrt{Tk}).
\end{align*}
By definition of $\cP$, in the last epoch $k+1$, $\pstar_t \in [0, \alpha^\star]$ for the first $2T/5$ timesteps and  $\pstar_t \notin [0, \alpha^\star]$ for the last $2T/5$ timesteps.
Thus,
\begin{align*}
    &~\EEs{(x, \pstar) \sim \cP, S \sim \Unif(2^{[T]})}{\sum_{t \in T_{k+1}} (x_t - \pstar_t) \cdot \1{\pstar_t \in [0, \alpha^\star] \land t \in S}}\\
=   &~\frac 12 \EEs{(x, \pstar) \sim \cP}{\sum_{t=T/5+1}^{3T/5} (x_t - c/2)} 
= 0
\end{align*}
Combining the epochs, we obtain a subsampled step calibration error of at least
\[
    \err_{\stepCEsub}(\cP, \Atruthful) = \EEs{(x, \pstar) \sim \cP}{\stepCEsub(x, \pstar)} \geq \Omega(\sqrt{Tk}) = \Omega\left(\sqrt{T \log(1/c)}\right).
\]
We can easily verify that the same analysis goes through without subsampling, i.e., we also have $\err_{\stepCE}(\cP, \Atruthful) = \Omega(\sqrt{T \log(1/c)})$.

\paragraph{Non-truthful forecaster.}
We now turn to upper bounding the step calibration error of a dishonest forecaster. Suppose that the dishonest forecaster, denoted by $\cA$, predicts $p_t = \tfrac 12$ for the first $T/5$ timesteps and computes $\Delta = \sum_{t=1}^{T/5} x_t - \tfrac T {10}$, which denotes the deviation of realized outcomes from our prediction. Note that $|\Delta| \leq T/10$. If $\Delta < 0$, $p_t = \tfrac 12$ was an overestimate. The forecaster $\cA$ then predicts $p_t = \tfrac c2$ for timesteps $t \in [T/5+1, 3T/5]$. For $t \in [3T/5 + 1, T]$, $\cA$ predicts $\tfrac12$ until the bias at $\tfrac 12$ becomes non-negative. Formally, let $T' \asseq \max \bset{t \in [T] \mid \Delta + \sum_{\tau=3T/5+1}^{t-1} (x_{\tau-1} - \tfrac 12) < 0}$. Forecaster $\cA$ predicts $p_t = \tfrac 12$ for $t \in [3T/5+1, T']$ and $p_t = 1 - \tfrac c2$ for $t \in [T' + 1, T]$.
That is, we intentionally underestimate the true $\pstar_t = 1-\tfrac c2$ by guessing $p_t = \tfrac 12$ to cancel out the bias from the first $k$ epochs.
Note that this implies $\Delta + \sum_{t=3T/5+1}^{T} (x_{t-1} - \tfrac 12) \leq \tfrac 12$.
Let us condition on the event $E$ that $T' \leq 9T/10$ (and $\Delta < 0$), which occurs with probability at least $1 - \exp(-\Omega(T))$ by Hoefdding's inequality as the complementary event would require that at least $T/10$ of the timesteps $t \in [3T/5+1, 9T/10]$ result in $x_t = 0$.
Then,
\begin{align*}
    &~\EEsc{(x, p) \sim (\cP, \A)}{\stepCE(x, p)}{\Delta < 0} \\
=   &~\EEsc{(x, p) \sim (\cP, \A)}{\sup_{\alpha \in [0, 1]} \abs{\sum_{t=1}^T (x_t - p_t) \cdot \1{p_t \in [0, \alpha]}}}{\Delta < 0} \\
\leq&~\EEsc{(x, p) \sim (\cP, \A)}{\abs{\sum_{t=1}^T (x_t - p_t) \cdot \1{p_t = \tfrac 12}}}{E} + \EEsc{(x, p) \sim (\cP, \A)}{\abs{\sum_{t=1}^T (x_t - p_t) \cdot \1{p_t = \tfrac c2}}}{E} \\
& \quad + \EEsc{(x, p) \sim (\cP, \A)}{\abs{\sum_{t=1}^T (x_t - p_t) \cdot \1{p_t = 1 - \tfrac c2}}}{E} + T \cdot \exp(-\Omega(T)) \\
\leq&~\tfrac 12 + \EEs{X \sim \Binomial(2T/5, \tfrac c2)}{\abs{X - \tfrac {cT}5}} + \EEs{X \sim \Binomial(T-T', \tfrac c2)}{\abs{X - \tfrac {c(T - T')}2}} + o(1)\\
\leq&~O(\sqrt{cT} + 1).
\end{align*}

The case where $\Delta \geq 0$ follows symmetrically.
Defining $T' \asseq \max \{t \in [3T/5] \mid \Delta + \sum_{\tau=T/5+1}^{t-1} (x_{\tau-1} - \tfrac 12) < 0\}$, we fix predictions of $p_t = 1 - \tfrac c2$ for timesteps $t \in [3T/5+1, T]$, $p_t = \tfrac 12$ for $t \in [T/5+1, T']$, and $p_t = \tfrac c2$ for $t \in [T' + 1, 3T/5]$.
The event $E$ that $T' \leq 5T/10$ (and $\Delta > 0$) occurs with probability at least $1 - \exp(-\Omega(T))$ by Hoefdding's inequality as the complementary event would require that at least $T/10$ of the timesteps $t \in [T/5+1, 5T/10]$ result in $x_t = 1$.
Then,
\begin{align*}
& \EEsc{(x, p) \sim (\cP, \A)}{\stepCE(x, p)}{\Delta \geq 0}  \leq O(\sqrt{cT} + 1).
\end{align*}

Therefore, we have
\[
    \OPT_{\stepCE}(\cP)
\le \EEs{(x, p) \sim (\cP, \A)}{\stepCE(x, p)} \leq O(\sqrt{cT} + 1).
\]
Applying the inequality $\stepCEsub(x, p) \le \frac{1}{2}\stepCE(x, p) + O(\sqrt{T})$ from \Cref{lemma:stepCE-vs-stepCEsub} gives
\[
    \OPT_{\stepCEsub}(\cP)
\le O(\sqrt{T}).
\]
Since both $\err_{\stepCE}(\cP, \Atruthful)$ and $\err_{\stepCE}(\cP, \Atruthful)$ are lower bounded by $\Omega(\sqrt{T\log(1/c)})$, there is an $O(\sqrt{cT} + 1)$-$\Omega(\sqrt{T\log(1/c)})$ truthfulness gap for the step calibration error and an $O(\sqrt{T})$-$\Omega(\sqrt{T\log(1/c)})$ truthfulness gap for the subsampled step calibration error.
\end{proof}

We now turn to proving the tighter upper bound for the truthful forecaster's error.
\begin{theorem}
\label{thm:strongupperbound}
For any $T \ge 2$, smoothness parameter $c \in (0, 1]$, and $c$-smoothed setting specified by $\cP$, the expected subsampled step calibration error of truthfully predicting conditional probabilities $\pstar_{1:T}$, $\err_{\stepCEsub}(\cP, \cA^{\mathrm{truthful}})$, is upper bounded by
\begin{align*}
    &~\left(547 \sqrt{\ln(1 / c)} + 1161\right) \EE{\sqrt{\Var_T}} + (\log_2 T + 3)[16\log_2(T)\log_2(T/c) + 15] + 8 \sqrt{2\ln(1/c)} + 17\sqrt{2}\\
&~\;\le O\left(\sqrt{\log(1/c)}\right)\cdot\EE{\sqrt{\Var_T}} + O\left(\log^2(T)\log(T/c)\right).
\end{align*}
\end{theorem}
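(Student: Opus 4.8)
The plan is to bound $\err_{\stepCEsub}(\cP,\Atruthful)=\EE{\sup_{\alpha\in[0,1]}\bigl|\sum_{t=1}^{T}(x_t-\pstar_t)\1{\pstar_t\le\alpha\wedge t\in S}\bigr|}$, where the expectation is over the realized events $x$, the realized conditional probabilities $\pstar$, and the subsample $S$, and the truthful forecaster predicts $p_t=\pstar_t$. It suffices to carry out the argument with $\stepCE$ in place of $\stepCEsub$: subsampling only multiplies each increment by the predictable indicator $\1{t\in S}$, which can only shrink quadratic variations, so every martingale estimate below goes through verbatim. Since $\sup_\alpha|\cdot|$ is subadditive over any partition of $[T]$, I first split $[T]$ into $K+1=O(\log T)$ contiguous epochs $B_0,\dots,B_K$ according to the realized cumulative variance $\sigma_t^2\coloneqq\sum_{s\le t}\pstar_s(1-\pstar_s)$, namely $B_0=\{t:\sigma_t^2\le 1\}$ and $B_k=\{t:2^{k-1}<\sigma_t^2\le 2^k\}$. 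Because each $\pstar_t$ is revealed before $x_t$ is drawn, the membership $\1{t\in B_k}$ is measurable with respect to the information available just before $x_t$, so this partition respects the martingale filtration, and the realized variance inside each $B_k$ is at most $2^k$. This reduces the problem to bounding, for each $k$, the quantity $\EE{\sup_\alpha\bigl|\sum_{t\in B_k}(x_t-\pstar_t)\1{\pstar_t\le\alpha}\bigr|}$ and summing; the point of doubling over realized variance is that $\sum_k\sqrt{2^k}=O(\sqrt{\Var_T})$, which is what ultimately produces the $\EE{\sqrt{\Var_T}}$ in the statement.

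Fix an epoch $B_k$ and set $\eps\coloneqq c/T^2$, $L\coloneqq\lceil 2\log_2 T\rceil$. First, replacing $\sup_{\alpha\in[0,1]}$ by the maximum over the $\eps$-net $V_\eps=\{0,\eps,2\eps,\dots\}$ costs at most the largest number of $\pstar_t$ ($t\in B_k$) in a common length-$\eps$ bin; by $c$-smoothness each $\pstar_t$ falls in a fixed such bin with (conditional) probability $\le\eps/c=1/T^2$, so Freedman's inequality plus a union bound over the $O(T^2/c)$ bins shows this overflow is $O(\log(T/c))$ with failure probability $1/\poly(T)$, contributing only to the additive term. Next, every $\alpha\in V_\eps$ has a dyadic decomposition $[0,\alpha]=[0,\beta]\sqcup I_1\sqcup\cdots\sqcup I_L$, where $[0,\beta]$ is the union of the full width-$c$ blocks contained in $[0,\alpha]$ (so $\beta\in V_c\coloneqq\{0,c,2c,\dots\}$) and each $I_i$ is at most one interval of the width-$c2^{-i}$ grid. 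Writing $M^J\coloneqq\sum_{t\in B_k}(x_t-\pstar_t)\1{\pstar_t\in J}$, this yields
\[
\sup_{\alpha\in V_\eps}\Bigl|\sum_{t\in B_k}(x_t-\pstar_t)\1{\pstar_t\le\alpha}\Bigr|\ \le\ \max_{\beta\in V_c}\bigl|M^{[0,\beta]}\bigr|\ +\ \sum_{i=1}^{L}\ \max_{I\text{ a width-}c2^{-i}\text{ grid interval}}\bigl|M^{I}\bigr|.
\]

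Each $M^J$ is a martingale with increments in $[-1,1]$ and quadratic variation $\langle M^J\rangle=\sum_{t\in B_k,\ \pstar_t\in J}\pstar_t(1-\pstar_t)\le\Var_{B_k}\le 2^k$. For the width-$c$ prefixes there are only $O(1/c)$ choices of $\beta$, so Freedman plus a union bound give $\max_{\beta\in V_c}|M^{[0,\beta]}|=O\bigl(\sqrt{2^k\log(1/c)}+\log(1/c)\bigr)$ with high probability. For the level-$i$ term, the crucial point — and the step I expect to be the main obstacle — is to show that the per-interval quadratic variation genuinely carries a factor $2^{-i}$: a width-$c2^{-i}$ interval receives only a $2^{-i}$-fraction of the sampling mass (by $c$-smoothness, the number of $\pstar_t$ in it is $\le 2^{-i}|B_k|$ up to an $O(\log(T/c))$ deviation, uniform over all $i,k$ and intervals), and combining this with the fact that those steps each contribute at most $\sup_{p\in I}p(1-p)$ to the variance, together with the global budget $\sum_I\langle M^I\rangle=\Var_{B_k}\le 2^k$ and the decay of $\sup_{p\in I}p(1-p)$ near the endpoints of $[0,1]$, one obtains $\langle M^I\rangle=\tilde{O}(2^{-i}\cdot 2^k)$. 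Granting this, Freedman plus a union bound over the $O(2^i/c)$ grid intervals gives $\max_I|M^I|=O\bigl(\sqrt{2^{-i}\cdot 2^k\cdot\log(2^i/c)}+\log(2^i/c)\bigr)$. The delicate bookkeeping here — disentangling ``many near-deterministic timesteps inflating $|B_k|$'' from ``high per-step variance'' — is exactly the more involved covering strategy that improves the multiplicative factor from $\sqrt{\log(T/c)}$ (the coarser analysis of \Cref{lemma:upper}) down to $\sqrt{\log(1/c)}$.

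It remains to collect the estimates. Summing the level-$i$ bounds over $i=1,\dots,L$ gives $O\bigl(\sum_{i\ge1}2^{-i/2}\sqrt{2^k(i+\log(1/c))}\bigr)=O\bigl(\sqrt{2^k\log(1/c)}\bigr)$, since $\sum_{i\ge1}2^{-i/2}\sqrt{i+\log(1/c)}=O(1+\sqrt{\log(1/c)})$; it is precisely this geometric — rather than uniform — dependence on the $L=O(\log T)$ levels that avoids paying a $\log T$ factor and instead pays only $\sqrt{\log(1/c)}$. Adding the width-$c$ term and then summing over the $K+1=O(\log T)$ epochs, the leading parts contribute $\sum_k O\bigl(\sqrt{2^k\log(1/c)}\bigr)=O\bigl(\sqrt{\Var_T\log(1/c)}\bigr)$; taking expectations and using $\EE{\sqrt{\Var_T\log(1/c)}}=\sqrt{\log(1/c)}\,\EE{\sqrt{\Var_T}}$ gives the stated main term. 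The additive parts — the $O(\log(T/c))$ Freedman correction incurred at each of the $O(\log T)$ levels within each of the $O(\log T)$ epochs, plus the $O(\log(T/c))$ discretization overflow — sum to $O\bigl(\log^2(T)\log(T/c)\bigr)$, matching the additive term in the statement. Finally, the various high-probability statements are converted to bounds in expectation using the crude worst-case bound $\stepCEsub\le T$ together with the $1/\poly(T)$ failure probabilities.
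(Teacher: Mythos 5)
Your overall architecture matches the paper's (variance-doubling epochs, a multi-scale decomposition of the threshold $\alpha$ into a coarse prefix plus dyadically refined segments, Freedman's inequality with a union bound at each scale, and a geometric sum over scales), but the step you yourself flag as ``the main obstacle'' is exactly where the argument breaks, and the route you sketch for it does not go through. You need $\langle M^I\rangle=\tilde O(2^{-i}\cdot 2^k)$ for a width-$c2^{-i}$ interval $I$ inside an epoch of realized variance at most $2^k$. With your epochs $B_k$ defined by realized variance alone this claim is false: consider the (product, $c$-smoothed) prior whose density is $1/c$ on an interval $I$ of width $c2^{-i}$ centered at $1/2$ (mass $2^{-i}$) and $1/c$ on $[0,(1-2^{-i})c]$ (the remaining mass). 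Each step contributes $\approx 2^{-i}/4$ to the realized variance through $I$ and only $O(c)$ through the rest, so for $c\ll 2^{-i}$ the epoch $B_k$ lasts $\Theta(2^{k+i})$ steps and essentially all of its variance sits inside $I$: $\langle M^I\rangle=\Theta(2^k)$, not $\tilde O(2^{-i}2^k)$. None of the ingredients you invoke repairs this: the count bound gives $\langle M^I\rangle\lesssim 2^{-i}|B_k|\sup_{p\in I}p(1-p)$, and here $|B_k|\sup_{p\in I}p(1-p)=\Theta(2^{k+i})\gg 2^k$; the ``global budget'' $\sum_I\langle M^I\rangle\le 2^k$ only yields the trivial per-interval bound $2^k$ without the $2^{-i}$ factor; and the endpoint decay of $p(1-p)$ is irrelevant for $I$ near $1/2$. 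Falling back on $\langle M^I\rangle\le 2^k$ puts you back at (at best) the $\sqrt{\log(T/c)}$ multiplicative factor already provided by \Cref{lemma:upper}, i.e., you lose precisely the improvement this theorem asserts.

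The paper's proof avoids this by changing both knobs you kept fixed, and the two changes work together. First, epochs are stopped when $\Var_t-\Var_{\tau_{k-1}}+c\,(t-\tau_{k-1})$ reaches $2^{k-1}$, which caps every epoch at $\lceil 2^{k-1}/c\rceil$ steps. Second, the level-$i$ grid has width $2^{-i}\eps^*$ with $\eps^*=c^2$, not $c2^{-i}$. Then the predictable quadratic variation of a level-$i$ segment martingale is bounded by the per-step probability $\le 2^{-i}\eps^*/c=2^{-i}c$ (smoothness alone, with the squared increment bounded by $1$) times the length cap $\le 2^k/c$, i.e.\ by $2^{k-i}$ --- a bound that never compares the segment's variance to the epoch's realized variance and is therefore immune to the concentration phenomenon above (\Cref{lemma:segment_bound}). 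Note that even granting the length-capped epochs, your width-$c2^{-i}$ intervals would only give $2^{k-i}/c$, so the finer $c^2$ base scale is essential as well; the residual error below the finest scale is then handled by a separate Chernoff count (\Cref{lemma:suffix_bound}). A smaller bookkeeping point: your per-epoch estimates are conditional on the epoch being reached and the number of epochs is random, so the final summation should be weighted by $\Pr[\tau_{k-1}<\infty]$ and converted into $\EE{\sqrt{\Var_T}}$ using the smoothness fact $\EE{\sqrt{\Var_T}}=\Omega(\sqrt{cT})$, as in \Cref{eq:gg2}; but this is minor next to the unproven variance claim.
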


Combined with the lower bound $\OPT_{\stepCEsub}(\cP) \ge \Omega\left(\Ex{\cP}{\gamma(\Var_T)}\right)$ from \Cref{lemma:lower} and the inequality $\sqrt{x} \le \gamma(x) + 1$, the theorem above shows that $\stepCEsub$ is $(O(\sqrt{\log(1/c)}, \polylog(T/c))$-truthful in $c$-smoothed settings.

\begin{proof}
For convenience, given $\alpha \in [0, 1]$ and $\by \in \{0, 1\}^T$, we define the martingale
\begin{equation*}
M_t(\alpha, \by) \coloneqq \sum_{s=1}^t y_s\cdot \1{\pstar_{s} \leq \alpha} \cdot (x_{s}-\pstar_{s}),
\end{equation*}
adapted to the filtration $(\cF_t)_{t \in [T]}$ generated by sampling $\pstar_t \sim \cP(x_1, x_2, \ldots, x_{t-1})$ and $x_t \sim \Bern(\pstar_t)$ for each $t = 1, 2, \ldots, T$.
We can verify that $M_t(\alpha, \by)$ is a martingale by observing that, conditioned on any realization of $x_{1:(s-1)} = x'_{1:(s-1)}$, we indeed have
\begin{align*}
    &~\EEsc{\pstar_s, x_s}{y_s \cdot \1{\pstar_s \leq \alpha} \cdot (x_s - \pstar_s)}{x_{1:(s-1)} = x'_{1:(s-1)}} \\ 
=   &~y_s \cdot \EEs{\pstar_s \sim \cP\left(x'_{1:(s-1)}\right)}{\1{\pstar_s \leq \alpha} \cdot \EEs{x_s \sim \Bern(\pstar_s)}{x_s - \pstar_s}}\\
=   &~0.
\end{align*}
We can thus equivalently write our main claim as:
\begin{align*}
    &~\EEs{\substack{(x, \pstar) \sim \cP\\ \by \sim \Unif(\{0, 1\}^T)}}{\sup_{\alpha \in [0, 1]} \abs{M_T(\alpha, \by)}}\\
\le &~\left(547 \sqrt{\ln(\tfrac 1  c)} + 1161\right) \EE{\sqrt{\Var_T}} + (\log_2 T + 3)[16\log_2(T)\log_2(\tfrac T c) + 15] + 8 \sqrt{2\ln(\tfrac 1c)} + 17\sqrt{2}.
\end{align*}

\paragraph{Dividing into epochs.} For any realization of $x_{1:T} \sim \cP$, we will divide the timesteps $[T]$ into epochs in the same style as \cite{HQYZ24}: Let the $k$-th epoch be the shortest period (following the $(k-1)$-th epoch) whose realized variance is roughly at least $2^{k-1}$. Formally, consider sequence $\tau_0, \tau_1, \ldots \in \mathbb{Z} \cup \{\infty\}$, where each $\tau_k$ denotes the last step of the $k$-th epoch and is defined as $\tau_0=0$ and
\[
      \tau_k \asseq \min \{t \in [\tau_{k-1} + 1, T] \mid \Var_{t} -\Var_{\tau_{k-1}} + (t - \tau_{k-1}) \cdot c \geq 2^{k-1}\} \cup \bset{\infty}.
\]
We will also write $I_k \asseq \{\tau_{k-1} + 1, \tau_{k-1} + 2, \ldots, \min\{T, \tau_k\}\}$ to denote the timesteps in epoch $k$.

We first recall the following useful facts about this epoch division from \cite{HQYZ24}.
The realized variance in epoch $k$, $\Var_{\tau_k} - \Var_{\tau_{k-1}}$, is at most $2^{k-1} + 1/4 \le 2^k$. For the $k$-th epoch to be complete (i.e., $\tau_k < \infty$), a necessary condition is that $\Var_T + cT \ge 2^{k-1}$.
In particular, since $\Var_T + cT \le T/4 + T < 2T$, the $(\ceil{\log_2 T} + 2)$-th epoch is never complete, i.e., $\tau_{\ceil{\log_2 T} + 2} = \infty$.
The random variable $\1{t \in I_k}$ for an epoch $k$ is measurable by $x_{1:(t-1)}$. We also note that the $k$-th epoch cannot last more than $\ceil{2^{k-1}/c}$ steps: If $\tau_{k-1} + \ceil{2^{k-1}/c} = t\le T$, we would have $\Var_t - \Var_{\tau_{k-1}} + (t - \tau_{k-1})\cdot c \ge \ceil{2^{k-1}/c} \cdot c \ge 2^{k-1}$, which implies $\tau_k \le \tau_{k-1} + \ceil{2^{k-1}/c}$.

Next, we observe that $\Ex{}{\sqrt{\Var_T}} = \Omega(\sqrt{cT})$ holds in every $c$-smoothed setting. Recall that each $\pstar_t$ is sampled from a distribution with density bounded by $1/c$. Therefore, with probability at least $1 - (1/c)\cdot (c/4) = 3/4$, we have $\pstar_t \in [c/8, 1 - c/8]$, which implies $\pstar_t (1 - \pstar_t) \geq (c/8) \cdot (1 - c/8) \ge \tfrac{7c}{64}$. Therefore, with probability at least $1/2$, $\pstar_t(1 - \pstar_t) \ge \tfrac{7c}{64}$ holds for at least $T/2$ values of $t \in [T]$, and it follows that
\[
    \EE{\sqrt {\Var_T}} \geq \frac{1}{2}\cdot\sqrt{\frac{7c}{64}\cdot \frac{T}{2}} \ge  \frac{\sqrt{7cT}}{16\sqrt{2}}.
\]

We also want to bound the exponentially weighted sum
\begin{equation*}
\sum_{k=2}^{\ceil{\log_2 T} + 2
} \sqrt{2^k} \Pr[\tau_{k-1} < \infty].
\end{equation*}
To this end, let $j \in \integers$ be the random variable defined such that $\Var_T + c T \in [2^j, 2^{j+1})$. 
We observe that $\sqrt{\Var_T + cT} \geq \sqrt{2^{j}}$.
Also recall that, for any $k \ge 2$, $\tau_{k-1} < \infty$ holds only if $\Var_T + cT \ge 2^{k-2}$, which in turn holds only if $k - 2 \le j$.
We can thus bound
\begin{equation}
\label{eq:gg2}
\begin{aligned}
\sum_{k=2}^{\ceil{\log_2 T} + 2
} \sqrt{2^k} \Pr[\tau_{k-1} < \infty]
&\leq
\EE{\sum_{k=2}^{\ceil{\log_2 T} + 2
} \sqrt{2^k} \cdot \1{\Var_T + cT \geq 2^{k-2}}} \\
&\leq
\EE{\sum_{k=2}^{j+2} \sqrt{2^k}} \\
&\leq (4 + 2\sqrt{2})\EE{2^{j/2}} \\
&\leq (4 + 2\sqrt 2) \EE{\sqrt{\Var_T + cT}} \\
&\leq
20 (2 + \sqrt 2) \EE{ \sqrt{\Var_T}}.
\end{aligned}
\end{equation}
The last step applies $\EE{\sqrt {\Var_T}} \ge  \frac{\sqrt{7cT}}{16\sqrt{2}}$, which gives
\[
    \Ex{}{\sqrt{\Var_T + cT}}
\le \Ex{}{\sqrt{\Var_T}} + \sqrt{cT}
\le \Ex{}{\sqrt{\Var_T}} \cdot \left(1 + \frac{16\sqrt{2}}{\sqrt{7}}\right)
\le 10\Ex{}{\sqrt{\Var_T}}.
\]

\paragraph{Dividing step functions into rounded segments.}
Let $V_{\epsilon} = \{0, \epsilon, 2\epsilon, \ldots, \floor{1/\epsilon} \epsilon\}$ denote the multiples of $\eps$ in $[0, 1]$. Note that $|V_\epsilon| = \floor{1/\epsilon} + 1 \le 2/\eps$ elements. Let us fix an epoch $k$, condition on the epoch being reached (i.e., $\tau_{k-1} < \infty$) and define the following restriction of the martingale $M_t(\alpha, \by)$ to timesteps lying in epoch $k$:
\begin{equation*}
M_t(\alpha, \by, k) \coloneqq \sum_{s=1}^t y_s\cdot \1{\pstar_{s} \leq \alpha} \cdot (x_{s}-\pstar_{s}) \cdot \1{s \in I_k}.
\end{equation*}
Recall that we want to bound the supremum of this martingale over different values of the step threshold $\alpha$. Let us fix a step threshold $\alpha \in [0, 1]$ for now.
Fixing some $\epsilon^* > 0$ and integer $m \ge 1$ which we will specify later, we can define $w_0 = \floor{\alpha / \epsilon^*} \cdot \epsilon^*$ as a rounding down of $\alpha$ onto the grid $V_{\eps^*}$ of resolution $\epsilon^*$.
Then, for all $i \in [m]$, we define recursively $w_i = w_{i-1}+2^{-i} \epsilon^*$
if $w_{i-1}+2^{-i} \epsilon^* \leq \alpha$ and $w_i = w_{i-1}$ otherwise. Equivalently, each $w_i$ is the rounding of $\alpha$ down to the nearest multiple of $2^{-i}\epsilon^*$.
Note that
\[
    \{[0, w_0]\} \cup \{(w_{i-1}, w_i]: i \in [m]\}
\]
forms a partition of $[0, w_m]$, so we have the decomposition
\[
    \1{x \leq \alpha} = \1{x \in [0, w_0]} + \sum_{i=1}^{m} \1{x \in (w_{i-1}, w_i]} + \1{x \in (w_m, \alpha]}.
\]
Thus,
\begin{align*}
\abs{M_t(\alpha, \by, k)}
& = \abs{\sum_{s=1}^t y_s \cdot \Big(\1{\pstar_s \leq w_0} + \sum_{i=1}^{m} \1{\pstar_s \in (w_{i-1}, w_i]} + \1{\pstar_s \in (w_m, \alpha]}\Big) \cdot (x_s - \pstar_s) \cdot \1{s \in I_k}} \\
& \le \abs{\sum_{s=1}^{t}y_s\cdot\1{\pstar_s \in (w_m, \alpha]}\cdot(x_s - \pstar_s)\cdot\1{s \in I_k}}
\\
&\quad
+ 
\abs{\sum_{s=1}^{t}y_s\cdot\1{\pstar_s \le w_0}\cdot(x_s - \pstar_s)\cdot\1{s \in I_k}}
\\
&\quad
+ \sum_{i=1}^{m}\abs{\sum_{s=1}^{t}y_s\cdot\1{\pstar_s \in (w_{i-1}, w_i]}\cdot(x_s - \pstar_s)\cdot\1{s \in I_k}}.
\end{align*}

We can simplify this expression by noting that the first term above is upper bounded by
\[
    \sum_{s=1}^{t}\1{\pstar_s \in (w_m, w_m + 2^{-m}\cdot\epsilon^*]},
\]
since $\alpha < w_m + 2^{-m}\cdot\epsilon^*$ while $y_s \in \{0, 1\}$ and $\abs{x_s - \pstar_s} \le 1$ hold for every $s$.
Let us also, with some abuse of notation, define for every interval $S \subset [0, 1]$:
\begin{equation*}
M_t(S, \by, k) \asseq \sum_{s=1}^t y_t \cdot \1{\pstar_s \in S} \cdot (x_s - \pstar_s) \cdot \1{s \in I_k}.
\end{equation*}
Finally, we recall that $w_i \in V_{2^{-i} \cdot \epsilon^*}$ for every $i \in \{0, 1, \ldots, m\}$.
Putting these together, we can simplify our upper bound on $\abs{M_T(\alpha, \by, k)}$ to:
\begin{equation}
\label{eq:divide}
\begin{aligned}
\sup_{\alpha \in [0, 1]} \abs{M_T(\alpha, \by, k)} \leq  &\; \max_{w_0 \in V_{\epsilon^*}} \abs{M_T(w_0, \by, k)} + \sum_{i=1}^m \max_{w_i \in V_{2^{-i} \cdot \epsilon^*}} \abs{M_T((w_i, w_i + 2^{-i} \cdot \epsilon^*], \by, k)} \\ & \quad 
+ \max_{w_m \in V_{2^{-m}\cdot\epsilon^*}}\sum_{t=1}^{T} \1{\pstar_t \in (w_m, w_m + 2^{-m} \cdot \epsilon^*]}.
\end{aligned}
\end{equation}

\paragraph{Bounding each summand.}
We first note the following lemmas, which we will prove in the sequel.

\begin{restatable}{lemma}{prefixbound}
\label{lemma:prefix_bound}
For any epoch $k$,
\begin{equation*}
\begin{aligned}
\EEc{\max_{w_0 \in V_{\epsilon^*}} \abs{M_T(w_0, \by, k)}}{\tau_{ k-1} < \infty} 
\leq & \sqrt{2^{k+1} \ln(4/\epsilon^*)} + \sqrt{2^{k-1} \pi} + 2 \ln(4/\epsilon^*)  + 2.
\end{aligned}
\end{equation*}
\end{restatable}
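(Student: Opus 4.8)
The plan is to reduce the supremum over the net $V_{\epsilon^*}$ to a union bound after establishing a Freedman-type tail bound for each individual threshold. Fix an epoch $k$ and condition throughout on $\{\tau_{k-1} < \infty\}$, which is an $\cF_{\tau_{k-1}}$-measurable event; also fix one $w_0 \in V_{\epsilon^*}$ and a realization of $\by$. First I would bound the conditional probability that $\abs{M_T(w_0, \by, k)}$ exceeds a given level, and only at the end union bound over the $\le 2/\epsilon^*$ thresholds in $V_{\epsilon^*}$ and average over $\by$.

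The one place where care is genuinely needed is controlling the quadratic variation of $M_t(w_0, \by, k)$. With respect to $(\cF_t)$ its predictable quadratic variation is $\sum_{s \in I_k} y_s \1{\pstar_s \le w_0}\, \EEs{\pstar_s \sim \cP(x_{1:(s-1)})}{\pstar_s(1-\pstar_s)}$, which can be of order $\abs{I_k} = \Theta(2^k/c)$ and is useless when $c$ is small. To get the right order $2^k$ I would pass to the refined filtration $(\cG_s)$ that reveals $\pstar_s$ before $x_s$, i.e.\ $\cG_s \asseq \sigma(\by, \cF_{s-1}, \pstar_s)$. The increment $\Delta_s \asseq y_s \1{\pstar_s \le w_0}(x_s - \pstar_s)\1{s \in I_k}$ is still a martingale difference for $(\cG_s)$ because $\EEc{x_s - \pstar_s}{\cG_s} = 0$ and the multiplier $y_s\1{\pstar_s \le w_0}\1{s\in I_k}$ is $\cG_s$-measurable; it obeys $\abs{\Delta_s} \le 1$; and now its conditional variance sums to $\sum_{s\in I_k} y_s \1{\pstar_s \le w_0}\pstar_s(1-\pstar_s) \le \sum_{s \in I_k}\pstar_s(1-\pstar_s) \le 2^{k-1}+\tfrac14 \le 2^k$, using the bound on the realized variance inside an epoch recalled earlier. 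Freedman's inequality then applies conditionally with variance parameter $2^k$ and increment bound $1$, giving $\pr{}{\abs{M_T(w_0, \by, k)} \ge \lambda \mid \tau_{k-1} < \infty} \le 2\exp\!\big(-\lambda^2/(2(2^k + \lambda/3))\big)$, equivalently (by the standard inversion) $\pr{}{\abs{M_T(w_0, \by, k)} \ge \sqrt{2^{k+1}t} + \tfrac{2t}{3} \mid \tau_{k-1} < \infty} \le 2e^{-t}$ for all $t \ge 0$.

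Taking a union bound over the $\le 2/\epsilon^*$ thresholds and folding the two‑sided factor into the prefactor, set $L \asseq \ln(4/\epsilon^*)$ and $\lambda(s) \asseq \sqrt{2^{k+1}(s+L)} + \tfrac23(s+L)$, so that $\pr{}{\max_{w_0 \in V_{\epsilon^*}}\abs{M_T(w_0, \by, k)} \ge \lambda(s) \mid \tau_{k-1} < \infty} \le e^{-s}$ for every $s \ge 0$. Since $\lambda$ is an increasing bijection of $[0,\infty)$ onto $[\lambda(0),\infty)$, integrating the tail gives
\[
\EEc{\max_{w_0 \in V_{\epsilon^*}}\abs{M_T(w_0, \by, k)}}{\tau_{k-1} < \infty} \;\le\; \lambda(0) + \int_0^\infty e^{-s}\lambda'(s)\,\rmd s.
\]
Now $\lambda(0) = \sqrt{2^{k+1}L} + \tfrac23 L$ and $\lambda'(s) = \sqrt{2^{k-1}/(s+L)} + \tfrac23$, so the integral is at most $\sqrt{2^{k-1}}\int_0^\infty \tfrac{e^{-s}}{\sqrt{s}}\,\rmd s + \tfrac23 = \sqrt{2^{k-1}\pi} + \tfrac23$. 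Adding the two contributions and crudely bounding $\tfrac23 L + \tfrac23 \le 2L + 2$ yields the claimed bound; since everything held for each fixed $\by$, averaging over $\by \sim \Unif(\{0,1\}^T)$ changes nothing.

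The main obstacle is precisely the quadratic-variation step: the quantity controlled by the epoch construction is the \emph{realized} variance $\sum_{s\in I_k}\pstar_s(1-\pstar_s)$, not the conditional variance $\sum_{s\in I_k}\EEs{\pstar_s\sim\cP(x_{1:(s-1)})}{\pstar_s(1-\pstar_s)}$, and reconciling this with Freedman's inequality is what forces the passage to the refined filtration in which $\pstar_s$ is observed before $x_s$. Everything after that is a routine combination of Freedman's inequality, a union bound over the net, and tail integration, with the loose additive term $2L+2$ leaving ample slack for the rounding in $\abs{V_{\epsilon^*}} \le 2/\epsilon^*$ and the constant lost in the inversion step.
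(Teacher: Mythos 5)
Your proof is correct and follows essentially the same route as the paper: Freedman's inequality for each fixed $w_0$ with the epoch's realized-variance bound $2^k$, a union bound over the $\le 2/\epsilon^*$ grid points, and integration of the resulting tail (your inversion-and-substitution integral is just the paper's layer-cake over $p\in(0,1)$ in different variables, both producing the $\sqrt{2^{k-1}\pi}$ term). Your explicit passage to the refined filtration revealing $\pstar_s$ before $x_s$ is a careful formalization of what the paper does implicitly when it invokes the realized variance, but it is the same argument.
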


\begin{restatable}{lemma}{segmentbound}
\label{lemma:segment_bound}
    For any epoch $k$ and level $i \in [m]$,
    \begin{equation*}
\begin{aligned}
& \EEc{\max_{w_i \in V_{2^{-i} \cdot \epsilon^*}} \abs{M_T((w_i, w_i + 2^{-i} \cdot \epsilon^*], \by, k)}}{\tau_{k-1} < \infty} \\
\leq \; &  \sqrt{(\epsilon^* \cdot 2^{k-i+1} / c^2) \ln(2^{i+2}/\epsilon^*)}    +  \sqrt{\pi \cdot \epsilon^* \cdot 2^{k-i-1} / c^2} + 2 \ln\left(2^{i+2}/\epsilon^*\right) + 2.
\end{aligned}
\end{equation*}
\end{restatable}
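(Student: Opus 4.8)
The plan is to prove \Cref{lemma:segment_bound} by establishing a Freedman-type (martingale Bernstein) tail bound for each individual step threshold $w_i \in V_{2^{-i}\epsilon^*}$, then union-bounding over the $\le 2^{i+1}/\epsilon^*$ grid points and integrating the resulting tail. All statements are under the conditioning on $\{\tau_{k-1} < \infty\}$, which is an event measurable with respect to $x_{1:\tau_{k-1}}$, so the increments of the martingales in question stay martingale increments after conditioning.

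Fix $w_i$ and write $S \coloneqq (w_i, w_i + 2^{-i}\epsilon^*]$, an interval of length $\ell \coloneqq 2^{-i}\epsilon^*$. The martingale $M_t(S, \by, k) = \sum_{s \le t} y_s \1{\pstar_s \in S}\1{s \in I_k}(x_s - \pstar_s)$ has increments bounded by $1$ in absolute value, and, conditioning on the past and on the realized $\pstar_s$, its $s$-th increment has conditional variance $y_s \1{\pstar_s \in S}\1{s \in I_k}\pstar_s(1 - \pstar_s) \le \tfrac14 \1{\pstar_s \in S}\1{s \in I_k}$. Hence its predictable quadratic variation is at most $\tfrac14 N$, where $N \coloneqq |\{s \in I_k : \pstar_s \in S\}|$ counts the \emph{active} timesteps. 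The crucial observation is that $N$ is small in expectation: $\1{s \in I_k}$ is measurable with respect to $x_{1:(s-1)}$, epoch $k$ lasts at most $\ceil{2^{k-1}/c}$ steps, and $c$-smoothness gives $\Pr[\pstar_s \in S \mid x_{1:(s-1)}] \le \ell/c$; therefore $N$ is stochastically dominated by $\Binomial(\ceil{2^{k-1}/c}, \ell/c)$, whose mean $\mu$ is $O(2^{k-i}\epsilon^*/c^2)$.

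I would then pick a threshold $V \asymp \mu + \ln(2^{i+2}/\epsilon^*)$ and, for each $w_i$, decompose
\[
\Pr\!\big[|M_T(S, \by, k)| \ge \lambda \mid \tau_{k-1} < \infty\big] \le \Pr\!\big[|M_T(S, \by, k)| \ge \lambda,\ N \le V \mid \tau_{k-1} < \infty\big] + \Pr[N > V \mid \tau_{k-1} < \infty].
\]
On the event $\{N \le V\}$ the quadratic variation is at most $V/4$, so Freedman's inequality bounds the first term by $2\exp(-\lambda^2/(V/2 + 2\lambda/3))$; a Chernoff bound controls the second by $e^{-\Omega(\mu + \ln(2^{i+2}/\epsilon^*))}$, which is small enough that after the union bound over the $\le 2^{i+1}/\epsilon^*$ thresholds and integrating even the crude bound $|M_T| \le T$, this residual contributes $O(1)$. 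Union-bounding over $w_i$ and integrating the tail in $\lambda$ — splitting at the sub-Gaussian/sub-exponential transition of Freedman's exponent — yields three contributions: a $\sqrt{(\epsilon^* 2^{k-i+1}/c^2)\ln(2^{i+2}/\epsilon^*)}$ term from below the transition, a Gaussian-tail term of order $\sqrt{\pi \epsilon^* 2^{k-i-1}/c^2}$ from above it, and an $O(\ln(2^{i+2}/\epsilon^*)) + O(1)$ term from the linear part of the exponent and the $\Pr[N > V]$ residual. Collecting these reproduces the stated bound.

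The step I expect to be the main obstacle — and the feature distinguishing this lemma from the prefix bound of \Cref{lemma:prefix_bound}, whose quadratic variation is \emph{deterministically} at most $2^{k-1} + \tfrac14$ — is the handling of the \emph{random} quadratic variation here, which is driven by how many of the $\pstar_s$ in epoch $k$ land in the very short interval $S$. One therefore has to apply Freedman's inequality on the event $\{N \le V\}$ and absorb its complement via a separate Chernoff bound, with $V$ tuned so that neither contribution dominates. What matters downstream is getting the prefactor right: it is the product of the two scarcity estimates, $\Pr[\pstar_s \in S] \le \ell/c$ (smoothness) and $|I_k| \le \ceil{2^{k-1}/c}$ (epoch length), that produces the $\epsilon^*/c^2$ factor rather than $1/c$; then taking $\epsilon^* \asymp c^2$ in the parent proof makes $\ln(2^{i+2}/\epsilon^*) = O(i + \log(1/c))$, so summing the per-level bounds $\sqrt{2^{k-i}(i + \log(1/c))}$ geometrically over $i$ telescopes to $O(\sqrt{2^k \log(1/c)})$ — the source of the improved $\sqrt{\log(1/c)}$ rate over the naive $\sqrt{\log(T/c)}$.
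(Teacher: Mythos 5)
Your high-level architecture (per-threshold Freedman tail, union bound over the $\le 2^{i+1}/\epsilon^*$ grid points, layer-cake integration) is the same as the paper's, but you handle the quadratic variation by a genuinely different — and more complicated — route. You condition on the realized $\pstar_s$, so the predictable quadratic variation becomes the random quantity $\tfrac14 N$ with $N$ the number of active steps, and you are forced into a two-stage argument: Freedman on $\{N\le V\}$ plus a Chernoff bound on $N$. The paper avoids this entirely by computing conditional second moments given only $x_{1:(s-1)}$, i.e., with $\pstar_s$ integrated out: smoothness gives $\Ex{}{\1{\pstar_s\in(w_i,w_i+2^{-i}\epsilon^*]}(x_s-\pstar_s)^2\mid x_{1:(s-1)}}\le 2^{-i}\epsilon^*/c$, and since $\1{s\in I_k}$ is $x_{1:(s-1)}$-measurable and $|I_k|\le\ceil{2^{k-1}/c}$ holds deterministically, the predictable quadratic variation is bounded by $\epsilon^*2^{k-i}/c^2$ \emph{deterministically}. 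A single application of Freedman, the union bound, and the same $\int_0^1\sqrt{\ln(1/p)}\,\rmd p$ integration as in \Cref{lemma:prefix_bound} then yield exactly the constants in the statement — no random $N$, no truncation event, no second concentration bound.

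There is also a concrete step in your sketch that fails as written. With $V\asymp\mu+\ln(2^{i+2}/\epsilon^*)$ and the crude overflow bound $|M_T|\le T$, the residual contribution is of order $T\cdot(2^{i+1}/\epsilon^*)\cdot e^{-\Omega(\mu+\ln(2^{i+2}/\epsilon^*))}$, which is \emph{not} $O(1)$: take $c$ constant (so $\epsilon^*=c^2$ is constant), $i=O(1)$ and $k$ small, so that $\mu=O(1)$ and $2^{i+2}/\epsilon^*$ is a fixed constant while $T$ is arbitrary — the residual is then $\Theta(T)$ times a fixed constant. To repair it you must either put a $\ln T$ term into $V$, which re-introduces an additive deviation of order $\sqrt{\ln T\cdot\ln(2^{i+2}/\epsilon^*)}$ that is absent from the lemma (harmless for the additive $\polylog$ term in \Cref{thm:strongupperbound}, but it no longer proves the lemma as stated and must be tracked through \Cref{lemma:staepcalibepoch}), or bound the overflow by $N$ itself (note $|M_T((w_i,w_i+2^{-i}\epsilon^*],\by,k)|\le N$) and use the sub-exponential tail of the dominating $\Binomial(\ceil{2^{k-1}/c},2^{-i}\epsilon^*/c)$ beyond $2\mu$. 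Either patch works only up to constant factors, so the claim that your computation ``reproduces the stated bound'' should be read as reproducing it up to constants (and possibly an extra additive term); the paper's deterministic-variance argument is both simpler and sharper. Your closing remarks about the $\epsilon^*/c^2$ scarcity product and the geometric summation over levels being the source of the $\sqrt{\log(1/c)}$ rate are accurate and match the paper's intent.
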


\begin{restatable}{lemma}{suffixbound}
\label{lemma:suffix_bound}
For any epoch $k$,
\[
    \Ex{}{\max_{w_m \in V_{2^{-m}\cdot\epsilon^*}}\sum_{t=1}^{T} \1{\pstar_t \in (w_m, w_m + 2^{-m} \cdot \epsilon^*]}}
\le 2T\cdot\frac{\eps^*}{2^m c} + 3(\ln(2^{m+1}/\eps^*) + 1).
\]
\end{restatable}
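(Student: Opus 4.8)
The plan is to bound the quantity in the lemma by a union bound over the grid $V_{2^{-m}\epsilon^*}$, after a sub‑Poisson tail estimate at each grid point. Write $\delta \coloneqq 2^{-m}\epsilon^*$ for the common interval length, and for each $w \in V_\delta$ put $Z_t(w) \coloneqq \1{\pstar_t \in (w, w+\delta]}$ and $S(w) \coloneqq \sum_{t=1}^{T} Z_t(w)$, so that the left-hand side of the lemma is exactly $\EE{\max_{w \in V_\delta} S(w)}$; recall $|V_\delta| = \lfloor 1/\delta\rfloor + 1 \le 2/\delta = 2^{m+1}/\epsilon^*$. The only structural input is $c$-smoothness at the level of the conditional law: since $\pstar_t \sim \cP(x_{1:(t-1)})$ has density at most $1/c$, for every $w \in V_\delta$ and every $t$ we have $\EEc{Z_t(w)}{\cF_{t-1}} = \Pr[\pstar_t \in (w, w+\delta]\mid\cF_{t-1}] \le \delta/c$, where $(\cF_t)_{t\in[T]}$ is the filtration introduced above. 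Set $\mu \coloneqq T\delta/c = T\epsilon^*/(2^m c)$, an upper bound on $\EE{S(w)}$.

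First I would establish, for each fixed $w$, the moment generating function bound $\EE{e^{\theta S(w)}} \le e^{\mu(e^\theta - 1)}$ for all $\theta > 0$. This follows by iterating the one-step estimate
\[
    \EEc{e^{\theta Z_t(w)}}{\cF_{t-1}} = 1 + (e^\theta - 1)\,\EEc{Z_t(w)}{\cF_{t-1}} \le 1 + (e^\theta - 1)(\delta/c) \le e^{(e^\theta-1)\delta/c}
\]
(using $1+x\le e^x$) through the filtration, pulling out one deterministic factor at a time. Only adaptedness and the uniform conditional-probability bound are needed here; the $Z_t(w)$ need not be independent.

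Then I would pass to the union bound at the level of MGFs. By Jensen's inequality, for every $\theta > 0$,
\[
    \theta\,\EE{\max_{w \in V_\delta} S(w)} \le \ln\EE{e^{\theta \max_w S(w)}} = \ln\EE{\max_w e^{\theta S(w)}} \le \ln\!\left(\sum_{w \in V_\delta}\EE{e^{\theta S(w)}}\right) \le \ln|V_\delta| + (e^\theta - 1)\mu.
\]
Taking $\theta = 1$ (without optimizing) and using $e - 1 \le 2$ together with $|V_\delta| \le 2^{m+1}/\epsilon^*$ gives
\[
    \EE{\max_{w \in V_\delta} S(w)} \le \ln(2^{m+1}/\epsilon^*) + 2T\cdot\frac{\epsilon^*}{2^m c},
\]
which is comfortably within (indeed stronger than) the claimed bound $2T\epsilon^*/(2^m c) + 3(\ln(2^{m+1}/\epsilon^*)+1)$. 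An alternative that matches the slack in the stated bound is to integrate the one-sided Chernoff tail $\Pr[S(w)\ge s] \le e^{-\mu}(e\mu/s)^s$ (for $s \ge \mu$) against the union bound, splitting $\int_0^\infty \min\{1,\,|V_\delta|\sup_w \Pr[S(w)\ge s]\}\,ds$ at $s\approx 2\mu + O(\ln|V_\delta|)$; the MGF route just avoids that case split.

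I do not expect a genuine obstacle here: this is a routine concentration-plus-union-bound estimate, and the constant-$3$ and additive-$1$ slack make the target forgiving. The one point needing care is that $c$-smoothness must be invoked for the conditional distribution $\cP(x_{1:(t-1)})$ rather than for the marginal of $\pstar_t$, so that $\EEc{Z_t(w)}{\cF_{t-1}} \le \delta/c$ holds pointwise in the past and the iterated MGF bound is legitimate. A minor bookkeeping remark: the statement makes no use of the epoch index $k$---the interval length and the grid depend only on $m$ and $\epsilon^*$---so the argument above is already uniform over all epochs.
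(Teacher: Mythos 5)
Your proof is correct, and it reaches a slightly stronger bound than the one stated. The structural input is the same as in the paper---the pointwise conditional bound $\Pr[\pstar_t \in (w, w+2^{-m}\eps^*] \mid \cF_{t-1}] \le 2^{-m}\eps^*/c$ and a union bound over the $\le 2^{m+1}/\eps^*$ grid points---but the execution of the concentration-plus-maximum step differs. The paper argues that each $S(w)$ is stochastically dominated by a $\Binomial(T, 2^{-m}\eps^*/c)$ variable, applies the multiplicative Chernoff tail $\Pr[S(w) \ge (1+\delta)\mu] \le e^{-\delta\mu/3}$, takes a union bound at the level of tails, and then integrates via the layer-cake formula with a split at $2\mu$; this is exactly where the constants $2\mu$ and $3(\ln(\cdot)+1)$ in the statement come from. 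You instead prove the sub-Poisson MGF bound $\EE{e^{\theta S(w)}} \le e^{\mu(e^\theta-1)}$ by iterating the one-step estimate through the filtration (which, as a side benefit, handles the adaptivity directly without invoking a domination/coupling step), and then use the log-sum-exp argument $\theta\,\EE{\max_w S(w)} \le \ln|V_\delta| + \mu(e^\theta-1)$ with $\theta = 1$, yielding $\EE{\max_w S(w)} \le \ln(2^{m+1}/\eps^*) + 2\mu$, which indeed implies the claimed inequality since $\ln(2^{m+1}/\eps^*) \ge 0$. Your route avoids the tail integration and case split entirely and gives marginally better constants; the paper's route is more standard tail-bound bookkeeping but needs the explicit integral identity $\int_0^\infty \min\{n e^{-a x},1\}\,\rmd x = (\ln n + 1)/a$. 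Your closing remark is also accurate: the quantity in the lemma does not involve the epoch index $k$, and correspondingly the paper's own proof (unlike those of the prefix and segment bounds) never conditions on $\tau_{k-1} < \infty$.
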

We now set the parameters $\eps^*$ and $m$ as
\[
    \epsilon^* \coloneqq c^2, \quad m \coloneqq \floor{\log_2 T},
\]
and substitute them into the above lemmas for the following simplified bound for each epoch $k$.
\begin{restatable}{lemma}{stepcalibepoch}
\label{lemma:staepcalibepoch}
The subsampled step calibration error in epoch $k$ is upper bounded by
\begin{equation*}
   \EEc{\sup_{\alpha \in [0, 1]} M_t(\alpha, \by, k)}{\tau_{k-1} < \infty} \leq  8\sqrt{2^k\ln(1/c)} + 17\sqrt{2^k} + 16\log_2(T)\log_2(T/c) + 15.
\end{equation*}
\end{restatable}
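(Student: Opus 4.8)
The plan is pure assembly: substitute the parameter choices $\eps^* = c^2$ and $m = \floor{\log_2 T}$ into \Cref{lemma:prefix_bound}, \Cref{lemma:segment_bound}, and \Cref{lemma:suffix_bound}, plug the results into the decomposition \eqref{eq:divide}, and sum. Concretely, I would take the conditional expectation $\EEc{\cdot}{\tau_{k-1} < \infty}$ of both sides of \eqref{eq:divide}; by linearity the right-hand side breaks into (i) the expected maximum of $\abs{M_T(w_0, \by, k)}$ over the coarse grid $V_{\eps^*}$, (ii) the sum over levels $i \in [m]$ of the expected maxima of the segment martingales $M_T((w_i, w_i + 2^{-i}\eps^*], \by, k)$, and (iii) the expected maximum number of $\pstar_t$'s landing in a single width-$(2^{-m}\eps^*)$ bin. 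These are exactly the three quantities bounded by the three lemmas, so $\EEc{\sup_{\alpha} \abs{M_T(\alpha, \by, k)}}{\tau_{k-1} < \infty}$ is at most the sum of the three stated upper bounds.

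The crucial simplification is that $\eps^* = c^2$ makes $\eps^*/c^2 = 1$, collapsing every $(\eps^*/c^2)$ factor in \Cref{lemma:segment_bound}: the level-$i$ term becomes $\sqrt{2^{k-i+1}\ln(2^{i+2}/c^2)} + \sqrt{\pi\, 2^{k-i-1}} + 2\ln(2^{i+2}/c^2) + 2$; in \Cref{lemma:prefix_bound}, $\ln(4/\eps^*) = \ln 4 + 2\ln(1/c)$; and in \Cref{lemma:suffix_bound}, $2T\eps^*/(2^m c) = 2Tc/2^m \le 4c \le 4$ since $2^m = 2^{\floor{\log_2 T}} \ge T/2$ and $c \le 1$. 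Summing over $i$, all the $\sqrt{2^{k-i}(\cdots)}$ contributions decay geometrically with ratio $2^{-1/2}$, so their series converge: explicitly $\sum_{i \ge 1}\sqrt{2^{k-i+1}} = (2+\sqrt 2)\sqrt{2^k}$, and splitting $\ln(2^{i+2}/c^2) = (i+2)\ln 2 + 2\ln(1/c)$ inside the root via $\sqrt{a+b} \le \sqrt a + \sqrt b$ separates a $\sqrt{2\ln(1/c)}\sum_i\sqrt{2^{k-i+1}} = O(\sqrt{2^k\ln(1/c)})$ piece from a $\sqrt{\ln 2}\sum_i\sqrt{2^{k-i+1}(i+2)} = O(\sqrt{2^k})$ piece (using $\sum_{i\ge1}\sqrt{i+2}\,2^{-i/2} = O(1)$). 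Combined with the $2\sqrt{2^k\ln(1/c)} + O(\sqrt{2^k})$ coming from \Cref{lemma:prefix_bound}, tracking the constants shows the $\sqrt{2^k}$-scale part is at most $8\sqrt{2^k\ln(1/c)} + 17\sqrt{2^k}$. The remaining purely additive terms---$2\ln(4/c^2) + 2$ from (i); $\sum_{i=1}^m [2\ln(2^{i+2}/c^2) + 2] = O(m^2 + m\ln(1/c))$ from (ii); and $O(\ln(2^{m+1}/c^2)) + 4$ from (iii)---are bounded using $m \le \log_2 T$, $\ln 2 \cdot \log_2 x = \ln x$, $\ln(1/c) \le \log_2(1/c)$, and $\log_2(T/c) = \log_2 T + \log_2(1/c)$, and collectively fit inside $16\log_2(T)\log_2(T/c) + 15$.

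The main (and essentially only) obstacle is the constant-chasing in the summation step: one must verify that the level sum $\sum_{i=1}^m \sqrt{2^{k-i+1}\ln(2^{i+2}/c^2)}$---a logarithm growing inside a geometrically shrinking term---still collapses to $O(\sqrt{2^k\ln(1/c)}) + O(\sqrt{2^k})$ with small enough constants to land under the stated $8$ and $17$, and that the additive logarithmic terms, which are $\Theta((\log_2 T)^2)$ because there are $m = \Theta(\log T)$ levels each contributing $\Theta(\log(T/c))$, are absorbed by $16\log_2(T)\log_2(T/c)$. There is no conceptual content beyond \Cref{lemma:prefix_bound}, \Cref{lemma:segment_bound}, and \Cref{lemma:suffix_bound}; this lemma is a bookkeeping step whose only subtlety is keeping the constants honest.
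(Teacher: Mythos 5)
Your proposal is correct and follows essentially the same route as the paper: substitute $\eps^* = c^2$ and $m = \floor{\log_2 T}$ into Lemmas~\ref{lemma:prefix_bound}, \ref{lemma:segment_bound}, and \ref{lemma:suffix_bound}, sum the geometric level series (splitting $\ln(2^{i+2}/c^2)$ via $\sqrt{a+b}\le\sqrt a+\sqrt b$), and absorb the $O(m^2 + m\ln(1/c))$ additive terms into $16\log_2(T)\log_2(T/c)+15$. The constant bookkeeping you flag as the only subtlety is carried out in the paper exactly as you outline, landing on the same $8$, $17$, $16$, $15$.
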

Summing Lemma~\ref{lemma:staepcalibepoch} over all epochs, we have
\begin{equation*}
    \begin{aligned}
        &~\EE{\sup_{\alpha \in [0, 1]} M_t(\alpha, \by)} \\
    \leq&~\sum_{k=1}^{\ceil{\log_2 T} + 2} \Pr[\tau_{k-1} < \infty] \EEc{\sup_{\alpha \in [0, 1]} M_t(\alpha, \by, k)}{\tau_{k-1} < \infty} \\
    \leq&~\sum_{k=1}^{\ceil{\log_2 T} + 2} \Pr[\tau_{k-1} < \infty] \Big[8\sqrt{2^{k} \ln(1/c)} + 17 \sqrt{2^k} + 16\log_2(T)\log_2(T/c) + 15 \Big]\\
    \leq&~\sum_{k=1}^{\ceil{\log_2 T} + 2} \Pr[\tau_{k-1} < \infty] \Big[ \sqrt{2^k} \left(8\sqrt{\ln(1 / c)} + 17\right) + 16\log_2(T)\log_2(T/c) + 15 \Big].
    \end{aligned}
\end{equation*}
    Applying \Cref{eq:gg2} gives the upper bound
\begin{equation*}
    \begin{aligned}
        &~\EE{\sup_{\alpha \in [0, 1]} M_t(\alpha, \by)} \\
    \le &~\left(8\sqrt{\ln(1/c)} + 17\right)\cdot\left[\sqrt{2} + \sum_{k=2}^{\ceil{\log_2 T} + 2}\Pr[\tau_{k-1} < \infty]\cdot\sqrt{2^k}\right] + \left(\ceil{\log_2 T} + 2\right)\cdot[16\log_2(T)\log_2(T/c) + 15]\\
    \le &~\left(547 \sqrt{\ln(1 / c)} + 1161\right) \EE{\sqrt{\Var_T}} + (\log_2 T + 3)[16\log_2(T)\log_2(T/c) + 15] + 8 \sqrt{2\ln(1/c)} + 17\sqrt{2}.
    \end{aligned}
\end{equation*}
\end{proof}

\paragraph{Remaining proofs.}
We now prove \Cref{lemma:prefix_bound}, \Cref{lemma:segment_bound}, \Cref{lemma:suffix_bound}, and \Cref{lemma:staepcalibepoch}.

\prefixbound*
\begin{proof}
We first note that the realized variance of $M_t(w_0, \by, k)$, for any particular choice of $w_0 \in V_{\eps^*}$, is bounded by the realized variance within the $k$-th epoch, which is in turn upper bounded by $2^k$. Thus, for any fixed choice of $w_0$ and any $p \in (0, 1)$, Freedman's inequality gives that, with probability at least $1 - p$,
\[
\abs{M_T(w_0, \by, k)} \leq \sqrt{2\cdot 2^{k}\cdot \ln(2/p)} + 2 \ln(2/p).
\]
Applying a union bound over the $|V_{\epsilon^*}| \le 2/\eps^*$ choices of $w_0 \in V_{\eps^*}$, we have with probability $1 - p$:
\begin{align*}
\max_{w_0 \in V_{\epsilon^*}} \abs{M_T(w_0, \by, k)}
\leq \sqrt{2^{k+1} \ln((4/\epsilon^*)/p)} + 2 \ln((4/\epsilon^*)/p).
\end{align*}
We can then use the standard inequality $\sqrt{a + b} \le \sqrt{a} + \sqrt{b}$ (for $a, b \ge 0$) to relax the above into
\[
    \max_{w_0 \in V_{\epsilon^*}} \abs{M_T(w_0, \by, k)}
\leq \sqrt{2^{k+1} \ln(4/\epsilon^*)} + \sqrt{2^{k+1} \ln(1/ p)} + 2 \ln(4/\epsilon^*) + 2 \ln(1/p).
\]
Taking the layer-cake representation, we have
\begin{equation*}
\begin{aligned}
        &~\EEc{\max_{w_0 \in V_{\epsilon^*}} \abs{M_t(w_0, \by, k)}}{\tau_{k-1} < \infty} \\
\leq    &~\int_0^1 \left[\sqrt{2^{k+1} \ln(4/\epsilon^*)} + \sqrt{2^{k+1} \ln(1/ p)} + 2 \ln(4/\epsilon^*) + 2\ln(1/p)\right] \; \mathrm{d}p \\
=       &~\sqrt{2^{k+1} \ln(4/\epsilon^*)} + 2 \ln(4/\epsilon^*) + \sqrt{2^{k+1}} \int_0^1  \sqrt{\ln (1/p)} \; \mathrm{d}p + 2 \int_0^1 \ln(1/p) \; \mathrm{d}p \\
=       &~\sqrt{2^{k+1} \ln(4/\epsilon^*)} + 2 \ln(4/\epsilon^*) + \sqrt{2^{k-1} \pi} + 2,
\end{aligned}
\end{equation*}
where the last equality uses the identities $\int_0^1 \sqrt{\ln(1/x)} \; \mathrm{d} x = \tfrac {\sqrt{\pi}}{2}$ and $\int_0^1 \ln(1/x) \; \mathrm{d} x = 1$.
\end{proof}

\segmentbound*
\begin{proof}
Let us fix a $w_i \in V_{2^{-i} \cdot \epsilon^*}$.
We will now study segments of length $2^{-i} \cdot \epsilon^*$. %
We first bound the realized variance of the martingale  $M_t((w_i, w_i + 2^{-i} \cdot \epsilon^*], \by, k)$.
\begin{fact}
Let $v_t \asseq M_t((w_i, w_i + 2^{-i} \cdot \epsilon^*], \by, k) - M_{t-1}((w_i, w_i + 2^{-i} \cdot \epsilon^*], \by, k)$ denote the $t$-th term of the martingale.
We have that
\begin{equation*}
\begin{aligned}
\sum_{s=1}^T \EEc{|v_s|^2}{x_{1:(s-1)}}
& \leq \epsilon^* 2^{k-i} / c^2.
\end{aligned}
\end{equation*}
\end{fact}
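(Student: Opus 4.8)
The plan is to first set $S \asseq (w_i, w_i + 2^{-i}\cdot\epsilon^*]$ and bound the $s$-th increment $v_s$ of the martingale $M_t(S, \by, k)$ \emph{pointwise} by an indicator, then take conditional expectations, and finally combine the smoothness of $\cP$ with the epoch-length bound recalled earlier. By definition $v_s = y_s\cdot\1{\pstar_s \in S}\cdot(x_s - \pstar_s)\cdot\1{s \in I_k}$. Since $y_s \in \{0, 1\}$, $|x_s - \pstar_s| \le 1$, and indicators are idempotent, this yields the deterministic inequality $|v_s|^2 \le \1{\pstar_s \in S}\cdot\1{s \in I_k}$; I deliberately discard the Bernoulli variance $\pstar_s(1 - \pstar_s)$ here, since keeping it buys nothing and the $\epsilon^*/c^2$ factor in the target bound must come entirely from the smoothness/epoch-length product rather than from the variance of $x_s$.

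Next I would condition on $x_{1:(s-1)}$. The event $\{s \in I_k\}$ is $x_{1:(s-1)}$-measurable, since membership in the $k$-th epoch is determined by the realized variances $\Var_t - \Var_{\tau_{k-1}}$, which are functions of $x_{1:(s-1)}$; hence it factors out of the conditional expectation:
\[
    \EEc{|v_s|^2}{x_{1:(s-1)}} \;\le\; \1{s \in I_k}\cdot\pr{\pstar_s \sim \cP(x_{1:(s-1)})}{\pstar_s \in S}.
\]
Because $\cP(x_{1:(s-1)})$ has density at most $1/c$ everywhere and $S$ is an interval of length $2^{-i}\cdot\epsilon^*$, the probability on the right is at most $2^{-i}\cdot\epsilon^*/c$.

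Summing over $s \in [T]$, only the $|I_k|$ indices lying in the $k$-th epoch contribute, so the quantity in question is at most $|I_k|\cdot 2^{-i}\cdot\epsilon^*/c$. To conclude I would invoke the epoch-length bound $|I_k| \le \ceil{2^{k-1}/c}$ stated earlier and simplify it via $2^{k-1}/c \ge 1$ (valid since $k \ge 1$ and $c \le 1$) to $\ceil{2^{k-1}/c} \le 2^k/c$; substituting then gives
\[
    \sum_{s=1}^{T}\EEc{|v_s|^2}{x_{1:(s-1)}} \;\le\; \frac{2^k}{c}\cdot\frac{2^{-i}\cdot\epsilon^*}{c} \;=\; \frac{\epsilon^*\cdot 2^{k-i}}{c^2},
\]
which is exactly the claim. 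I do not expect a genuine obstacle here, as the computation is routine; the only mild points of care are (i) resisting the temptation to retain the factor $\pstar_s(1 - \pstar_s)$, (ii) the rounding in $|I_k| \le \ceil{2^{k-1}/c}$, which costs only a factor of $2$ precisely because epochs with $k \ge 1$ satisfy $2^{k-1}/c \ge 1$, and (iii) noting that conditioning on $x_{1:(s-1)}$ silently averages over the independent $y_s$, which is harmless since $y_s^2 \le 1$.
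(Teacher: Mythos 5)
Your proof is correct and follows essentially the same route as the paper: drop $y_s$ and $(x_s-\pstar_s)^2$, use $c$-smoothness to bound $\pr{}{\pstar_s \in (w_i, w_i + 2^{-i}\epsilon^*] \mid x_{1:(s-1)}} \le 2^{-i}\epsilon^*/c$, and multiply by the epoch-length bound $\ceil{2^{k-1}/c} \le 2^k/c$. The only cosmetic difference is that you bound the sum of indicators $\sum_s \1{s \in I_k} = |I_k|$ pointwise, whereas the paper reorganizes the sum over the epoch's starting time and offsets; both are the same counting argument, and your pointwise form is in fact exactly what the subsequent application of Freedman's inequality requires.
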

\begin{proof}
Because the prior $\cP$ is smooth, the probability that the conditional probability lies in any short interval is small. More specifically,
\begin{equation*}
\pr{}{\pstar_s \in (w_i, w_i + 2^{-i} \cdot \epsilon^*] \mid x_{1:(s-1)}} \leq 2^{-i} \epsilon^* / c
\end{equation*}
holds for any $x_{1:(s-1)} \in \{0, 1\}^{s-1}$.
Also, let $t'$ denote the first timestep of the $k$-th epoch and observe that the event $\1{t' = t}$ is measurable with $x_{1:t-1}$.
Further recalling that the maximum length of each epoch is $\ceil{2^{k-1} / c}$, we then have
\begin{equation*}
\begin{aligned}
&~   \sum_{s=1}^T  \EEsc{x \sim \cP}{|v_s|^2}{x_{1:(s-1)}} \\
=   &~\sum_{s=1}^T \EEsc{(x, \pstar) \sim \cP}{y_s \cdot \1{\pstar_s \in (w_i, w_i + 2^{-i} \cdot \epsilon^*]} \cdot (x_s - \pstar_s)^2 \cdot \1{s \in I_k}}{x_{1:(s-1)}} \\
\leq   &~\sum_{s=1}^T \EEsc{(x, \pstar) \sim \cP}{  \1{\pstar_s \in (w_i, w_i + 2^{-i} \cdot \epsilon^*]} \cdot \1{s \in I_k}}{x_{1:(s-1)}} \\
\leq  &~ %
\sum_{s=1}^T\EEs{x_{1:(s-1)}}{
\1{t' = s} 
\sum_{\tau=0}^{\ceil{2^{k-1} / c} - 1} \EEsc{(x, \pstar) \sim \cP}{ \1{\pstar_{s+\tau} \in (w_i, w_i + 2^{-i} \cdot \epsilon^*]}}{x_{1:(s-1)}}} \\
\leq  &~ %
\sum_{s=1}^T\EEs{x_{1:(s-1)}}{\1{t' = s} 
\sum_{\tau=0}^{\ceil{2^{k-1} / c} - 1} 2^{-i} \epsilon^* / c} \\
\le &~\ceil{2^{k-1}/c}\cdot 2^{-i}\epsilon^*/c\\
\le &~2^k/c \cdot 2^{-i}\epsilon^*/c
=   \epsilon^* 2^{k-i} / c^2.
\end{aligned}
\end{equation*}
\end{proof}
Freedman's inequality gives for $p \in (0, 1)$, with probability at least $1 - p$,
\begin{equation*}
\abs{M_t((w_i, w_i + 2^{-i} \cdot \epsilon^*], \by, k)} \leq \sqrt{2 (\epsilon^* \cdot 2^{k-i} / c^2) \ln(2/p)} + 2 \ln(2/p).
\end{equation*}
Taking a union bound on $V_{2^{-i} \cdot \epsilon^*}$, with probability at least $1 - p$,
\begin{equation*}
\begin{aligned}
& \max_{w_i \in V_{2^{-i} \cdot \epsilon^*}} \abs{M_t((w_i, w_i + 2^{-i} \cdot \epsilon^*], \by, k)} \\
\leq \; & \sqrt{(\epsilon^* \cdot 2^{k-i+1} / c^2) \ln\left(4\cdot (2^i/\eps^*)/p\right)} + 2 \ln\left(4\cdot (2^i/\eps^*)/p\right).
\end{aligned}
\end{equation*}
We then take the layer cake representation as before
\begin{equation*}
\begin{aligned}
& \EEc{\max_{w_i \in V_{2^{-i} \cdot \epsilon^*}} \abs{M_t((w_i, w_i + 2^{-i} \cdot \epsilon^*], \by, k)}}{\tau_{k-1} < \infty} \\
\leq \; & \int_0^1 \sqrt{(\epsilon^* \cdot 2^{k-i+1} / c^2) \ln(2^{i+2}/\epsilon^*)} + \sqrt{(\epsilon^* \cdot 2^{k-i+1} / c^2) \ln(1/p)} + 2 \ln\left(4\cdot (2^i/\eps^*)/p\right) \; \mathrm{d} p \\
\leq \; & \sqrt{(\epsilon^* \cdot 2^{k-i+1} / c^2) \ln(2^{i+2}/\epsilon^*)}  + 2 \ln\left(2^{i+2}/\epsilon^*\right) + \int_0^1 \sqrt{(\epsilon^* \cdot 2^{k-i+1} / c^2) \ln (1/p)} + 2 \ln(1/p) \; \mathrm{d}p \\
= \; &  \sqrt{(\epsilon^* \cdot 2^{k-i+1} / c^2) \ln(2^{i+2}/\epsilon^*)}  + 2 \ln\left(2^{i+2}/\epsilon^*\right) +  \sqrt{\pi \cdot \epsilon^* \cdot 2^{k-i-1} / c^2} + 2.
\end{aligned}
\end{equation*}
\end{proof}

\suffixbound*
\begin{proof}
    Let $X \coloneqq \max_{w_m \in V_{2^{-m}\cdot\epsilon^*}}\sum_{t=1}^{T} \1{\pstar_t \in (w_m, w_m + 2^{-m} \cdot \epsilon^*]}$ denote the random variable of interest. Since $\cP$ is $c$-smoothed, for each fixed $w_m \in V_{2^{-m}\epsilon^*}$ and every $t \in [T]$, it holds that
    \[
        \pr{}{\pstar_t \in (w_m, w_m + 2^{-m}  \epsilon^*} \mid x_{1:(t-1)}] \leq 2^{-m} \epsilon^* / c.
    \]
    Therefore,
    \[
        \sum_{t=1}^{T}\1{\pstar_t \in (w_m, w_m + 2^{-m}\eps^*]}
    \]
    is stochastically dominated by a binomial random variable that follows $\Binomial(T, 2^{-m}\eps^*/c)$. By the multiplicative Chernoff bound, we have
    \[
        \pr{}{\sum_{t=1}^{T}\1{\pstar_t \in (w_m, w_m + 2^{-m}\eps^*]} \ge (1 + \delta)\mu} \le \exp\left(-\frac{\delta^2}{2 + \delta}\mu\right)
    \le \exp\left(-\frac{\delta\mu}{3}\right),
    \]
    where $\delta \ge 1$ and $\mu = T\cdot \frac{2^{-m}\eps^*}{c}$.

    By the union bound, for every $\delta \ge 1$, we have
    \begin{equation}\label{eq:suffix-tail-bound}
        \pr{}{X \ge (1 + \delta)\mu}
    \le |V_{2^{-m}\cdot\eps^*}|\cdot\exp\left(-\frac{\delta\mu}{3}\right)
    \le \lceil 2^m/\eps^*\rceil\cdot\exp\left(-\frac{\delta\mu}{3}\right).
    \end{equation}
    Therefore, we have
    \begin{align*}
        \Ex{}{X}
    &=  \int_{0}^{+\infty}\pr{}{X \ge \tau}~\rmd\tau\\
    &\le 2\mu + \int_{2\mu}^{+\infty}\pr{}{X \ge \tau}~\rmd\tau\\
    &\le 2\mu + \mu\int_{1}^{+\infty}\pr{}{X \ge (1 + \tau)\mu}~\rmd{\tau}.
    \end{align*}
    Plugging \Cref{eq:suffix-tail-bound} into the integral above gives
    \[
        \int_{1}^{+\infty}\pr{}{X \ge (1 + \tau)\mu}~\rmd\tau
    \le \int_{0}^{+\infty}\min\{\lceil2^m/\eps^*\rceil\cdot e^{-\mu\tau/3}, 1\}~\rmd\tau
    =   \frac{\ln\lceil2^m/\eps^*\rceil + 1}{\mu/3},
    \]
    where the second step applies the identity
    \[
        \int_{0}^{+\infty}\min\{ne^{-ax}, 1\}~\rmd x
    =   \int_{0}^{(\ln n)/a}1~\rmd x + \int_{(\ln n)/a}^{+\infty}ne^{-ax}~\rmd x
    =   \frac{\ln n}{a} + \frac{n}{a}\cdot e^{-(a\ln n) / a}
    =   \frac{\ln n + 1}{a}.
    \]
    Therefore, we conclude that
    \[
        \Ex{}{X}
    \le 2\mu + 3(\ln\lceil2^m/\eps^*\rceil + 1)
    =   2T\cdot\frac{\eps^*}{2^m c} + 3(\ln\lceil2^m/\eps^*\rceil + 1)
    \le 2T\cdot\frac{\eps^*}{2^m c} + 3(\ln(2^{m+1}/\eps^*) + 1).
    \]
\end{proof}

\stepcalibepoch*
\begin{proof}
By our choice of $\eps^* = c^2$ and $m = \floor{\log_2 T}$, we have $1/\eps^* \le 1/c^2$, $2^m \le T < 2^{m+1}$.
Filling these into Lemma~\ref{lemma:prefix_bound}, we have
\begin{equation}
\label{eq:tt1}
\begin{aligned}
    &~\EEc{\max_{w_0 \in V_{\epsilon^*}} \abs{M_t(w_0, \by, k)}}{\tau_{k-1} < \infty}\\
\le &~\sqrt{2^{k+1} \ln(4/c^2)} + \sqrt{2^{k-1} \pi} + 2 \ln(4/c^2)  + 2\\
\le &~2\sqrt{2^k\ln(1/c)} + \sqrt{2\ln 4}\cdot\sqrt{2^k} + \sqrt{\pi/2}\cdot\sqrt{2^k} + 4\ln(1/c) + (2\ln 4 + 2)\\
\le &~2\sqrt{2^k\ln(1/c)} + 3\sqrt{2^k} + 4\ln(1/c) + 5.
\end{aligned}
\end{equation}
The bound in \Cref{lemma:suffix_bound} reduces to
\begin{equation}
\label{eq:tt2}
\begin{aligned}
    \Ex{}{\max_{w_m \in V_{2^{-m}\cdot\epsilon^*}}\sum_{t=1}^{T} \1{\pstar_t \in (w_m, w_m + 2^{-m} \cdot \epsilon^*]}}
&\le 2T\cdot \frac{c^2}{2^m c} + 3 \left(\ln(2^{m+1}/c^2) + 1\right)\\
&\le 4c + 3 \ln(2T/c^2) + 3\\
&\le 10 + 6\ln(T/c).
\end{aligned}
\end{equation}
Similarly, plugging these constants into Lemma~\ref{lemma:segment_bound} and summing over $i \in [m]$ gives
    \begin{equation}
     \label{eq:tt3}
\begin{aligned}
    &~\sum_{i=1}^m \EEc{\max_{w_i \in V_{2^{-i} \cdot \epsilon^*}} \abs{M_t((w_i, w_i + 2^{-i} \cdot \epsilon^*], \by, k)}}{\tau_{k-1} < \infty}\\
\leq&~\sum_{i=1}^m  \left(\sqrt{2^{k-i+1} \ln(2^{i+2} / c^2)}  + \sqrt{2^{k-i-1} \pi}  + 2 \ln(2^{i+2} / c^2) + 2\right).
\end{aligned}
\end{equation}
To further simplify the right-hand side of \Cref{eq:tt3}, we note that
\begin{equation*}
\begin{aligned}
\sum_{i=1}^m  \sqrt{2^{k-i+1} \ln(2^{i+2} / c^2)}
&=  \sum_{i=1}^m  \sqrt{2^k\ln(4/c^2)\cdot2^{1-i} + 2^k\cdot2^{1-i}\ln2^i}\\
&\le \sqrt{2^k\ln(4/c^2)}\cdot\sum_{i=1}^{+\infty}\sqrt{2^{1-i}} + \sqrt{2^k}\cdot\sum_{i=1}^{+\infty}\sqrt{2^{1-i}\ln 2^i}\\
&\le (2 + \sqrt{2})\sqrt{2^k\ln(4/c^2)} + 5\sqrt{2^k},
\end{aligned}
\end{equation*}
where the last inequality uses the geometric series $\sum_{n=1}^\infty \sqrt{2^{1-n}} = 2 + \sqrt 2$ and $\sum_{n=1}^\infty \sqrt{2^{1-n} \ln 2^n} \approx 4.88 < 5$.
We can similarly bound
\[
    \sum_{i=1}^m \sqrt{2^{k-i-1} \pi}
 \leq (1 + 1/\sqrt 2)\sqrt{\pi 2^k}
\]
and
\[
    \sum_{i=1}^m2 \ln(2^{i+2} / c^2)
=   2m\ln(4/c^2) + m(m+1)\ln 2
\]
using the arithmetic series $\sum_{n=1}^m n = \tfrac{m (m+1)}{2}$.
Putting these together, we have the following simplified expression for \Cref{eq:tt3}:
\begin{equation}
\label{eq:tt4}
\begin{aligned}
    &~\sum_{i=1}^m \EEc{\max_{w_i \in V_{2^{-i} \cdot \epsilon^*}} \abs{M_t((w_i, w_i + 2^{-i} \cdot \epsilon^*], \by, k)}}{\tau_{k-1} < \infty} \\
\leq &~(2 + \sqrt 2) \sqrt{2^k \ln(4/c^2)} + (5 + (1+1/\sqrt{2})\sqrt{\pi}) \cdot \sqrt{2^k} + 2m \ln(4/c^2) + m (m+1) \ln 2 + 2m\\
\le &~4\sqrt{2^k \ln(4/c^2)} + 9\sqrt{2^k} + 2m \ln(4/c^2) + 4m^2.
\end{aligned}
\end{equation}

We can then sum \Cref{eq:tt1}, \Cref{eq:tt2} and \Cref{eq:tt4} to obtain the following simplification of \Cref{eq:divide}:
\begin{equation*}
    \begin{aligned}
        &~\EEc{\sup_{\alpha \in [0, 1]} \abs{M_t(\alpha, \by, k)}}{\tau_{k-1} < \infty} \\
    \leq&~\EEc{\max_{w_0 \in V_{\epsilon^*}} \abs{M_t(w_0, \by, k)}}{\tau_{k-1} < \infty} + \sum_{i=1}^m \EEc{\max_{w_i \in V_{2^{-i} \cdot \epsilon^*}} \abs{M_t((w_i, w_i + 2^{-i} \cdot \epsilon^*], \by, k)}}{\tau_{k-1} < \infty}  \\
        &+ \Ex{}{\max_{w_m \in V_{2^{-m}\cdot\epsilon^*}}\sum_{t=1}^{T} \1{\pstar_t \in (w_m, w_m + 2^{-m} \cdot \epsilon^*]}} \\
    \le &~\left(2\sqrt{2^k\ln(1/c)} + 3\sqrt{2^k} + 4\ln(1/c) + 5\right) + \left(4\sqrt{2^k \ln(4/c^2)} + 9\sqrt{2^k} + 2m \ln(4/c^2) + 4m^2\right)\\
    &+ (10 + 6\ln(T/c))\\
    \le &~8\sqrt{2^k\ln(1/c)} + 17\sqrt{2^k} + 8m^2 + 4m\ln(1/c) + 6\ln(T/c) + 4\ln(1/c) + 15.
    \end{aligned}
\end{equation*}
The last step above applies
\[
    \sqrt{2^k \ln(4/c^2)}
=   \sqrt{2^k\ln 4 + 2\cdot2^k\ln(1/c)}
\le \sqrt{2}\cdot\sqrt{2^k\ln(1/c)} + \sqrt{\ln 4}\cdot\sqrt{2^k}
\]
and
\[
    2m\ln(4/c^2)
=   4m\ln2 + 4m\ln(1/c)
\le 4m^2 + 4m\ln(1/c).
\]
Finally, the lemma follows from
\[
    8m^2 + 6\ln(T/c) \le 8(\log_2 T)^2 + 6\log_2(T)\log_2(1/c)
    \le 8\log_2(T)\log_2(T/c)
\]
and
\[
    4m\ln(1/c) + 4\ln(1/c)
\le 8m\ln(1/c) \le 8\log_2(T)\log_2(T/c).
\]
\end{proof}

\subsection{$(O(1), 0)$-Truthfulness on Product Distributions}
For product distributions, the subsampled step calibration error, $\stepCEsub$, enjoys a stronger $(O(1), 0)$-truthfulness guarantee, even in the non-smoothed setting. Recall from \Cref{prop:simplesubsamptruth} that the subsampled U-Calibration error, in contrast, can have an $e^{-\Omega(T)}$-$O(\sqrt{T})$ truthfulness gap on product distributions.

\begin{proposition}\label{prop:stepCEsub-truthful-product}
    On every product distribution $\cD$ over $\{0, 1\}^T$,  it holds that
    \[
        \err_{\stepCEsub}(\cD, \Atruthful(\cD))
    \le O(\OPT_{\stepCEsub}(\cD)),
    \]
    where the $O(\cdot)$ notation hides a universal constant that does not depend on $\cD$.
\end{proposition}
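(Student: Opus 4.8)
The plan is to pin both $\err_{\stepCEsub}(\cD, \Atruthful(\cD))$ and $\OPT_{\stepCEsub}(\cD)$ to within constant factors of $\gamma(V)$, where $\cD = \prod_{t=1}^{T}\Bern(\pstar_t)$, $V \coloneqq \sum_{t=1}^{T}\pstar_t(1-\pstar_t)$, and $\gamma(x) = x\cdot\1{x \le 1} + \sqrt{x}\cdot\1{x > 1}$ as in \Cref{lemma:lower}. For the lower bound, observe that for a product distribution the realized variance $\Var_T$ equals the deterministic constant $V$, so \Cref{lemma:lower} immediately yields $\OPT_{\stepCEsub}(\cD) \ge \Omega(\Ex{\cD}{\gamma(\Var_T)}) = \Omega(\gamma(V))$. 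It then remains to establish the matching upper bound $\err_{\stepCEsub}(\cD, \Atruthful(\cD)) = O(\gamma(V))$.

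For the upper bound, note that the truthful forecaster predicts the fixed vector $\pstar$, so the level sets $\{t : \pstar_t \le \alpha\}$ form a chain under inclusion; relabeling so that $\pstar_1 \le \cdots \le \pstar_T$, every set $\{t \in S : \pstar_t \le \alpha\}$ is an initial segment of $S$. Hence $\stepCEsub(x, \pstar) \le \Ex{S \sim \Unif(2^{[T]})}{\max_{0 \le k \le T}\abs{M_k^{(S)}}}$, where $M_k^{(S)} \coloneqq \sum_{i \le k,\ i \in S}(x_i - \pstar_i)$, and conditioned on $S$, $(M_k^{(S)})_k$ is a martingale with $\Ex{}{(M_T^{(S)})^2} = \sum_{i \in S}\pstar_i(1-\pstar_i) \le V$. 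I will bound $\Ex{}{\max_k\abs{M_k^{(S)}}}$ in two ways. Crudely, $\max_k \abs{M_k^{(S)}} \le \sum_{i \in S}\abs{x_i - \pstar_i}$, whose expectation over $(x,S)$ is $\sum_i \tfrac12\Ex{}{\abs{x_i - \pstar_i}} = \sum_i \pstar_i(1-\pstar_i) = V$, using $\Ex{}{\abs{x_i - \pstar_i}} = 2\pstar_i(1-\pstar_i)$. Alternatively, Kolmogorov's maximal inequality gives $\pr{}{\max_k\abs{M_k^{(S)}} \ge \tau \mid S} \le V/\tau^2$, and integrating the tail bound $\min\{1, V/\tau^2\}$ over $\tau \in [0,\infty)$ yields $\Ex{}{\max_k\abs{M_k^{(S)}} \mid S} \le 2\sqrt{V}$ for every $S$. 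Averaging over $S$ and combining, $\err_{\stepCEsub}(\cD, \Atruthful(\cD)) = \Ex{x \sim \cD}{\stepCEsub(x, \pstar)} \le \min\{V,\ 2\sqrt{V}\} \le 2\gamma(V)$.

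Chaining the two estimates gives $\err_{\stepCEsub}(\cD, \Atruthful(\cD)) \le 2\gamma(V) \le O(\OPT_{\stepCEsub}(\cD))$, which is the claim. I do not expect a genuine obstacle here; the only point requiring care is that \Cref{lemma:lower} is stated in terms of $\gamma(\Var_T)$ rather than $\sqrt{\Var_T}$, so in the low-variance regime $V \le 1$ — where the Kolmogorov bound $2\sqrt{V}$ is too lossy — one must instead fall back on the trivial $\ell_1$ estimate $\max_k\abs{M_k^{(S)}} \le \sum_{i\in S}\abs{x_i-\pstar_i}$, which is tight precisely because each $\abs{x_i - \pstar_i}$ has mean $2\pstar_i(1-\pstar_i)$.
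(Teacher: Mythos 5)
Your proposal is correct and follows essentially the same route as the paper: the lower bound comes from \Cref{lemma:lower} (with $\Var_T$ deterministic for a product distribution), and the upper bound sorts $\pstar$ so that the supremum reduces to a prefix maximum, handled by the trivial $\ell_1$ bound in the low-variance regime and Kolmogorov's maximal inequality plus tail integration in the high-variance regime. The only cosmetic differences are that you condition on the subsample $S$ rather than folding $y$ into the martingale increments, and take a $\min$ of the two bounds instead of an explicit case split on $\Var_T \le 1$.
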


\begin{proof}
    Suppose that $\cD = \prod_{t=1}^{T}\Bern(\pstar_t)$ for some $\pstar \in [0, 1]^T$. By \Cref{lemma:lower}, we have
    \[
        \OPT_{\stepCEsub}(\cD) = \Omega(\gamma(\Var_T)),
    \]
    where $\Var_T \coloneqq \sum_{t=1}^{T}\pstar_t(1-\pstar_t)$ and $\gamma(x) \coloneqq x \cdot \1{x \le 1} + \sqrt{x} \cdot \1{x > 1}$.

    It remains to show that the truthful forecaster $\Atruthful(\cD)$, which predicts $p_t = \pstar_t$ at every step $t$, satisfies
    \[
        \err_{\stepCEsub}(\cD, \Atruthful(\cD))
    =   \Ex{x \sim \cD}{\stepCEsub(x, \pstar)}
    =   O(\gamma(\Var_T)).
    \]
    We consider the following two cases, depending on whether $\Var_T$ is below or above $1$.

    \paragraph{Case 1: $\Var_T \le 1$.} We note that, for every $y \in \{0, 1\}^T$ and $\alpha \in [0, 1]$,
    \[
        \abs{\sum_{t=1}^{T}y_t\cdot(x_t - p_t)\cdot\1{p_t \in [0, \alpha]}}
    \le \sum_{t=1}^{T}|x_t - p_t|.
    \]
    It follows that $\stepCEsub(x, p) \le \sum_{t=1}^{T}|x_t - p_t|$, and
    \begin{align*}
        \Ex{x \sim \cD}{\stepCEsub(x, \pstar)}
    \le \sum_{t=1}^{T}\Ex{x_t \sim \Bern(\pstar_t)}{|x_t - \pstar_t|}
    =   \sum_{t=1}^{T}2\pstar_t(1-\pstar_t)
    =   2\gamma(\Var_T).
    \end{align*}

    \paragraph{Case 2: $\Var_T > 1$.} Without loss of generality, we assume that $\pstar_1 \le \pstar_2 \le \cdots \le \pstar_T$, as the behavior of the truthful forecaster and the resulting $\stepCEsub$ are invariant up to the reordering of timesteps. For fixed $x, y \in \{0, 1\}^T$, we have
    \[
        \sup_{\alpha \in [0, 1]}\abs{\sum_{t=1}^{T}y_t\cdot(x_t - \pstar_t)\cdot\1{\pstar_t \in [0, \alpha]}}
    \le \max_{t \in [T]}\abs{\sum_{i=1}^{t}y_i\cdot(x_i - \pstar_i)}.
    \]
    Taking an expectation over $x \sim \cD$ and $y \sim \Unif(\{0, 1\}^T)$ shows that
    \[
        \Ex{x \sim \cD}{\stepCEsub(x, \pstar_t)}
    \le \Ex{x \sim \cD, y \sim \Unif(\{0, 1\}^T)}{\max_{t \in [T]}\left|\sum_{i=1}^{t}y_i\cdot(x_i - \pstar_i)\right|}.
    \]
    Over the randomness in $(x, y)$, consider the random walk $(X_t)_{t=0}^{T}$ defined as $X_t \coloneqq \sum_{i=1}^{t}y_i(x_i - \pstar_i)$. Then, $(X_t)_{t=0}^{T}$ forms a martingale in which the increment at time $t$ has a variance of $\pstar_t(1 - \pstar_t) / 2$. The total variance is then $\Ex{}{X_T^2} = \sum_{t=1}^{T}\pstar_t(1 - \pstar_t) / 2 = \Var_T / 2$. Then, by Kolmogorov's inequality,
    \[
        \pr{}{\max_{t \in [T]}|X_t| \ge \tau} \le \frac{\Ex{}{X_T^2}}{\tau^2}
    =   \frac{\Var_T}{2\tau^2}
    \]
    holds for every $\tau > 0$. It follows that
    \begin{align*}
        \Ex{}{\max_{t \in [T]}|X_t|}
    &=   \int_{0}^{+\infty}\pr{}{\max_{t \in [T]}|X_t| \ge \tau}~\rmd\tau\\
    &\le \int_{0}^{+\infty}\min\left\{\frac{\Var_T}{2\tau^2}, 1\right\}~\rmd\tau
    =   O(\sqrt{\Var_T})
    =   O(\gamma(\Var_T)).
    \end{align*}
\end{proof}

\subsection{An $O(\sqrt T)$ Step Calibration Error Algorithm}
\begin{theorem}
\label{thm:alg}
    Algorithm~\ref{eq:alg} guarantees an expected step calibration error of $O(\sqrt{T \log T})$, even when the $T$ events are adversarially and adaptively chosen. Moreover, the forecasts made by the algorithm each randomize over at most two probabilities.
\end{theorem}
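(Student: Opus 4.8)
The plan is to follow the min--max / Blackwell approachability template of \cite{Hart22,haghtalab2024unifying} in three stages: (i) a non-constructive bound on the value of the forecasting game, (ii) conversion of this into an explicit and efficient algorithm via potential-based approachability, and (iii) the observation that the per-round subroutine is solved by a two-point distribution.

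First I would discretize. Fix $m \coloneqq \lceil\sqrt T\rceil$ and restrict the forecaster to predictions in the grid $G \coloneqq \{0, 1/m, \dots, 1\}$. Since every prediction is then a multiple of $1/m$, the supremum over $\alpha\in[0,1]$ in the definition of $\stepCE$ is attained on $G$, so $\stepCE(x,p) = \max_{\alpha \in G}\bigl|\sum_{t=1}^{T} (x_t - p_t)\1{p_t \le \alpha}\bigr|$ is a maximum of $|G| = O(\sqrt T)$ terms. Now both the adversary and the grid-restricted forecaster have only finitely many deterministic strategies (maps from histories to the next bit, resp.\ to the next grid prediction), so by von Neumann's minimax theorem it suffices to exhibit, for an \emph{arbitrary fixed} (possibly randomized) adversary, a forecaster with expected $\stepCE$ at most $O(\sqrt{T\log T})$. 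Against a fixed adversary the conditional probability $\pstar_t = \Pr[x_t = 1 \mid x_{1:(t-1)}, p_{1:(t-1)}]$ is computable, and the forecaster plays the two-point distribution over the grid points straddling $\pstar_t$ with mean exactly $\pstar_t$ (or simply rounds $\pstar_t$ to the nearest grid point). For each fixed $\alpha \in G$, the indicator $\1{p_t \le \alpha}$ is measurable with respect to the history before $x_t$ is revealed, so $\sum_{s\le t}(x_s - \pstar_s)\1{p_s \le \alpha}$ is a martingale with increments in $[-1,1]$; Azuma--Hoeffding together with a union bound over the $O(\sqrt T)$ thresholds shows that all of these are $O(\sqrt{T\log T})$ with probability $1 - 1/\mathrm{poly}(T)$, and the rounding contribution $\sum_{s}(\pstar_s - p_s)\1{p_s \le \alpha}$ is at most $T\cdot\tfrac{1}{m} = O(\sqrt T)$ deterministically. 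Taking expectations (the failure event contributes at most $T\cdot\mathrm{poly}(T)^{-1} = o(1)$) gives the claimed value bound.

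To obtain an explicit, efficient, and adversary-independent forecaster, I would recast this as a Blackwell approachability instance: the stage payoff for grid prediction $p$ and outcome $x$ is the vector $u(p,x) = \bigl((x-p)\1{p\le\alpha}\bigr)_{\alpha\in G} \in [-1,1]^{|G|}$, and the target is the $\ell_\infty$ ball of radius $R = O(\sqrt{T\log T})$ about the origin (i.e., every coordinate of the running sum stays small). The value bound above is precisely the statement that Blackwell's forceability condition holds, so the target is approachable; but the plain Euclidean-projection strategy would only yield the $\ell_2$ diameter bound $O(\sqrt{|G|\cdot T}) = O(T^{3/4})$. Instead I would run the potential-based dynamics of \cite{Hart22,haghtalab2024unifying}: maintain the exponential potential $\Phi_t = \sum_{\alpha\in G}\cosh\!\bigl(\lambda\sum_{s\le t}(x_s-p_s)\1{p_s\le\alpha}\bigr)$ with $\lambda = \Theta\bigl(\sqrt{\log|G|/T}\bigr) = \Theta\bigl(\sqrt{\log T/T}\bigr)$, and at each round pick the distribution over $G$ minimizing $\max_{x\in\{0,1\}}\mathbb{E}_{p}[\Phi_t - \Phi_{t-1}]$. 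A standard first-/second-order estimate (playing $\pstar_t$, or its two-point rounding, makes the first-order term $O(1/\sqrt T)$ per coordinate for either value of $x$, while the second-order term is $O(\lambda^2)$) shows $\Phi_t - \Phi_{t-1} = O\bigl((\log T/T)\,\Phi_{t-1}\bigr)$ along the algorithm's trajectory, so $\Phi_T \le |G|\cdot e^{O(\log T)} = \mathrm{poly}(T)$, whence $\max_{\alpha}\bigl|\sum_{s}(x_s - p_s)\1{p_s\le\alpha}\bigr| \le \lambda^{-1}\log\Phi_T = O(\sqrt{T\log T})$, and this holds against \emph{any} adaptive adversary. Finally, at each round the forecaster is solving a finite zero-sum game in which the adversary has only two pure actions ($x\in\{0,1\}$); since the row player in a matrix game with two columns always has an optimal mixed strategy of support at most two (Carath\'eodory / Shapley--Snow), the forecast randomizes over at most two grid probabilities, which is the last claim.

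\textbf{Main obstacle.} The martingale-and-union-bound value estimate is routine; the delicate point is the third paragraph, namely extracting an \emph{efficient} forecaster --- the minimax-optimal strategy of an exponentially large game is not computable in polynomial time --- while simultaneously retaining the $\sqrt{\log|G|}$ (rather than $\sqrt{|G|}$) dependence. Getting from the naive $O(T^{3/4})$ Euclidean approachability rate down to $O(\sqrt{T\log T})$ forces the Hedge-type exponential potential, and it is exactly in controlling its per-step multiplicative growth that one must simultaneously verify the $O(1/\sqrt T)$ drift bound and the two-column structure that yields the two-point forecasts.
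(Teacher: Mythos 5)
Your plan is essentially the paper's own route: the paper also casts the problem as min--max game dynamics over the finite family of signed threshold tests and proves the bound with an exponential-weights potential, packaged concretely as Hedge over the $2k$ experts $(\sigma,i)$ with losses $\sigma\cdot\1{p\le \tfrac{i-1}{k-1}}\cdot(x-p)$, plus a per-round ``forecast hedging'' step; your cosh-potential approachability argument is the same mechanism in potential form, and your non-constructive minimax warm-up matches the paper's technical overview (the paper dispenses with it in the actual proof, since the Hedge regret bound already holds against adaptive adversaries). Your grid of size $\Theta(\sqrt T)$ versus the paper's $k=T$ is immaterial: you pay $O(\sqrt T)$ rounding instead of $O(T/k)=O(1)$, and both land at $O(\sqrt{T\log T})$.

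One step in your drift analysis is wrong as stated, though, and it sits at the crux: playing $\pstar_t$ (or its two-point rounding) does \emph{not} make the first-order term small ``for either value of $x$'' --- for a fixed prediction near $1$, the outcome $x=0$ can push every active coordinate in the adverse direction, and in the explicit-algorithm setting there is no $\pstar_t$ to speak of. What is true, and what you need, is either (i) a per-round minimax swap: the increment is linear in the adversary's mixed strategy, so bound $\max_{\pstar\in[0,1]}\min_{\bp}$ using the mean-matching two-point rounding of $\pstar$, which controls the increment only \emph{in expectation over} $x\sim\Bern(\pstar)$; or (ii) the paper's explicit construction: with $C_p \coloneqq \EEs{(\sigma,i)\sim w_t}{\sigma\cdot\1{p\le\tfrac{i-1}{k-1}}}$ (your gradient functional), either $C_p$ has constant sign over the grid (play $0$ or $1$ deterministically) or it changes sign between two adjacent grid points, and randomizing to force $\EEs{p\sim\bp}{C_p}=0$ bounds the worst-case-over-$x$ expected increment by the grid spacing. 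Route (ii) is what the paper does, and it simultaneously certifies the two-point support directly, rather than via the Shapley--Snow/Carath\'eodory argument you invoke (which is also fine, but only after the value bound has been fixed as above). With that repair your argument goes through and is, in substance, the paper's proof.
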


By the inequality $\stepCEsub(x, p) \le \tfrac{1}{2}\stepCE(x, p) + O(\sqrt{T})$ (\Cref{lemma:stepCE-vs-stepCEsub}), the same algorithm guarantees an $O(\sqrt{T\log T})$ expected error with respect to the subsampled version $\stepCEsub$ as well.

\begin{algorithm}[H]
\caption{Prediction Algorithm with Hedge}
\label{eq:alg}
\begin{algorithmic}[1]
\Require Number of buckets $k \geq 2$, time horizon $T$
\State Initialize weight $w_1$ to be uniform over $\{\pm 1\} \times [k]$

\For{$t = 1$ to $T$}
    \If{$\EEs{(\sigma, i) \sim w_t}{\sigma \cdot \1{\tfrac{j-1}{k-1} \leq \tfrac{i-1}{k-1}}} \geq 0$ for all $j \in [k]$}
        \State Predict $p_t = 1$
    \ElsIf{$\EEs{(\sigma, i) \sim w_t}{\sigma \cdot \1{\tfrac{j-1}{k-1} \leq \tfrac{i-1}{k-1}}} \leq 0$ for all $j \in [k]$}
        \State Predict $p_t = 0$
    \Else
        \State Find $j \in [k-1]$ such that
        \[
            \EEs{(\sigma, i) \sim w_t}{\sigma \cdot \1{\tfrac{j-1}{k-1} \leq \tfrac{i-1}{k-1}}} \cdot \EEs{(\sigma, i) \sim w_t}{\sigma \cdot \1{\tfrac{j}{k-1} \leq \tfrac{i-1}{k-1}}} \le 0
        \]
        \State Find $q \in [0, 1]$ such that
        \[
            q \cdot \EEs{(\sigma, i) \sim w_t}{\sigma \cdot \1{\tfrac{j-1}{k-1} \leq \tfrac{i-1}{k-1}}} + (1-q) \cdot \EEs{(\sigma, i) \sim w_t}{\sigma \cdot \1{\tfrac{j}{k-1} \leq \tfrac{i-1}{k-1}}} = 0
        \]
        \State Predict $p_t = \tfrac{j-1}{k-1}$ with probability $q$ and $p_t = \tfrac{j}{k-1}$ with probability $1-q$
    \EndIf
    \State Observe $x_t \in \{0, 1\}$ from Nature
    \State Compute $w_{t+1}$ by applying the Hedge algorithm to the cost functions $(c_{x_\tau, p_\tau})_{\tau \in [t]}$, where
    \[
    c_{x_\tau, p_\tau}(\sigma, i) =
    1 - \tfrac{1}{2}\sigma \cdot \1{p_\tau \leq \tfrac{i-1}{k-1}} \cdot (x_\tau - p_\tau).
    \]
\EndFor
\end{algorithmic}
\end{algorithm}

\begin{proof}
Given sign $\sigma \in \{\pm 1\}$ and bucket $i \in [k]$, we define $\loss_{\sigma, i}: \{0, 1\} \times [0, 1] \to [-1, 1]$ to map from a realization $x \in \{0, 1\}$  and prediction $p \in [0, 1]$ to a loss \[\loss_{\sigma, i}(x, p) = \sigma \cdot \1{p \leq \tfrac{i-1}{k-1}} \cdot (x - p).\]
Then, on events $x_{1:T} \in \{0, 1\}^T$ and predictions $p_{1:T} \in \left\{\tfrac{i-1}{k-1}: i \in [k]\right\}^T$, the step calibration error can be written as
\[
    \stepCE(x_{1:T}, p_{1:T}) = \max_{\sigma^* \in \{\pm 1\}, i^* \in [k]} \sum_{t=1}^T \loss_{\sigma^*, i^*}(x_t, p_t).
\]
Furthermore, the cost function $c_{x, p}:  \{\pm 1\} \times [k] \mapsto [0, 1]$ used by the Hedge algorithm in \Cref{eq:alg} is simply
\[
    c_{x, p}(\sigma, i) = 1 - \tfrac 12 \cdot \loss_{\sigma, i}(x, p).
\]

\paragraph{Guarantees of Hedge.}
Consider the sequence of distributions $w_{1:T} \in \Delta(\{\pm 1\}\times [k])$ given by applying Hedge to the cost functions $(c_{x_t, p_t})_{t \in [T]}$:
\[w_t = \mathrm{Hedge}(c_{x_1, p_1}, c_{x_2, p_2}, \dots, c_{x_{t-1}, p_{t-1}}).\]
Hedge guarantees that, even though $(x_t, p_t)$ is allowed to depend on $w_{1:t}$ at each step $t \in [T]$, we still have
\[
    \min_{\sigma^* \in \{\pm 1\}, i^* \in [k]}\sum_{t=1}^T c_{x_t, p_t}(\sigma^*, i^*)
\ge \sum_{t=1}^T \EEs{(\sigma, i) \sim w_t}{c_{x_t, p_t}(\sigma, i)} - O\left(\sqrt {T \log k}\right).
\]
Equivalently,
\[
    \max_{\sigma^* \in \{\pm 1\}, i^* \in [k]}\sum_{t=1}^T \loss_{\sigma^*, i^*}(x_t, p_t)
\le \sum_{t=1}^T \EEs{(\sigma, i) \sim w_t}{\loss_{\sigma, i}(x_t, p_t)} + O\left(\sqrt {T \log k}\right).
\]
Therefore, to upper bound $\stepCE(x, p)$, it remains to control the loss $\EEs{(\sigma, i) \sim w_t}{\loss_{\sigma, i}(x_t, p_t)}$ at each step.

\paragraph{Control the per-step loss.} Fix a timestep $t \in [T]$. We condition on the realization of $x_{1:(t-1)}$ and $p_{1:(t-1)}$, and shorthand $w$ for $w_t \in \Delta(\{\pm 1\} \times [k])$. For a fixed prediction distribution $\bp \in \Delta(\{\tfrac{i-1}{k-1}: i \in [k]\})$, we can write
\begin{align*}
\max_{x \in \{0, 1\}} \EEs{p \sim \bp}{\EEs{(\sigma, i) \sim w}{\loss_{\sigma, i}(x, p)}}
&= \max_{x \in \{0, 1\}} \EEs{(\sigma, i) \sim w}{\sigma \cdot \EEs{p \sim \bp}{\1{p \leq \tfrac{i-1}{k-1}} \cdot (x - p)}} \\
&=  \max_{x \in \{0, 1\}}\EEs{p \sim \bp}{x \cdot C_{p}} - \EEs{p \sim \bp}{p \cdot C_{p}}\\
&= \max\left\{\EEs{p \sim \bp}{C_{ p}}, 0\right\} - \EEs{p \sim \bp}{p \cdot C_{ p}},
\end{align*}
where $C_{ p} \coloneqq \EEs{(\sigma, i) \sim w}{\sigma \cdot \1{p \leq \tfrac{i-1}{k-1}}}$.

Suppose that $C_{ p} \geq 0$ holds for all $p \in \left\{\tfrac{i-1}{k-1}: i \in [k]\right\}$. Then, we can let $\bp$ be the degenerate distribution at $1$ and get
\[
    \max_{x \in \{0, 1\}} \EEs{p \sim \bp}{\EEs{(\sigma, i) \sim w}{\loss_{\sigma, i}(x, p)}}
=   \max\{C_{1}, 0\} - 1 \cdot C_{1}
=   0.
\]
Similarly, if $C_{ p} \leq 0$ holds for every $p \in \left\{\tfrac{i-1}{k-1}: i \in [k]\right\}$, we can set $\bp$ to be deterministically $0$ and get 
\[
    \max_{x \in \{0, 1\}} \EEs{p \sim \bp}{\EEs{(\sigma, i) \sim w}{\loss_{\sigma, i}(x, p)}}
=   \max\left\{C_{0}, 0\right\} - 0 \cdot C_{ 0}
=   0.
\]

If neither holds, there must exist $p_1, p_2 \in \left\{\tfrac{i-1}{k-1}: i \in [k]\right\}$ such that $p_2 - p_1 = \tfrac{1}{k-1}$ and $C_{ p_1}  \cdot C_{ p_2} \le 0$. Then, there also exists $q \in [0, 1]$ such that
\[
    q \cdot C_{ p_1} + (1-q) \cdot C_{p_2} = 0. 
\]
We accordingly let $\bp$ take value $p_1$ with probability $q$ and take value $p_2$ with probability $1-q$. This choice ensures $\EEs{p \sim \bp}{C_{ p}} = q \cdot C_{ p_1} + (1-q) \cdot C_{p_2} = 0$, which further implies
\begin{align*}
    \max_{x \in \{0, 1\}} \EEs{p \sim \bp}{\EEs{(\sigma, i) \sim w}{\loss_{\sigma, i}(x, p)}}
&=  \max\left\{\EEs{p \sim \bp}{C_{ p}}, 0\right\} - \EEs{p \sim \bp}{p \cdot C_{ p}}\\
&=  0 - q\cdot (p_1 C_{ p_1}) - (1-q) \cdot (p_2 C_{p_2})\\
&=  - p_1 (q C_{ p_1} + (1-q) C_{ p_2}) - (p_2 - p_1)(1-q) C_{ p_2} \\
&=  -p_1 \cdot \EEs{p \sim \bp}{C_{p}} - (p_2 - p_1)(1-q)C_{p_2}\\
& \leq \frac{1}{k-1},
\end{align*}
with the last inequality following from $\EEs{p \sim \bp}{C_{p}} = 0$, $p_2 - p_1 = \tfrac 1 {k-1}$ and $|C_{ p_2}| \leq 1$.

Note that our construction of $\bp$ coincides with the random choice of $p_t$ at each timestep $t$ in \Cref{eq:alg}. Therefore, it holds for every $t \in [T]$ that
\[
    \EEs{(\sigma, i) \sim w_t}{\loss_{\sigma, i}(x_t, p_t)}
=   \EEs{w_t}{\EEsc{(\sigma, i) \sim w_t}{\loss_{\sigma, i}(x_t, p_t)}{w_t}}
\le \EEs{w_t}{\max_{x' \in \{0, 1\}}\EEs{\substack{(\sigma, i) \sim w_t\\p_t \sim \bp_t}}{\loss_{\sigma, i}(x', p_t)}}
\le \frac{1}{k-1}.
\]
We conclude that
\begin{align*}
    \EE{\stepCE(x, p)}
&=  \EE{\max_{\sigma^* \in \{\pm 1\}, i^* \in [k]}
\sum_{t=1}^T \loss_{\sigma^*, i^*}(x_t, p_t)} \\
&\le\sum_{t=1}^T \EEs{(\sigma, i) \sim w_t}{\loss_{\sigma, i}(x_t, p_t)} + O(\sqrt {T \log k}) \\
&\le\frac T {k-1} + O(\sqrt{T \log k}).
\end{align*}
Choosing $k = T$ gives the $O(\sqrt{T \log T})$ bound.
\end{proof}

\section*{Acknowledgments}
This work is supported by the Google Research Fellowship and the National Science Foundation Graduate Research Fellowship Program under Grant No. DGE 2146752.
The authors thank Kunhe Yang for valuable discussions on the project. 

\bibliographystyle{alpha}
\newcommand{\etalchar}[1]{$^{#1}$}

\newpage

\tableofcontents

\appendix

\section{Basic Facts}

In this section, we prove the equivalent formulation of the V-calibration error~\cite{KLST23} (\Cref{prop:altform}), establish the decision-theoretic interpretation of step calibration (\Cref{fact:equivfactor}), and show that $\stepCEsub$ is close to $\stepCE$ in general (\Cref{lemma:stepCE-vs-stepCEsub}).

\subsection{Proof of Proposition~\ref{prop:altform}}\label{sec:altform-proof}
\altform*

\begin{proof}
Recall that $S_\alpha(x, p) \coloneq (\alpha - x)\cdot \sgn(p - \alpha)$ and that the V-Calibration error is given by
\begin{align*}
    \VCal(x, p)
&=  \sup_{\alpha, \beta \in [0, 1]}\left[\sum_{t=1}^{T}S_\alpha(x_t, p_t) - \sum_{t=1}^{T}S_\alpha(x_t, \beta)\right]\\
&=  \sup_{\alpha \in [0, 1]}\left[\sum_{t=1}^{T}S_\alpha(x_t, p_t) - \sum_{t=1}^{T}S_\alpha(x_t, \mu)\right]\\
&=  \sup_{\alpha \in [0, 1]}\left[\sum_{t=1}^{T}(x_t - \alpha)\cdot\sgn(\alpha - p_t) - \sum_{t=1}^{T}(x_t - \alpha)\cdot\sgn(\alpha - \mu)\right],
\end{align*}
where the optimal choice of $\beta$ is always $\beta = \mu \coloneq \tfrac 1T \sum x_t$, since $S_{\alpha}$ is proper.

For $\alpha \in [0, 1]$, we introduce the shorthands
\[
    f(\alpha) \coloneqq \sum_{t=1}^{T}(x_t - \alpha)\cdot\sgn(\alpha - p_t) - \sum_{t=1}^{T}(x_t - \alpha)\cdot\sgn(\alpha - \mu)
\]
and
\[
    g(\alpha) \coloneqq \max\left\{X_{-}^{(\alpha)} - \alpha N_{-}^{(\alpha)}, \alpha N_{+}^{(\alpha)} - X_{+}^{(\alpha)}\right\}.
\]
Towards proving $\sup_{\alpha \in [0, 1]}f(\alpha) = 2\sup_{\alpha \in [0, 1]}g(\alpha)$, we will first show that, for $S \coloneqq \{p_1, p_2, \ldots, p_T, \mu\}$,
\[
    \sup_{\alpha \in [0, 1]}f(\alpha)
=   \sup_{\alpha \in [0, 1] \setminus S}f(\alpha) 
\quad\text{and}\quad
    \sup_{\alpha \in [0, 1]}g(\alpha)
=   \sup_{\alpha \in [0, 1] \setminus S}g(\alpha).
\]
In other words, ignoring the case that $\alpha$ coincides with a prediction $p_t$ or the overall average $\mu$ does not change either supremum. Then, we will show that, for every $\alpha \in [0, 1] \setminus S$, we indeed have $f(\alpha) = 2g(\alpha)$. The desired identity would then follow from
\[
    \sup_{\alpha \in [0, 1]}f(\alpha)
=   \sup_{\alpha \in [0, 1] \setminus S}f(\alpha)
=   2\sup_{\alpha \in [0, 1] \setminus S}g(\alpha)
=   2\sup_{\alpha \in [0, 1]}g(\alpha).
\]

\paragraph{Ignore atypical values for $f$.}
First, we focus on the value of $f(\alpha_0)$ for some $\alpha_0 \in S = \{p_1, p_2, \ldots, p_T, \mu\}$. We claim that
\begin{equation}\label{eq:f-average-of-two-limits}
    f(\alpha_0) = \frac{1}{2}\left[\lim_{\alpha \to \alpha_0^{-}}f(\alpha) + \lim_{\alpha \to \alpha_0^{+}}f(\alpha)\right],
\end{equation}
i.e., $f(\alpha_0)$ is equal to the average of the left-sided and right-sided limits of $f$ at $\alpha_0$. Assuming \Cref{eq:f-average-of-two-limits}, if $\alpha_0 \in (0, 1)$, when $\alpha$ approaches $\alpha_0$ from one of the two sides, the limit of $f(\alpha)$ is at least $f(\alpha_0)$. Then, excluding $\alpha_0$ does not decrease the supremum of $f(\alpha)$.

It remains to deal with the case that $\alpha_0 \in \{0, 1\}$. When $\alpha_0 = 0$, we still have \Cref{eq:f-average-of-two-limits}; the issue is that the left-sided limit (as $\alpha \to 0^{-}$) does not contribute to the supremum $\sup_{\alpha \in [0, 1]}f(\alpha)$. The previous argument would still go through if we could further show that
\begin{equation}\label{eq:f-alpha=0}
    \lim_{\alpha \to 0^{-}}f(\alpha) \le \lim_{\alpha \to 0^{+}}f(\alpha),
\end{equation}
since \eqref{eq:f-average-of-two-limits}~and~\eqref{eq:f-alpha=0} together imply that $\lim_{\alpha \to 0^{+}}f(\alpha) \ge f(0)$, so that we can safely ignore the $\alpha_0 = 0$ case.  A symmetric argument would deal with the $\alpha_0 = 1$ case via showing $\lim_{\alpha \to 1^{-}}f(\alpha) \ge \lim_{\alpha \to 1^{+}}f(\alpha)$.

To verify \Cref{eq:f-average-of-two-limits}, we consider the term $(x_t - \alpha)\cdot\sgn(\alpha - p_t)$ in $f(\alpha)$. When $\alpha_0 \ne p_t$, the term is continuous at $\alpha_0$, and contributes equally to both sides of \eqref{eq:f-average-of-two-limits}. When $\alpha_0 = p_t$, we have
\[
    \lim_{\alpha \to \alpha_0^-}(x_t - \alpha)\cdot\sgn(\alpha - p_t)
=   -(x_t - \alpha)
\quad\text{and}\quad
    \lim_{\alpha \to \alpha_0^+}(x_t - \alpha)\cdot\sgn(\alpha - p_t)
=   x_t - \alpha.
\]
The average of the two limits is exactly $0$, which is equal to $(x_t - \alpha_0) \cdot \sgn(\alpha_0 - p_t)$ since $\alpha_0 = p_t$. By the same token, each term $-(x_t - \alpha)\cdot\sgn(\alpha - \mu)$ also contributes equally to both sides of \eqref{eq:f-average-of-two-limits}.

To verify \Cref{eq:f-alpha=0}, we again note that each term $(x_t - \alpha)\cdot\sgn(\alpha - p_t)$ contributes to both sides equally if $p_t \ne 0$. When $p_t = 0$, we have
\[
    \lim_{\alpha \to 0^-}(x_t - \alpha)\cdot\sgn(\alpha - p_t)
=   -x_t \le 0
\quad\text{and}\quad
    \lim_{\alpha \to 0^+}(x_t - \alpha)\cdot\sgn(\alpha - p_t)
=   x_t \ge 0.
\]
For the term $-(x_t - \alpha)\cdot\sgn(\alpha - \mu)$, again, it suffices to verify the case that $\mu = 0$, where
\[
    \lim_{\alpha \to 0^-}[-(x_t - \alpha)\cdot\sgn(\alpha - \mu)]
=   x_t = 0
\quad\text{and}\quad
    \lim_{\alpha \to 0^+}[-(x_t - \alpha)\cdot\sgn(\alpha - \mu)]
=   x_t = 0.
\]
In the above, we use the fact that $\mu = 0$ implies that $x_t = 0$ for every $t \in [T]$. This proves \Cref{eq:f-alpha=0} and allows us to ignore the $\alpha_0 = 0$ case (and, by symmetry, the $\alpha_0 = 1$ case).

\paragraph{Ignore atypical values for $g$.} Again, we start with the easier case that $\alpha_0 \in S \cap (0, 1)$. In this case, there exists $\delta > 0$ such that: (1) $\alpha_0 - \delta, \alpha_0 + \delta \in [0, 1]$; (2) $[\alpha_0 - \delta, \alpha_0 + \delta] \setminus \{\alpha_0\}$ contains no elements in $S = \{p_1, p_2, \ldots, p_T, \mu\}$. Concretely, we can choose
\[
    \delta = \min\left(\left\{\frac{1}{2}|\alpha_0 - \beta|: \beta \in S \setminus \{\alpha_0\}\right\} \cup \{\alpha_0, 1 - \alpha_0\}\right) > 0.
\]
Then, recalling that $g(\alpha) = \max\left\{X_{-}^{(\alpha)} - \alpha N_{-}^{(\alpha)}, \alpha N_{+}^{(\alpha)} - X_{+}^{(\alpha)}\right\}$, we have
\[
    X_{-}^{(\alpha_0)} - \alpha_0 N_{-}^{(\alpha_0)}
=   X_{-}^{(\alpha_0 - \delta)} - (\alpha_0 - \delta) N_{-}^{(\alpha_0 - \delta)}
\le g(\alpha_0 - \delta).
\]
The first step above holds since our choice of $\delta$ guarantees $p_t \notin [\alpha_0 - \delta, \alpha_0)$ for each $t \in [T]$, which further implies $p_t < \alpha_0 \iff p_t < \alpha_0 - \delta$. By the same token, we have
\[
    \alpha_0 N_{+}^{(\alpha_0)} - X_{+}^{(\alpha_0)}
=   (\alpha_0 + \delta) N_{+}^{(\alpha_0 + \delta)} - X_{+}^{(\alpha_0 + \delta)}
\le g(\alpha_0 + \delta).
\]
Therefore, we have
\[
    g(\alpha_0) \le \max\{g(\alpha_0 - \delta), g(\alpha_0 + \delta)\}
\]
and $\alpha_0 - \delta, \alpha_0 + \delta \in [0, 1] \setminus S$. This shows that ignoring the case that $\alpha_0 \in S \cap (0, 1)$ does not affect the supremum of $g(\alpha)$.

It remains to deal with the case that $\alpha_0 \in \{0, 1\}$. When $\alpha_0 = 0$, we have
\[
    g(0)
=   \max\left\{X_{-}^{(0)} - 0\cdot N_{-}^{(0)}, 0 \cdot N_{+}^{(0)} - X_{+}^{(0)}\right\}
=   0,
\]
since $X_{-}^{(0)} = 0$ and $X_{+}^{(0)} \ge 0$. By symmetry, we also have $g(1) = 0$. Therefore, ignoring these two values of $g$ does not affect the supremum.

\paragraph{Analysis for typical $\alpha$.} It remains to show that $f(\alpha) = 2g(\alpha)$ holds for every $\alpha \in [0, 1] \setminus S$. Note that, in this case, the factors $\sgn(\alpha - p_t)$ and $\sgn(\alpha - \mu)$ do not take value $0$ in $f(\alpha)$. When $\alpha < \mu$, $f(\alpha)$ is given by
\begin{align*}
    f(\alpha)
&=  \sum_{t \in [T]: p_t < \alpha}(x_t - \alpha) - \sum_{t \in [T]: p_t > \alpha}(x_t - \alpha) + \sum_{t=1}^{T}(x_t - \alpha)\\
&=  \left(X_-^{(\alpha)} - \alpha N_-^{(\alpha)}\right) - \left(X_+^{(\alpha)} - \alpha N_+^{(\alpha)}\right) + \left(X_-^{(\alpha)} + X_+^{(\alpha)} - \alpha T\right)\\
&=  2\left(X_-^{(\alpha)} - \alpha N_-^{(\alpha)}\right).
\end{align*}
Furthermore, since $\alpha < \mu$, we have
\[
    \alpha N_{-}^{(\alpha)} + \alpha N_{+}^{(\alpha)}
=   \alpha T
<   \mu T
=   X_-^{(\alpha)} + X_+^{(\alpha)},
\]
which implies $X_-^{(\alpha)} - \alpha N_-^{(\alpha)} > \alpha N_{+}^{(\alpha)} - X_+^{(\alpha)}$. It follows that
\[
    f(\alpha)
=   2\left(X_-^{(\alpha)} - \alpha N_-^{(\alpha)}\right)
=   2\max\left\{X_-^{(\alpha)} - \alpha N_-^{(\alpha)}, \alpha N_{+}^{(\alpha)} - X_+^{(\alpha)}\right\}
=   2g(\alpha).
\]

Similarly, when $\alpha > \mu$, we have
\[
    f(\alpha) = \left(X_-^{(\alpha)} - \alpha N_-^{(\alpha)}\right) - \left(X_+^{(\alpha)} - \alpha N_+^{(\alpha)}\right) - \left(X_-^{(\alpha)} + X_+^{(\alpha)} - \alpha T\right)
=   2\left(\alpha N_+^{(\alpha)} - X_+^{(\alpha)}\right)
\]
and
\[
    \alpha N_+^{(\alpha)} - X_+^{(\alpha)}
>   X_-^{(\alpha)} - \alpha N_-^{(\alpha)},
\]
which also imply $f(\alpha) = 2g(\alpha)$. This shows that $f(\alpha) = 2g(\alpha)$ holds for every $\alpha \in [0, 1] \setminus S$ and completes the proof.
\end{proof}

\subsection{Proof of \Cref{fact:equivfactor}}\label{sec:equivfactor-proof}
\equivfactor*

\begin{proof}
    Fix $x \in \{0, 1\}^T$ and $p \in [0, 1]^T$. Before proving the inequalities, we first show that
    \begin{equation}\label{eq:total-bias-bound}
        \left|\sum_{t=1}^{T}(x_t - p_t)\right| \le \sup_{\alpha \in [0, 1]}\left|\sum_{t=1}^{T}(x_t - p_t)\cdot\sgn(\alpha - p_t)\right|,
    \end{equation}
    i.e., the total bias is upper bounded by the variant of V-Calibration error.

    \paragraph{Upper bound the total bias.} If $\sum_{t=1}^{T}(x_t - p_t) \ge 0$, we have
    \[
        \sum_{t=1}^{T}(x_t - p_t) \cdot\sgn(1 - p_t)
    =   \sum_{t=1}^{T}(x_t - p_t) - \sum_{t=1}^{T}(x_t - p_t) \cdot \1{p_t = 1}
    \ge \sum_{t=1}^{T}(x_t - p_t),
    \]
    where the last step holds since $(x_t - p_t)\cdot\1{p_t = 1}$ takes value $0$ when $p_t \ne 1$, and takes value $x_t - 1 \le 0$ when $p_t = 1$. It follows that
    \[
        \left|\sum_{t=1}^{T}(x_t - p_t)\right|
    =   \sum_{t=1}^{T}(x_t - p_t)
    \le \sum_{t=1}^{T}(x_t - p_t) \cdot\sgn(1 - p_t)
    \le \sup_{\alpha \in [0, 1]}\left|\sum_{t=1}^{T}(x_t - p_t)\cdot\sgn(\alpha - p_t)\right|.
    \]
    Similarly, when $\sum_{t=1}^{T}(x_t - p_t) < 0$, we have
    \[
        \sum_{t=1}^{T}(x_t - p_t) \cdot\sgn(0 - p_t)
    =   -\sum_{t=1}^{T}(x_t - p_t) + \sum_{t=1}^{T}(x_t - p_t) \cdot \1{p_t = 0}
    \ge -\sum_{t=1}^{T}(x_t - p_t),
    \]
    which implies
    \[
        \left|\sum_{t=1}^{T}(x_t - p_t)\right|
    =   -\sum_{t=1}^{T}(x_t - p_t)
    \le \sum_{t=1}^{T}(x_t - p_t) \cdot\sgn(0 - p_t)
    \le \sup_{\alpha \in [0, 1]}\left|\sum_{t=1}^{T}(x_t - p_t)\cdot\sgn(\alpha - p_t)\right|.
    \]
    
    \paragraph{The upper bound part.} Now we upper bound $\stepCE(x, p)$. It suffices to show that
    \[
        \left|\sum_{t=1}^{T}(x_t - p_t)\cdot\1{p_t \le \alpha}\right|
    \le \sup_{\beta \in [0, 1]}\left|\sum_{t=1}^{T}(x_t - p_t)\cdot\sgn(\beta - p_t)\right|
    \]
    holds for every $\alpha \in [0, 1]$.

    When $\alpha = 1$, the above is exactly given by \Cref{eq:total-bias-bound}. When $\alpha < 1$, we can always find $\alpha' > \alpha$ such that $\{p_1, p_2, \ldots, p_T\} \cap (\alpha, \alpha'] = \emptyset$. Then, for every $t \in [T]$, we have $\1{p_t \le \alpha} = \1{p_t \le \alpha'}$. Furthermore, since $\alpha' \notin \{p_1, p_2, \ldots, p_T\}$, we have $\1{p_t \le \alpha'} = \1{0 \le \alpha' - p_t} = \tfrac 12 (1 + \sgn(\alpha' - p_t))$. It follows that
    \begin{align*}
        \left|\sum_{t=1}^{T}(x_t - p_t)\cdot\1{p_t \le \alpha}\right|
    &=   \left|\sum_{t=1}^{T}(x_t - p_t)\cdot\frac{1}{2}\left(1 + \sgn(\alpha' - p_t)\right)\right|\\
    &\le \frac{1}{2}\left|\sum_{t=1}^{T}(x_t - p_t)\right| + \frac{1}{2}\left|\sum_{t=1}^{T}(x_t - p_t)\cdot\sgn(\alpha' - p_t)\right|\\
    &\le \sup_{\beta \in [0, 1]}\left|\sum_{t=1}^{T}(x_t - p_t)\cdot\sgn(\beta - p_t)\right|. \tag{\Cref{eq:total-bias-bound}}
    \end{align*}
    This proves the upper bound on $\stepCE(x, p)$.

    \paragraph{The lower bound part.} For the other direction, it suffices to prove that
    \[
        \left|\sum_{t=1}^{T}(x_t - p_t)\cdot\sgn(\alpha - p_t)\right|
    \le 3\stepCE(x, p)
    \]
    holds for every $\alpha \in [0, 1]$. Note that
    \[
        \sgn(\alpha - p_t)
    =   \1{p_t \le \alpha} - \1{p_t \ge \alpha}
    =   \1{p_t \le \alpha} - \1{p_t \le 1} + \1{p_t < \alpha}.
    \]
    If $\alpha = 0$, the last indicator $\1{p_t < \alpha}$ is always zero, and we have
    \[
        \left|\sum_{t=1}^{T}(x_t - p_t)\cdot\sgn(\alpha - p_t)\right|
    \le \left|\sum_{t=1}^{T}(x_t - p_t)\cdot\1{p_t \le \alpha}\right| + \left|\sum_{t=1}^{T}(x_t - p_t)\cdot\1{p_t \le 1}\right|
    \le 2\stepCE(x, p).
    \]
    When $\alpha > 0$, we can always find $\alpha' < \alpha$ such that $\{p_1, p_2, \ldots, p_T\} \cap (\alpha', \alpha) = \emptyset$. Then, $\1{p_t < \alpha} = \1{p_t \le \alpha'}$ holds for every $t \in [T]$, and it follows that
    \begin{align*}
        &~\left|\sum_{t=1}^{T}(x_t - p_t)\cdot\sgn(\alpha - p_t)\right|\\
    =   &~\left|\sum_{t=1}^{T}(x_t - p_t)\cdot\left(\1{p_t \le \alpha} - \1{p_t \le 1} + \1{p_t \le \alpha'}\right)\right|\\
    \le &~\left|\sum_{t=1}^{T}(x_t - p_t)\cdot\1{p_t \le \alpha}\right| + \left|\sum_{t=1}^{T}(x_t - p_t)\cdot\1{p_t \le 1}\right| + \left|\sum_{t=1}^{T}(x_t - p_t)\cdot\1{p_t \le \alpha'}\right|\\
    \le &~3\stepCE(x, p).
    \end{align*}
    This proves the lower bound on $\stepCE(x, p)$.
\end{proof}

\subsection{Step Calibration and Its Subsampled Variant}

\begin{lemma}\label{lemma:stepCE-vs-stepCEsub}
    For any $x \in \{0, 1\}^T$ and $p \in [0, 1]^T$,
    \[
        \frac{1}{2}\stepCE(x, p)
    \le \stepCEsub(x, p)
    \le \frac{1}{2}\stepCE(x, p) + O(\sqrt{T}).
    \]
\end{lemma}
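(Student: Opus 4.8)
The plan is to prove the two inequalities separately; both reduce to elementary facts about the Rademacher-weighted prefix sums of the sequence $\big((x_t-p_t)\1{p_t\le\alpha}\big)_{t\in[T]}$. Throughout I use the equivalent form $\stepCEsub(x,p)=\EEs{y\sim\Unif(\{0,1\}^T)}{\sup_{\alpha\in[0,1]}\big|\sum_{t=1}^T y_t(x_t-p_t)\1{p_t\le\alpha}\big|}$, and the fact that, since the vector $(\1{p_t\le\alpha})_{t\in[T]}$ takes only finitely many distinct values as $\alpha$ ranges over $[0,1]$, the supremum in $\stepCE(x,p)$ is attained at some $\alpha^\star$.

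For the lower bound $\stepCEsub(x,p)\ge\tfrac12\stepCE(x,p)$, I would set $v_t\coloneqq(x_t-p_t)\1{p_t\le\alpha^\star}$, so that $\stepCE(x,p)=\big|\sum_t v_t\big|$. Restricting the inner supremum to the single value $\alpha=\alpha^\star$ gives $\stepCEsub(x,p)\ge\EEs{y}{\big|\sum_t y_t v_t\big|}$, and then Jensen's inequality applied to the convex map $z\mapsto|z|$ yields $\EEs{y}{\big|\sum_t y_t v_t\big|}\ge\big|\EEs{y}{\sum_t y_t v_t}\big|=\tfrac12\big|\sum_t v_t\big|=\tfrac12\stepCE(x,p)$. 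This direction is immediate.

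For the upper bound, I would write each $y_t=\tfrac{1+\sigma_t}{2}$ with $\sigma_t\coloneqq 2y_t-1$ uniform on $\{\pm1\}$, so that for every $\alpha$, $\sum_t y_t(x_t-p_t)\1{p_t\le\alpha}=\tfrac12\sum_t(x_t-p_t)\1{p_t\le\alpha}+\tfrac12\sum_t\sigma_t(x_t-p_t)\1{p_t\le\alpha}$. By the triangle inequality and then taking $\sup_\alpha$, $\sup_\alpha\big|\sum_t y_t(x_t-p_t)\1{p_t\le\alpha}\big|\le\tfrac12\stepCE(x,p)+\tfrac12\sup_\alpha\big|\sum_t\sigma_t(x_t-p_t)\1{p_t\le\alpha}\big|$; taking expectation over $\sigma$ reduces the task to showing $\EEs{\sigma}{\sup_\alpha|\sum_t\sigma_t(x_t-p_t)\1{p_t\le\alpha}|}=O(\sqrt T)$. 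To bound this, reorder the timesteps so that $p_1\le p_2\le\cdots\le p_T$ (which changes neither quantity, since both depend on $(x,p)$ only through multiset data), and observe that for any $\alpha$ the index set $\{t:p_t\le\alpha\}$ is a prefix $\{1,\dots,k\}$ (coordinates with equal $p_t$ enter together), hence $\sup_\alpha|\sum_t\sigma_t(x_t-p_t)\1{p_t\le\alpha}|=\max_{0\le k\le T}|W_k|$ where $W_k\coloneqq\sum_{t=1}^k\sigma_t(x_t-p_t)$. Since $(W_k)_k$ is a martingale in $\sigma$ with $\EEs{\sigma}{W_T^2}=\sum_t(x_t-p_t)^2\le T$, Doob's $L^2$ maximal inequality (or, equivalently, Kolmogorov's inequality followed by integrating the tail $\Pr[\max_k|W_k|\ge\tau]\le T/\tau^2$) gives $\EEs{\sigma}{\max_k|W_k|}\le 2\sqrt T$, and therefore $\stepCEsub(x,p)\le\tfrac12\stepCE(x,p)+\sqrt T$.

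I do not expect a genuine obstacle: the only point requiring a moment of care is the reduction of $\sup_{\alpha\in[0,1]}$ to a maximum over prefix sums (in particular handling ties among the $p_t$), and after that the argument is a one-line Jensen step in one direction and a standard martingale maximal inequality in the other.
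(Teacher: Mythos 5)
Your proposal is correct and follows essentially the same route as the paper: the lower bound via fixing the maximizing $\alpha^\star$ and Jensen, and the upper bound via the decomposition $y_t = \tfrac12 + (y_t - \tfrac12)$, sorting the $p_t$ so the sup becomes a maximum over prefix sums, and controlling the centered part with a martingale maximal inequality (the paper uses Kolmogorov's inequality plus tail integration, which you note is equivalent to your Doob $L^2$ step).
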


\begin{proof}
    Fix $x \in \{0, 1\}^T$ and $p \in [0, 1]^T$. For the lower bound part, suppose that the supremum in $\stepCE(x, p)$ is achieved at $\alpha^\star$,\footnote{$\alpha^\star$ is well defined, as the term in the supremum takes at most $T + 1$ different values over all $\alpha \in [0, 1]$.} i.e.,
    \[
        \stepCE(x, p)
    =   \left|\sum_{t=1}^{T}(x_t - p_t)\cdot\1{p_t \in [0, \alpha^\star]}\right|.
    \]
    Then, we have
    \begin{align*}
            \stepCEsub(x, p)
    &\ge    \Ex{y \sim \Unif(\{0, 1\}^T)}{\left|\sum_{t=1}^{T}y_t\cdot(x_t - p_t)\cdot\1{p_t \in [0, \alpha^\star]}\right|}\\
    &\ge    \left|\Ex{y \sim \Unif(\{0, 1\}^T)}{\sum_{t=1}^{T}y_t\cdot(x_t - p_t)\cdot\1{p_t \in [0, \alpha^\star]}}\right| \tag{convexity of $x \mapsto |x|$}\\
    &=  \frac{1}{2}\left|\sum_{t=1}^{T}(x_t - p_t)\cdot\1{p_t \in [0, \alpha^\star]}\right|
    =   \frac{1}{2}\stepCE(x, p).
    \end{align*}

    For the upper bound, we assume without loss of generality that $p_1 \le p_2 \le \cdots \le p_T$, since both $\stepCE$ and $\stepCEsub$ are invariant to the reordering of the entries (in $x$ and $p$ simultaneously). Let $S \coloneqq \{t \in [T-1]: p_t < p_{t+1}\} \cup \{T\}$. For each $t \in \{0, 1, \ldots, T\}$, define $A_t \coloneqq \sum_{i=1}^{t}(x_i - p_i)$. Over the randomness in $y \sim \Unif(\{0, 1\}^T)$, define
    \[
        X_t \coloneqq \sum_{i=1}^{t}(y_i - 1/2)\cdot(x_i - p_i).
    \]
    Then, $\stepCE(x, p)$ and $\stepCEsub(x, p)$ can be simplified into
    \[
        \stepCE(x, p)
    =   \max_{t \in S}|A_t|,
    \]
    and
    \begin{align*}
        \stepCEsub(x, p)
    &=  \Ex{y}{\max_{t \in S}|A_t / 2 + X_t|}\\
    &\le\frac{1}{2}\max_{t \in S}|A_t| + \Ex{y}{\max_{t \in [T]}|X_t|}\\
    &=  \frac{1}{2}\stepCE(x, p) + \Ex{y}{\max_{t \in [T]}|X_t|}.
    \end{align*}
    
    Therefore, it remains to control the term $\Ex{y}{\max_{t \in [T]}|X_t|}$ by $O(\sqrt{T})$. Note that $(X_t)_{t=0}^{T}$ is a martingale in which each displacement $(X_t - X_{t-1}) \mid X_{t-1}$ has a variance of $(x_t - p_t)^2/4 \le 1/4$. Kolmogorov's inequality implies that, for every $\tau > 0$,
    \[
        \pr{}{\max_{t \in [T]}|X_t| \ge \tau} \le \frac{T/4}{\tau^2}.
    \]
    It follows that
    \begin{align*}
        \Ex{}{\max_{t \in [T]}|X_t|}
    &=   \int_{0}^{+\infty}\pr{}{\max_{t \in [T]}|X_t| \ge \tau}~\rmd\tau\\
    &\le \int_{0}^{+\infty}\min\left\{\frac{T}{4\tau^2}, 1\right\}~\rmd\tau
    =   O(\sqrt{T}).
    \end{align*}
\end{proof}

\end{document}